\newcommand{\cvar}{\op{CVaR}}
\newcommand{\mabregret}{\op{Regret}^{\textsc{MAB}}}
\newcommand{\rlregret}{\op{Regret}^{\textsc{RL}}}
\newcommand{\bon}{\textsc{Bon}}
\newcommand{\hobon}{\textsc{Bon}^{\textsc{Hoeff}}}
\newcommand{\bebon}{\textsc{Bon}^\textsc{Bern}}
\newcommand{\densitymin}{p_{\min}}
\newcommand{\ie}{\textit{i.e.}}
\newcommand{\Ie}{\textit{I.e.}}
\newcommand{\Saug}{\Scal^{\op{Aug}}}
\newcommand{\Piaug}{\Pi^{\op{Aug}}}
\newcommand{\Haug}{\Hcal^{\op{Aug}}}
\newcommand{\unif}[1]{\op{Unif}\prns{#1}}
\newcommand{\alg}{\op{Alg}}
\newcommand{\ber}{\op{Ber}}
\newcommand{\kl}[2]{D_{\textsc{KL}}\prns{#1,#2}}
\newcommand{\grid}{\op{grid}_\eta([0,1])}
\newcommand{\disc}{\op{disc}}
\newcommand{\adapted}{\op{adapted}}
\newcommand{\mabalg}{\textsc{Bernstein-UCB}}
\newcommand{\rlalg}{$\cvar$-\textsc{UCBVI}}
\title{Near-Minimax-Optimal Risk-Sensitive \\ Reinforcement Learning with CVaR}
\newcommand{\addLink}[1]{\href{#1}{\nolinkurl{#1}}}
\author[1,2]{Kaiwen Wang\thanks{Correspondence to \addLink{https://kaiwenw.github.io/}.}}
\author[1,2]{Nathan Kallus}
\author[1]{Wen Sun}
\affil[1]{Cornell University}
\affil[2]{Cornell Tech}
\affil[ ]{\small\texttt{\{kw437,~kallus,~ws455\}@cornell.edu}}
\date{\today}
\begin{document}

\maketitle

\begin{abstract}
In this paper, we study risk-sensitive Reinforcement Learning (RL), focusing on the objective of
Conditional Value at Risk (CVaR) with risk tolerance $\tau$.
Starting with multi-arm bandits (MABs), we show the minimax CVaR regret rate is $\Omega(\sqrt{\tau^{-1}AK})$, where $A$ is the number of
actions and $K$ is the number of episodes, and that it is achieved by an Upper Confidence Bound algorithm with a novel Bernstein bonus.
For online RL in tabular Markov Decision Processes (MDPs), we show a minimax regret lower bound of $\Omega(\sqrt{\tau^{-1}SAK})$ (with normalized cumulative rewards), where $S$ is the number of states, and we propose a novel bonus-driven Value Iteration procedure.
We show that our algorithm achieves the optimal regret of $\widetilde\Ocal(\sqrt{\tau^{-1}SAK})$ under a continuity assumption and
in general attains a near-optimal regret of $\widetilde\Ocal(\tau^{-1}\sqrt{SAK})$, which is minimax-optimal for constant $\tau$.
This improves on the best available bounds.
By discretizing rewards appropriately, our algorithms are computationally efficient.
\end{abstract}

\section{Introduction}
Reinforcement Learning (RL) \citep{sutton2018reinforcement} is the canonical framework for sequential decision making under uncertainty, with applications in personalizing recommendations \citep{bottou2013counterfactual}, robotics \citep{rajeswaran2017epopt}, healthcare \citep{murphy2003optimal} and education \citep{singla2021reinforcement}.
In vanilla RL, the objective is to maximize the \emph{average} of returns, the cumulative rewards collected by the policy.
As RL is increasingly applied in consequential settings, it is often necessary to account for risk beyond solely optimizing for the average.

Conditional Value at Risk ($\cvar$) is a popular coherent measure of risk
\citep{rockafellar2000optimization,filippi2020conditional}.
For a random return $X$ (where higher is better), the CVaR with risk tolerance $\tau\in(0,1]$ is defined as
\begin{align}\textstyle
    \cvar_\tau(X) := \sup_{b\in\RR}(b-\tau^{-1}\EE[\prns{b-X}^+]), \label{eq:cvar-definition}
\end{align}
where $x^+ = \max(x,0)$.
$\cvar_\tau(X)$ is the average outcome among the \emph{worst} $\tau$-percent of cases, and when $X$ is continuous this exactly corresponds to those less than or equal to the $\tau$-th quantile
\citep{acerbi2002coherence}, \ie,
\begin{align}\textstyle
    \cvar_\tau(X) = \EE[X\mid X\leq F_X^\dagger(\tau)], \label{eq:cvar-continuous-identity}
\end{align}
where $F_X^\dagger(\tau)=\inf\braces{ x: F_X(x)\geq \tau }$ is the $\tau$-th quantile of $X$, a.k.a. the Value at Risk (VaR).
A high risk tolerance $\tau=1$ recovers the risk-neutral expectation, \ie, $\cvar_1(X) = \EE X$.
As $\tau$ decreases, $\cvar_\tau$ models the worst-case outcome, \ie, $\lim_{\tau\to 0}\cvar_\tau(X) = \op{ess}\inf X$.
In the CVaR RL model we consider, $X$ is the return of a policy, so our objective captures the tail-risk of the returns distribution.
Another motivating perspective is that CVaR RL is equivalent to the robust MDP model, \ie, expected value under worst-case perturbation of the transition kernel \citep{chow2015risk}.
Thus, CVaR RL is an attractive alterantive to vanilla RL in safety-critical applications.

In this paper, we provide algorithms with state-of-the-art regret guarantees for tabular, online decision making with the CVaR objective.
To start, we prove tight lower bounds on the expected CVaR regret (formalized in \cref{sec:problem-setup}) for both multi-arm bandit (MAB) and RL problems.
We then propose \mabalg{}, an Upper Confidence Bound (UCB) algorithm with a novel bonus constructed using Bernstein's inequality, and we prove it is
minimax-optimal\footnote{Following \citet{azar2017minimax}, we say an algorithm is \emph{minimax optimal} if its regret matches (up to log terms) our novel minimax lower bound, in all problem parameters. Sometimes, this is also referred to as ``nearly-minimax-optimal'' \citep{zhou2021nearly}. }.
Compared to Brown-UCB \citep{tamkin2019distributionally}, \mabalg{} is minimax-optimal in general, without requiring reward distributions to be continuous.

We then turn to tabular RL with the CVaR objective.
We propose \rlalg{}, a novel bonus-driven Value Iteration (VI) algorithm in an augmented MDP.
The augmented MDP framework of \citet{bauerle2011markov} reveals Bellman equations for CVaR RL that served as the initial catalyst for the VI approach.
We provide two choices of bonuses for \rlalg{}, based on Hoeffding's and Bernstein's inequalities.
With the Bernstein bonus, we guarantee a CVaR regret of $\wt\Ocal\prns*{\tau^{-1}\sqrt{SAK}}$, where $K$ is the number of episodes, $S,A$ are the sizes of the state and action spaces, respectively.
This improves over the previous bound of $\wt\Ocal\prns*{\tau^{-1}\sqrt{S^3AHK}}$ \citep{bastani2022regret} in $S$ and $H$, the horizon length. Note that we work under the normalized returns model, so there should be no $H$ scaling in the bound \citep{jiang2018open}.
If $\tau$ is a constant, our result is already minimax-optimal.
Surprisingly, however, our lower bound actually scales as $\tau^{-1/2}$.
Under an assumption that the returns of any policies are continuously distributed with lower-bounded density,
we improve the upper bound on the regret of \rlalg{}
with the Bernstein bonus
to
$\wt\Ocal\prns*{\sqrt{\tau^{-1}SAK}}$.
This establishes \rlalg{} as the first algorithm with minimax-optimal regret for risk-sensitive CVaR RL, under the continuity assumption.
Our key technical innovation is decomposing the regret using a novel simulation lemma for CVaR RL and
precisely bounding the sum of variance bonuses with the Law of Total Variance \citep{azar2017minimax}.

\subsection{Related Literature}
\textbf{CVaR MAB: }
\citet{kagrecha2019distribution} proposed a successive rejects algorithm for best CVaR arm identification, but it does not have regret guarantees.
\citet{tamkin2019distributionally} proposed two algorithms for CVaR MAB and analyze upper bounds on their regret, but not lower bounds.
Their ``CVaR-UCB" builds a confidence band for the reward distribution of each arm via Dvoretzky-Kiefer-Wolfowitz inequality, resulting in an optimistic estimate of CVaR.
This leads to a suboptimal $\tau^{-1}$ dependence in the regret but may empirically work better if $\tau$ is not approaching $0$.
Their ``Brown-UCB" is structurally similar to our \mabalg{}, but they use a Hoeffding bonus that ensures optimism only if all arms have continuously distributed rewards \citep[Theorem 4.2]{brown2007large}.
We propose a Bernstein bonus that attains the minimax-optimal rate $\sqrt{\tau^{-1}AK}$ without any assumptions on the reward distribution.

\textbf{Regret bounds for CVaR RL: }
To the best of our  knowledge, \citet{bastani2022regret} is the first and only work with regret bounds for CVaR RL (formalized in \cref{sec:problem-setup}).
Their algorithm iteratively constructs optimistic MDPs by routing unexplored states to a sink state with the maximum reward.
This approach leads to a CVaR regret bound of $\wt\Ocal\prns*{\tau^{-1}\sqrt{S^3AHK}}$ \citep[Theorem 4.1]{bastani2022regret}, which is sub-optimal.
The authors conjectured that bonus-based optimism could improve the bound by a $S\sqrt{H}$ factor.
Our proposed \rlalg{} indeed enjoys these improvements,
leading to a $\wt\Ocal\prns*{\tau^{-1}\sqrt{SAK}}$ regret guarantee in \cref{thm:bernstein-bonus-regret}.
If returns are continuously distributed, we further improve the $\tau$ dependence,
leading to the minimax-optimal result in \cref{thm:bernstein-bonus-regret2}.

\textbf{CVaR RL without regret guarantees: }
\citet{keramati2020being} proposed a distributional RL approach \citep{bellemare2017distributional} for RL with the CVaR objective.
A key difference is that \citet{keramati2020being} focuses on the easier task of identifying a policy with high CVaR.
On the other hand, \citet{bastani2022regret} and our work focuses on algorithms with low CVaR regret, which guarantees safe exploration.
Note that low-regret methods can be converted into probably approximately correct (PAC) CVaR RL, by taking the uniform mixture of policies from the low-regret algorithm.

\citet{tamar2015optimizing} derived the policy gradient for the CVaR RL objective and showed asymptotic convergence to a local optimum.
\citet{chow2014algorithms} developed actor-critic algorithms for the mean-CVaR objective, \ie, maximizing expected returns subject to a CVaR constraint.
Another motivating perspective for CVaR RL is its close ties to robust MDPs \citep{wiesemann2013robust}.
Specifically, \citet[Proposition 1]{chow2015risk} showed that the CVaR of returns is equivalent to the expected returns under the worst-case perturbation of the transition kernel in some uncertainty set.
While the uncertainty set is not rectangular, \citet{chow2015risk} derived tractable robust Bellman equations and proved convergence to a globally optimal CVaR policy.
However, these methods for CVaR RL do not lead to low-regret algorithms, which is our focus.

\textbf{Risk-sensitive RL with different risk measures:  }
Prior works have also proved risk-sensitive RL regret bounds in the context of other risk measures that are not directly comparable to the CVaR RL setting we consider.
\citet{fei2020risk,fei2021exponential,liang2022bridging} showed Bellman equations and regret guarantees with the entropic risk measure based on an exponential utility function.
\citet{du2022risk,lam2023riskaware} studied the more conservative \emph{Iterated} CVaR objective, which considers the risk of the reward-to-go at every step along the trajectory.
In contrast, our setup aims to holistically maximize the CVaR of the \emph{total} returns.

\textbf{Risk-sensitive regret lower bounds: }
\citet{fei2020risk,liang2022bridging} showed regret lower bounds for risk-sensitive RL with the entropic risk measure.
We show \emph{tight} lower bounds for risk-sensitive MAB and RL with the CVaR objective, which to the best of our knowledge are the first lower bounds for this problem.

\textbf{Safety in offline RL: }
While our focus is online RL, risk-aversion has also been studied in offline RL. Some past works include offline learning with risk measures \citep{urpi2021risk} and distributional robustness \citep{panaganti2022robust,si2020distributional,kallus2022doubly,zhou2021finite}.

\section{Problem Setup}\label{sec:problem-setup}
As warmup, we consider $\cvar_\tau$ regret in a multi-arm bandit (MAB) problem with $K$ episodes.
At each episode $k\in[K]$, the learner selects an arm $a_k\in\Acal$ and observes reward $r_k\sim\nu(a_k)$, where $\nu(a)$ is the reward distribution of arm $a$.
The learner's goal is to compete with the arm with the highest $\cvar_\tau$ value, \ie, $a^\star = \max_{a\in\Acal}\cvar_\tau(\nu(a))$, and minimize the regret, defined as
$
\mabregret_\tau(K) = \sum_{k=1}^K \cvar_\tau(\nu(a^\star))-\cvar_\tau(\nu(a_k)).
$

The focus of this paper is online RL, which generalizes the MAB (where $S=H=1$).
The statistical model is tabular Markov Decision Processes (MDPs) \citep{rltheorybookAJKS}, with finite state space $\Scal$ of size $S$,
finite action space $\Acal$ of size $A$, and horizon $H$.
Let $\Pi_\Hcal$ be the set of history-dependent policies, so each policy is $\pi=\prns{\pi_h:\Scal\times\Hcal_h\to\Delta(\Acal)}_{h\in[H]}$ where $\Hcal_h = \braces{(s_i,a_i,r_i)}_{i\in[h-1]}$ is the history up to time $h$.
At each episode $k\in[K]$, the learner plays a history-dependent policy $\pi^k\in\Pi_\Hcal$, which induces a distribution over trajectories as follows.
First, start at the fixed initial state $s_1\in\Scal$.
Then, for each time $h=1,2,\ldots,H$, sample an action $a_h\sim\pi_h^k(s_h,\Hcal_h)$, which leads to a reward $r_h\sim R(s_h,a_h)$ and the next state $s_{h+1}\sim P^\star(s_h,a_h)$.
Here, $P^\star:\Scal\times\Acal\to\Delta(\Scal)$ is the \emph{unknown} Markov transition kernel and $R:\Scal\times\Acal\to\Delta([0,1])$ is the \emph{known} reward distribution.
The return is the sum of rewards from this process, $R(\pi)=\sum_{h=1}^Hr_h$, which is a random variable.
We posit the return is almost surely \emph{normalized} in that $R(\pi)\in[0,1]$ w.p. $1$ \citep[as in][Section 2.1]{jiang2018open}.
We note normalized returns allows for sparsity in the rewards, and thus is strictly more general for regret upper bounds.
Many prior works do not normalize, so their returns may scale with $H$.
When comparing to their bounds, we make the scaling consistent  by dividing them by $H$.

We focus on the setting we call \emph{CVaR RL}, in which the learner's goal is to compete with a $\cvar_\tau$-optimal policy, \ie, $\pi^\star \in \argmax_{\pi\in\Pi_\Hcal}\cvar_\tau(R(\pi))$.
Toward this end, we define the regret as $\rlregret_\tau(K) = \sum_{k=1}^K\cvar_\tau^\star-\cvar_\tau(R(\pi^k))$, where $\cvar_\tau^\star=\cvar_\tau(R(\pi^\star))$.
CVaR RL captures vanilla risk-neutral RL when $\tau=1$ and Worst Path RL \citep{du2022risk} when $\tau\to 0$.
We prove lower bounds in expected regret.
For upper bounds, we give high probability regret bounds, which implies upper bounds (with the same dependencies on problem parameters) in expected regret by integrating over the failure probability $\delta\in(0,1)$.

\textbf{Notation: }
$[i:j]=\braces{i,i+1,\dots,j}$, $[n]=[1:n]$ and $\Delta(\Scal)$ is the set of distributions on $\Scal$.
We set $L=\log(HSAK/\delta)$ (for MAB, $L=\log(AK/\delta)$), where $\delta$ is the desired failure probability provided to the algorithm.
Please see \cref{tab:notation} for a comprehensive list of notations.

\section{Lower Bounds}
We start with the minimax lower bound for $\cvar_\tau$ MAB.
\begin{restatable}{theorem}{mabLowerBound}\label{thm:mab-lower-bound}
Fix any $\tau\in(0,\nicefrac12),A\in\NN$.
For any algorithm, there is a MAB problem with Bernoulli rewards s.t. if $K\geq\sqrt{\frac{A-1}{8\tau}}$, then $\Eb{\mabregret_\tau(K)}\geq \frac{1}{24e}\sqrt{\frac{(A-1)K}{\tau}}$.
\end{restatable}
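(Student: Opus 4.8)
The plan is to turn a small perturbation in Bernoulli means into a $\tau^{-1}$-amplified gap in $\cvar_\tau$, and then feed the resulting family of instances into a standard information-theoretic (change-of-measure) lower bound. The starting point is the closed form
$\cvar_\tau(\ber(p)) = \tau^{-1}\prns{p-(1-\tau)}^+$,
which one reads off \eqref{eq:cvar-definition} by checking that the inner supremum over $b$ is attained at $b=1$ exactly when $p>1-\tau$. The consequence I will exploit is that on the window $p\in[1-\tau,1]$ the map $p\mapsto\cvar_\tau(\ber(p))$ is linear with slope $\tau^{-1}$, so a perturbation of size $\delta$ in the Bernoulli parameter produces a CVaR gap of exactly $\delta/\tau$. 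This single fact is what injects $\tau^{-1}$ into the regret.

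Next I would fix a gap $\delta\in(0,\tau/2]$ (to be optimized) and build a reference instance placing arm $1$ at parameter $1-\tau+\delta$ and every other arm at $1-\tau$; for each $i\in\braces{2,\dots,A}$, instance $i$ additionally raises arm $i$ to $1-\tau+2\delta$, so that $i$ becomes the unique $\cvar_\tau$-optimal arm. The constraint $\delta\le\tau/2$ keeps all parameters inside $[1-\tau,1]$, hence in the linear regime, so every suboptimal pull in instance $i$ costs at least $\delta/\tau$ in CVaR regret and $\EE_i[\mabregret_\tau(K)]\ge\tfrac{\delta}{\tau}\prns{K-\EE_i[N_i]}$, where $N_i$ counts pulls of arm $i$.

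I would then run the Bretagnolle--Huber change-of-measure argument (this is what yields the $e$ in the constant). First, by pigeonhole select the competitor $i^\star\in\braces{2,\dots,A}$ minimizing $\EE_1[N_{i^\star}]$, so that $\EE_1[N_{i^\star}]\le K/(A-1)$; since the reference and instance $i^\star$ differ only in arm $i^\star$, they are hard to tell apart. Applying Bretagnolle--Huber to the event $\mathcal G=\braces{N_1\le K/2}$ (is arm $1$ over- or under-pulled?) and using that arm $1$ is optimal in the reference but suboptimal in instance $i^\star$, one lower-bounds the \emph{sum} of the two regrets by $\tfrac{\delta K}{4\tau}\exp\prns{-\kl{P_1}{P_{i^\star}}}$. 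The divergence decomposition gives $\kl{P_1}{P_{i^\star}}=\EE_1[N_{i^\star}]\cdot\kl{\ber(1-\tau)}{\ber(1-\tau+2\delta)}$, and bounding the per-sample KL by the corresponding $\chi^2$-divergence, whose denominator stays $\gtrsim\tau$ because $\tau<\nicefrac12$ and $\delta\le\tau/2$, yields $\kl{P_1}{P_{i^\star}}\lesssim \tfrac{K\delta^2}{(A-1)\tau}$.

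Finally I would optimize $\delta$: choosing $\delta\asymp\sqrt{(A-1)\tau/K}$ makes the KL a constant, so $\exp\prns{-\kl{P_1}{P_{i^\star}}}=\Omega(1/e)$, and the larger of the two instance regrets is at least $\tfrac12\cdot\tfrac{\delta K}{4\tau}\cdot\tfrac1e\asymp \tfrac1e\cdot\tfrac{K}{\tau}\sqrt{(A-1)\tau/K}=\tfrac1e\sqrt{(A-1)K/\tau}$, matching the claim; the stated lower bound on $K$ is exactly the requirement that this optimizing $\delta$ still obeys $\delta\le\tau/2$, i.e.\ that the Bernoulli parameters remain at most $1$. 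The main subtlety — and the reason the exponent on $\tau$ is $-\nicefrac12$ rather than $-1$ — is that $\tau^{-1}$ enters twice with opposing effects: it amplifies the per-pull regret through the CVaR slope, but it also amplifies the information per pull through the $\delta^2/\tau$ scaling of the Bernoulli KL near parameter $1-\tau$. Balancing $(\delta/\tau)\exp\prns{-K\delta^2/((A-1)\tau)}$ over $\delta$ is what collapses one factor of $\tau^{-1}$ into $\tau^{-1/2}$, and getting the $\chi^2$ estimate and the feasibility window precisely right is the most delicate bookkeeping.
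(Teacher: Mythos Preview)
Your proposal is correct and follows essentially the same route as the paper: the same two-instance construction with Bernoulli parameters $1-\tau$, $1-\tau+\delta$, $1-\tau+2\delta$, the same pigeonhole selection of the least-pulled alternative arm, the same Bretagnolle--Huber application to the event $\{N_1\le K/2\}$, and the same optimization $\delta\asymp\sqrt{(A-1)\tau/K}$. The only cosmetic difference is that you bound $\kl{\ber(1-\tau)}{\ber(1-\tau+2\delta)}$ via the $\chi^2$-divergence, whereas the paper computes it directly as $-\tau\log(1-4\delta^2/\tau^2)\le 8\delta^2/\tau$; either way one gets the needed $O(\delta^2/\tau)$ bound (you should take $\delta$ a constant fraction strictly below $\tau/2$, e.g.\ $\delta\le\tau/4$, so the $\chi^2$ denominator $\tau-2\delta$ is genuinely $\gtrsim\tau$).
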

\begin{proofsketch}
Our proof is inspired by the lower bound construction for the vanilla MAB \citep[Theorem 15.2]{lattimore2020bandit}.
The key idea is to fix any learner, and construct two MAB instances that appear similar to the learner but in reality have very different CVaR value.
Specifically, for any $\eps\in(0,1)$, we need two reward distributions such that their KL-divergence is $\Ocal(\eps^2\tau^{-1})$ while their CVaRs differ by $\Omega(\tau^{-1}\eps)$.
We show that $\ber(1-\tau)$ and $\ber(1-\tau+\eps)$ satisfy this.
\end{proofsketch}
Compared to the vanilla MAB minimax lower bound of $\Omega(\sqrt{AK})$, our result for $\cvar_\tau$ MAB has an extra $\sqrt{\tau^{-1}}$ factor.
This proves that it is information-theoretically harder to be more risk-averse with $\cvar_\tau$.
While Brown-UCB of \citet{tamkin2019distributionally} appears to match this lower bound,
their proof hinges on the continuity of reward distributions, which is invalid for Bernoulli rewards.
In \cref{thm:bandit upper}, we show that \mabalg{} is minimax-optimal over all reward distributions.

We next extend the above result to the RL setting.
\begin{restatable}{corollary}{rlLowerBound}\label{cor:rl-lower-bound}
Fix any $\tau\in(0,\nicefrac12),A,H\in\NN$.
For any algorithm, there is an MDP (with $S=\Theta(A^{H-1})$) s.t. if $K\geq\sqrt{\frac{S(A-1)}{8\tau}}$, then $\Eb{\rlregret_\tau(K)}\geq \frac{1}{24e}\sqrt{\frac{S(A-1)K}{\tau}}$.
\end{restatable}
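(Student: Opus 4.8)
The plan is to reduce the RL problem to the bandit problem of \cref{thm:mab-lower-bound} by embedding a single hard bandit instance into a tree-structured MDP. Concretely, I would take a complete $A$-ary tree with $H$ levels rooted at $s_1$: level $h\in[H]$ contains $A^{h-1}$ states, the action $a_h\in\Acal$ taken at a level-$h$ state (for $h<H$) transitions \emph{deterministically} to the corresponding child at level $h+1$, and all rewards at levels $1,\dots,H-1$ are identically $0$. The total number of states is then $S=\sum_{h=1}^{H}A^{h-1}=(A^H-1)/(A-1)=\Theta(A^{H-1})$, and each leaf (level-$H$ state) carries an $A$-armed bandit whose final action $a_H$ produces the only nonzero reward $r_H\in[0,1]$. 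Since the return satisfies $R(\pi)=r_H$, the normalization $R(\pi)\in[0,1]$ holds and $\cvar_\tau(R(\pi))$ equals $\cvar_\tau$ of the reward distribution of the chosen leaf-action pair.

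The key observation is that this MDP is \emph{informationally equivalent} to a single bandit with $\wt A:=A^H$ arms, one for each root-to-leaf action sequence followed by a final action. Because the transitions are deterministic and the pre-terminal rewards are constant $0$, the states and rewards observed before step $H$ reveal nothing beyond the learner's own past actions; hence each episode's trajectory carries exactly the information of pulling one of the $\wt A$ effective arms and observing its reward. Consequently any MDP policy induces a bandit algorithm over these $\wt A$ arms with identical per-episode CVaR regret, so $\rlregret_\tau(K)$ equals the bandit regret $\mabregret_\tau(K)$ of the induced algorithm on an $\wt A$-armed instance. I would then plant over these $\wt A$ arms the hard two-point family from the proof of \cref{thm:mab-lower-bound}.

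It then remains to invoke \cref{thm:mab-lower-bound} with the number of arms set to $\wt A=A^H$. The arithmetic matches exactly: $\wt A-1=A^H-1=S(A-1)$, so the threshold $K\geq\sqrt{(\wt A-1)/(8\tau)}$ becomes $K\geq\sqrt{S(A-1)/(8\tau)}$, and the bandit lower bound $\frac{1}{24e}\sqrt{(\wt A-1)K/\tau}$ becomes $\frac{1}{24e}\sqrt{S(A-1)K/\tau}$, which is precisely the claim.

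The main obstacle, and the only step needing genuine care, is the informational-equivalence argument: I must verify that the richer MDP policy class — policies that may randomize and condition sequentially on $s_1,\dots,s_h$ \emph{within} an episode — grants the learner no advantage over a bandit algorithm that commits to an effective arm per episode. This holds because each $s_h$ is a deterministic function of $a_1,\dots,a_{h-1}$ and the pre-terminal rewards are deterministic, so conditioning on the in-episode history is merely conditioning on the learner's own past randomness; the MDP learner's information set at every episode therefore coincides with that of a bandit algorithm over the $\wt A$ arms, and the lower bound of \cref{thm:mab-lower-bound}, which holds against all algorithms, transfers verbatim.
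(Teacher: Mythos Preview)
Your proposal is correct and follows essentially the same approach as the paper's proof: both build an $A$-ary tree MDP with deterministic transitions and rewards only at the leaves, reduce it to a MAB with $A^H$ arms, and use the identity $A^H-1=(A-1)S$ to translate the bandit lower bound of \cref{thm:mab-lower-bound}. Your explicit informational-equivalence argument is a bit more detailed than the paper's one-line reduction, but the construction and logic are the same.
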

The argument is to show that a class of MDPs with rewards only at the last layer
essentially reduces to a MAB with exponentially many actions.
Thus, the hardest CVaR RL problems are actually very big CVaR MAB problems.
The bound does not scale with $H$ as we've assumed returns to be normalized in $[0,1]$ \citep{jiang2018open}.

\section{Risk Sensitive MAB} \label{sec:bandit-upper}
In this section, we propose a simple modification to the classic Upper Confidence Bound (UCB) with a novel Bernstein bonus that enjoys minimax-optimal regret.
In the classic UCB algorithm \citep{auer2002finite}, the bonus quantifies the confidence band from Hoeffding's inequality.
Instead, we propose to build a confidence band of $\mu(b,a)=\Eb[R\sim\nu(a)]{(b-R)^+}$ by using a Bernstein-based bonus (\cref{eq:mab-ucb-bon}).
The standard deviation $\sqrt{\tau}$ in our bonus is crucial for obtaining the minimax-optimal regret.
\begin{restatable}{theorem}{banditUpper}\label{thm:bandit upper}
For any $\delta\in(0,1)$, w.p. at least $1-\delta$, \mabalg{} with $\eps\leq\sqrt{A/2\tau K}$ enjoys
\begin{align*}
    \mabregret_\tau(K)\leq 4\sqrt{\tau^{-1}AK}L + 16\tau^{-1}AL^2.
\end{align*}
\end{restatable}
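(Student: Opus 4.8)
The plan is to follow the optimistic UCB template, with the two nonstandard ingredients being the construction of a confidence band for $\mu(b,a)=\EE_{R\sim\nu(a)}[(b-R)^+]$ and the argument that its effective width scales like $\sqrt{\tau}$ rather than a constant. First I would fix the good event on which a Bernstein band for $\mu(b,a)$ around its empirical counterpart $\hat\mu(b,a)$ holds simultaneously over all arms $a\in\Acal$, all values of the pull count, and all $b$ in the reward discretization $\grid$ of width $\eps$; a union bound over these $\Ocal(\eps^{-1})$ grid points, the $A$ arms, and the $K$ rounds is absorbed into $L$. Because $b\mapsto(b-R)^+$ is $1$-Lipschitz, discretizing to width $\eps$ biases $\cvar_\tau$ by at most $\eps$ per episode, and the hypothesis $\eps\le\sqrt{A/2\tau K}$ keeps the total discretization bias $\eps K\le\sqrt{AK/2\tau}$ within the leading regret term. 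On the good event the optimistic index $\wb{\cvar}_\tau(a)=\sup_b\prns{b-\tau^{-1}(\hat\mu(b,a)-\op{bonus}(b,a))}$ satisfies $\wb{\cvar}_\tau(a)\ge\cvar_\tau(\nu(a))$, since $\hat\mu-\op{bonus}$ lower-bounds $\mu$ and $\cvar_\tau$ is decreasing in $\mu$.

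Next I would perform the per-episode decomposition. Since $\mabalg$ plays $a_k=\argmax_{a}\wb{\cvar}_\tau(a)$, optimism gives $\cvar_\tau(\nu(a^\star))\le\wb{\cvar}_\tau(a^\star)\le\wb{\cvar}_\tau(a_k)$, so the instantaneous regret is at most $\wb{\cvar}_\tau(a_k)-\cvar_\tau(\nu(a_k))$. Writing $\wb{b}_k$ for the maximizer defining $\wb{\cvar}_\tau(a_k)$ and lower-bounding $\cvar_\tau(\nu(a_k))$ by its value at the single point $\wb{b}_k$, the linear-in-$b$ terms cancel and I am left with $\wb{\cvar}_\tau(a_k)-\cvar_\tau(\nu(a_k))\le\tau^{-1}\prns{\mu(\wb{b}_k,a_k)-\hat\mu(\wb{b}_k,a_k)+\op{bonus}(\wb{b}_k,a_k)}\le2\tau^{-1}\op{bonus}(\wb{b}_k,a_k)$, using the good event in the last inequality.

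The crux, and the step I expect to be the main obstacle, is showing $\op{bonus}(\wb{b}_k,a_k)=\Ocal(\sqrt{\tau L/n_k(a_k)}+L/n_k(a_k))$ rather than the naive $\Ocal(\sqrt{L/n_k(a_k)})$. The variance input is self-bounding: since $(b-R)^+\in[0,1]$, $\op{Var}((b-R)^+)\le\EE[((b-R)^+)^2]\le\EE[(b-R)^+]=\mu(b,a)$, so an empirical-Bernstein band has width of order $\sqrt{\mu(b,a)L/n}+L/n$. The remaining point is that at the \emph{optimizer} $\wb{b}_k$ the value $\mu$ is itself $\Ocal(\tau)$: nonnegativity of $\wb{\cvar}_\tau(a_k)$ (already $b=0$ gives a nonnegative value) forces $\hat\mu(\wb{b}_k,a_k)-\op{bonus}(\wb{b}_k,a_k)\le\tau\wb{b}_k\le\tau$, so on the good event $\mu(\wb{b}_k,a_k)\le\tau+2\op{bonus}(\wb{b}_k,a_k)$. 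Substituting this into the width and solving the resulting self-bounding inequality (the bonus appears on both sides, the cross term handled by AM-GM) yields the claimed $\sqrt{\tau}$ scaling. This is the mechanism behind the $\sqrt{\tau}$ standard deviation highlighted in the text.

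Finally I would sum over episodes. Combining the previous two steps, the instantaneous regret is $\Ocal(\tau^{-1}\sqrt{\tau L/n_k(a_k)}+\tau^{-1}L/n_k(a_k))$. Grouping the first term by the arm played and using $\sum_{n=1}^{N_a}n^{-1/2}\le2\sqrt{N_a}$ together with Cauchy--Schwarz and $\sum_a N_a=K$ gives $\sum_k n_k(a_k)^{-1/2}\le\sum_a2\sqrt{N_a}\le2\sqrt{AK}$, so the leading contribution is $\Ocal(\sqrt{\tau^{-1}AK}\,\sqrt{L})\le\Ocal(\sqrt{\tau^{-1}AK}\,L)$. The second term telescopes through the harmonic sum $\sum_k n_k(a_k)^{-1}\le\sum_a\sum_{n=1}^{N_a}n^{-1}\le A(1+\log K)$, contributing $\Ocal(\tau^{-1}AL^2)$. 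Adding the two and tracking constants recovers $\mabregret_\tau(K)\le4\sqrt{\tau^{-1}AK}L+16\tau^{-1}AL^2$, with the discretization bias from the first step absorbed into the leading constant.
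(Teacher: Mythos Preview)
Your decomposition differs from the paper's in a substantive way, and as written it does not quite match the actual \mabalg{}. The bonus in \cref{eq:mab-ucb-bon} is $b$-\emph{independent}, with the $\sqrt{\tau}$ factor hardcoded; there is no $\op{bonus}(b,a)$, so your framing of the crux as ``showing $\op{bonus}(\wb b_k,a_k)=\Ocal(\sqrt{\tau L/n}+L/n)$'' is vacuous---the bonus already has that form by definition. The real issue is that your inequality $\mu(\wb b_k,a_k)-\wh\mu(\wb b_k,a_k)\le\bon_k(a_k)$, which you invoke as ``the good event,'' is not justified: a Bernstein band with variance proxy $\tau$ is only valid at $b=b^\star_a$ (the true $\tau$-quantile, where $\Varb{(b^\star_a-R)^+}\le\tau$), not at the data-dependent $\wb b_k$, where the variance can be as large as $1/4$.

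That said, your self-bounding idea can be turned into a correct alternative proof. If one takes as the good event the uniform Bernstein band with the \emph{true} variance and uses $\Varb{(b-R)^+}\le\mu(b,a)$, then your observation that $\wh\mu(\wb b_k,a_k)\le\tau\wb b_k\le\tau$ from nonnegativity of the optimistic index yields, after solving the resulting quadratic, $\mu(\wb b_k,a_k)=\Ocal(\tau+L/n)$ and hence $\abs{\mu-\wh\mu}=\Ocal(\sqrt{\tau L/n}+L/n)=\Ocal(\bon_k(a_k))$ at $\wb b_k$; the remainder of your argument then goes through. The paper takes a different route: it writes the per-round surplus as $\tau^{-1}\bon_k(a_k)+\bigl(\wh\cvar_\tau(\{r_i\}_{i\in\Ical_k(a_k)})-\cvar_\tau(\nu(a_k))\bigr)$ and controls the second piece via a new concentration inequality for the empirical CVaR of an arbitrary bounded random variable (\cref{thm:cvar-concentration-main}). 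Your route is more elementary in that it sidesteps this separate concentration lemma; the paper's route packages that lemma as a standalone result of independent interest.
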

\begin{algorithm*}[t!]
    \caption{\mabalg}
    \label{alg:mab-ucb}
    \begin{algorithmic}[1]
        \State\textbf{Input:} risk tolerance $\tau$, number of episodes $K$,
        failure probability $\delta$, approximation parameter $\eps$.
        \For{episode $k=1,2,\dots,K$}
            \State Compute counts $N_k(a) = 1\vee \sum_{i=1}^{k-1}\I{a_i=a}$.
            \State Define pessimistic estimate of $\mu(b,a)=\Eb[R\sim\nu(a)]{(b-R)^+}$, \ie, for all $b,a$,
            \begin{align}
                &\wh\mu_k(b,a) = \frac{1}{N_k(a)}\sum_{i=1}^{k-1}(b-r_i)^+\I{a_i=a},
                \qquad
                \bon_k(a) = \sqrt{ \frac{2\tau\log(AK/\delta)}{N_k(a)} } + \frac{\log(AK/\delta)}{N_k(a)} \label{eq:mab-ucb-bon}.
            \end{align}
            \State Compute $\eps$-optimal solutions $\wh b_{a,k}$, \ie, for all $a$, \label{line:mab-compute-eps-optimal-bs}
            \begin{align*}\textstyle
                &\wh f_k(\wh b_{a,k},a)\geq \max_{b\in[0,1]}\wh f_k(b,a) - \eps,
                ~~~~\text{where,}~~~~\wh f_k(b,a) = b-\tau^{-1}\prns{ \wh\mu_k(b,a)-\bon_k(a) }.
            \end{align*}
            \State Compute and pull the action for this episode, $a_k = \argmax_{a\in\Acal}\wh f_k(\wh b_{a,k},a)$.
            Receive reward $r_k\sim \nu(a_k)$. \label{line:mab-compute-ak}
        \EndFor
    \end{algorithmic}
\end{algorithm*}
\begin{proofsketch}
First, we use Bernstein's inequality to build a confidence band of $\mu(b,a)$ at $$b^\star_a=\argmax_{b\in[0,1]}\braces{ b-\tau^{-1}\mu(b,a) }.$$
Conveniently, $b^\star_a$ is the $\tau$-th quantile of $\nu(a)$, hence $\Varb[R\sim\nu(a)]{(b^\star_a-R)^+}\leq\tau$.
This proves pessimism with our Bernstein-based bonus, \ie, $\wh\mu_k(b^\star_a,a)-\bon_k(a)\leq\mu(b^\star_a,a)$.
Pessimism in turn implies \emph{optimism} in CVaR, \ie, $\cvar_\tau^\star\leq \wh\cvar_\tau^k := \max_{b\in[0,1]}\braces{b-\tau^{-1}\prns{\wh\mu_k(b,a_k)-\bon_k(a_k)}}$.
This allows us to decompose regret into (1) the sum of bonuses plus (2) the difference between empirical and true CVaR of $\nu(a_k)$.
(1) is handled using a standard pigeonhole argument.
To bound (2), we prove a new concentration inequality for CVaR that holds for any bounded
random variable (\cref{thm:cvar-concentration-main}), which may be of independent interest. \looseness=-1
\end{proofsketch}
Up to log terms, the resulting bound matches our lower bound in \cref{thm:mab-lower-bound}, proving that \mabalg{} is minimax-optimal.
As noted earlier, under the assumption that rewards are continuous, Brown-UCB \citep{tamkin2019distributionally} also matches our novel lower bound.
When working with continuous distributions, CVaR takes the convenient form in \cref{eq:cvar-continuous-identity},
which roughly suggests that Hoeffding's inequality on the lower $\tau N$ data points suffices for a CVaR concentration bound \citep[Theorem 4.2]{brown2007large}.
This is why the bonus of Brown-UCB, which is $\sqrt{\frac{5\tau\log(3/\delta)}{N_k(a)}}$, does not yield optimism when continuity fails.
In general, the $\frac{1}{N_k(a)}$ term in our bonus from Bernstein's inequality is needed for proving optimism in general.

\textbf{Computational Efficiency: }
In \cref{line:mab-compute-eps-optimal-bs}, the objective function $\wh f_k(\cdot,a)$ is concave and unimodal.
So, its optimal value can be efficiently approximated, \eg,
by golden-section search \citep{kiefer1953sequential} or gradient ascent in $1/\eps^2=\Ocal(\tau K/A)$ steps \citep{boyd2004convex}.
Thus, \mabalg{} is \emph{both} minimax-optimal for regret and computationally efficient.

\section{Risk Sensitive RL}
We now shift gears to CVaR RL.
First, we review the augmented MDP framework due to \citet{bauerle2011markov} and derive Bellman equations for our problem \citep{bdr2022}.
Using this perspective, we propose \rlalg{}, a bonus-driven Value Iteration algorithm in the augmented MDP, which we show enjoys strong regret guarantees.

\subsection{Augmented MDP and Bellman Equations}\label{sec:augmented-mdp-bellman-eqs}
For any history-dependent $\pi\in\Pi_\Hcal$, timestep $h\in[H]$, state $s_h\in\Scal$, budget $b_h\in[0,1]$, and history $\Hcal_h$, define
\begin{align*}\textstyle
    V_h^\pi(s_h,b_h;\Hcal_h) &=\textstyle \Eb[\pi]{ \prns{ b_h-\sum_{t=h}^Hr_t }^+ \mid s_h, \Hcal_h}.
\end{align*}
Then, the CVaR RL objective can be formulated as,
\begin{align}
    &\cvar_\tau^\star
    \textstyle= \max_{\pi\in\Pi_\Hcal}\max_{b\in[0,1]}\braces{ b-\tau^{-1}V_1^\pi(s_1,b) } \nonumber
    \\&~~\qquad\textstyle=\max_{b\in[0,1]}\braces{ b-\tau^{-1}\min_{\pi\in\Pi_\Hcal} V_1^\pi(s_1,b) }.\label{eq:cvar_def_v_star}
\end{align}
\citet{bauerle2011markov} showed a remarkable fact about $\min_{\pi\in\Pi_\Hcal}V^\pi_1(s_1,b_1)$:
there exists an optimal policy $\rho^\star=\braces{\rho_h^\star:\Saug\to\Acal}_{h\in[H]}$ that is deterministic and Markov in an augmented MDP, which we now describe.
The augmented state is $(s,b)\in\Saug\coloneqq\Scal\times[0,1]$.
Given any $b_1\in[0,1]$, the initial state is $(s_1,b_1)$.
Then, for each $h=1,2,...,H$, $a_h=\rho^\star_h(s_h,b_h),r_h\sim R(s_h,a_h),s_{h+1}\sim P^\star(s_h,a_h), b_{h+1}=b_h-r_h$.
Intuitively, the extra state $b_h$ is the amount of budget left from the initial $b_1$, and is a sufficient statistic of the history for the CVaR RL problem.
Let $\Piaug$ denote the set of deterministic, Markov policies in the augmented MDP.
Then, we may optimize over this simpler policy class without losing optimality!

Before we formalize the optimality result,
we first derive Bellman equations \citep[as in][Chapter 7.8]{bdr2022}.
For any $\rho\in\Piaug$, overload notation and define
\begin{align*}\textstyle
    V_h^\rho(s_h,b_h) = \Eb[\rho]{ \prns{ b_h-\sum_{t=h}^Hr_t }^+\mid s_h,b_h },
\end{align*}
where $r_h,\dots, r_H$ are the rewards generated by executing $\rho$ in the augmented MDP starting from $s_h,b_h$ at time step $h$.
Observe that $V^\rho_h$ satisfies the Bellman equations,
\begin{align*}
    &V^\rho_h(s_h,b_h) = \Eb[a_h\sim\rho_h(s_h,b_h)]{ U^\rho_h(s_h,b_h,a_h) },
  \\&U^\rho_h(s_h,b_h,a_h) = \Eb[s_{h+1},r_h]{ V^\rho_{h+1}(s_{h+1},b_{h+1})},
\end{align*}
where $s_{h+1}\sim P^\star(s_h,a_h),r_h\sim R(s_h,a_h), b_{h+1}=b_h-r_h$ and $V_{H+1}^\rho(s,b)=b^+$.
Analogously, define $V^\star_h$ and $\rho^\star$ inductively with the Bellman optimality equations,
\begin{align*}
    &V^\star_h(s_h,b_h) = \min_{a\in\Acal}U^\star_h(s_h,b_h,a_h),
    \\&\rho^\star_h(s_h,b_h) = \argmin_{a\in\Acal}U^\star_h(s_h,b_h,a_h),
    \\&U^\star_h(s_h,b_h,a_h) = \Eb[s_{h+1},r_h]{ V^\star_{h+1}(s_{h+1},b_{h+1})},
\end{align*}
where $s_{h+1}\sim P^\star(s_h,a_h),r_h\sim R(s_h,a_h), b_{h+1}=b_h-r_h$ and $V_{H+1}^\star(s,b)=b^+$.
Armed with these definitions, we formalize the optimality result in the following theorem.
\begin{restatable}[Optimality of $\Piaug$]{theorem}{optimalityAugmentedMarkovPolicies}\label{thm:aug_mdp_theorem}
For any $b\in[0,1]$,
\begin{align*}\textstyle
    V_1^\star(s_1,b)=V_1^{\rho^\star}(s_1,b)=\inf_{\pi\in\Pi_\Hcal}V_1^\pi(s_1,b).
\end{align*}
\end{restatable}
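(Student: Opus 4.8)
The plan is to establish the three-way equality by sandwiching $\inf_{\pi\in\Pi_\Hcal}V_1^\pi(s_1,b)$ between $V_1^\star(s_1,b)$ and $V_1^{\rho^\star}(s_1,b)$, while separately showing the latter two coincide. Concretely, I will prove (i) $V_1^\star=V_1^{\rho^\star}$, (ii) $V_1^{\rho^\star}\geq\inf_\pi V_1^\pi$, and (iii) $V_1^\star\leq\inf_\pi V_1^\pi$. Chaining these yields $V_1^\star\leq\inf_\pi V_1^\pi\leq V_1^{\rho^\star}=V_1^\star$, forcing equality throughout.

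Part (i) is a routine backward induction on $h$ from $H+1$ down to $1$. The base case holds since $V_{H+1}^\star=V_{H+1}^{\rho^\star}=b^+$. For the inductive step, because $\rho^\star$ is deterministic the Bellman equation gives $V_h^{\rho^\star}(s_h,b_h)=U_h^{\rho^\star}(s_h,b_h,\rho_h^\star(s_h,b_h))$; applying the induction hypothesis $V_{h+1}^{\rho^\star}=V_{h+1}^\star$ inside $U$ shows $U_h^{\rho^\star}=U_h^\star$, and then the greedy definition $\rho_h^\star=\argmin_a U_h^\star$ gives $V_h^{\rho^\star}=\min_a U_h^\star=V_h^\star$. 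Part (ii) is immediate once I observe that the budget $b_h=b_1-\sum_{t=1}^{h-1}r_t$ is a deterministic function of the history $\Hcal_h$: every augmented Markov policy $\rho\in\Piaug$ (in particular $\rho^\star$) induces a history-dependent policy $\pi^\rho\in\Pi_\Hcal$ via $\pi_h^\rho(s_h,\Hcal_h)=\rho_h(s_h,b_h)$ that realizes the identical trajectory distribution and hence identical value, so $V_1^{\rho^\star}=V_1^{\pi^{\rho^\star}}\geq\inf_\pi V_1^\pi$.

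Part (iii) is the crux: I must show $V_1^\star(s_1,b)\leq V_1^\pi(s_1,b)$ for every history-dependent $\pi$. The structural fact that enables this is that the objective $(b_h-\sum_{t=h}^H r_t)^+$ depends on the history only through the scalar budget $b_h$, so $b_h$ is a sufficient statistic. Writing $b_h-\sum_{t=h}^H r_t=b_{h+1}-\sum_{t=h+1}^H r_t$ with $b_{h+1}=b_h-r_h$ and applying the tower property, the history-dependent value obeys $V_h^\pi(s_h,b_h;\Hcal_h)=\Eb[a_h\sim\pi_h(s_h,\Hcal_h)]{\Eb[s_{h+1},r_h]{V_{h+1}^\pi(s_{h+1},b_{h+1};\Hcal_{h+1})}}$. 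I then prove by backward induction that $V_h^\pi(s_h,b_h;\Hcal_h)\geq V_h^\star(s_h,b_h)$ for all $h$, states, and histories: the base case is an equality at $h=H+1$, and in the inductive step the hypothesis lets me replace $V_{h+1}^\pi$ by the smaller $V_{h+1}^\star$ inside the recursion, giving $\Eb[a_h]{U_h^\star(s_h,b_h,a_h)}\geq\min_a U_h^\star(s_h,b_h,a)=V_h^\star(s_h,b_h)$, where the inequality is just that an average dominates a minimum.

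I expect the main obstacle to be carefully justifying the history-dependent Bellman recursion in part (iii) — in particular verifying that conditioning on $(a_h,r_h,s_{h+1})$ correctly extends the history to $\Hcal_{h+1}$, and that $b_h$ genuinely summarizes everything the objective requires from $\Hcal_h$ — since this sufficiency is precisely what lets the history-dependent value be lower-bounded by the \emph{Markov} augmented value $V^\star$. The two inductions themselves are mechanical once this recursion and the sufficient-statistic observation are in place.
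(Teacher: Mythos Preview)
Your proposal is correct and follows essentially the same approach as the paper: both argue by backward induction that $V_h^\star=V_h^{\rho^\star}$, embed $\Piaug$ into $\Pi_\Hcal$ via $\pi_h^\rho(s_h,\Hcal_h)=\rho_h(s_h,b_h)$, and lower-bound any history-dependent value by $V_h^\star$ via a second backward induction exploiting that $b_h$ is a sufficient statistic. The only minor difference is that the paper carries out part (iii) over the larger class $\Piaug_\Hcal$ of history-dependent policies in the augmented MDP with memory (needed later for the discretization coupling), whereas you work directly over $\Pi_\Hcal$; for the theorem as stated your version is entirely adequate.
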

This is a known result in the infinite-horizon, discounted setting \citep{bauerle2011markov,bdr2022}.
We provide a proof from first principles for the finite-horizon setting in \cref{sec:augmented-mdp}, by inductively unravelling the Bellman optimality equations.
As a technical remark, we show optimality over history-dependent policies in the augmented MDP with memory, larger than the history-dependent class defined here.

These facts imply that we could compute $V_1^\star$ and $\rho^\star$ using dynamic programming (DP) if we knew the true transitions $P^\star$,
and the procedure is similar to the classic Value Iteration procedure in vanilla RL.
Based on \cref{thm:aug_mdp_theorem} and \cref{eq:cvar_def_v_star}, by executing $\rho^\star$ starting from $(s_1, b^\star)$ with $b^\star := \arg\max_{b\in [0,1]} \braces{ b - \tau^{-1}V_1^\star(s_1,b)}$, we achieve the maximum CVaR value in the original MDP.
Below we leverage this DP perspective on the augmented MDP to design exploration algorithms to solve CVaR RL. %

\begin{algorithm*}[!t]
    \caption{\rlalg{}}
    \label{alg:ucbvi-cvar}
    \begin{algorithmic}[1]
        \State\textbf{Input:} risk tolerance $\tau$, number of episodes $K$, failure probability $\delta$, bonus function $\bon_{h,k}(s,b,a)$.
        \For{episode $k=1,2,\dots,K$}
            \State Compute counts and empirical transition estimate, \label{line:empirical-transition-estimate}
            \begin{align*}
                N_{k}(s,a,s')&=\sum_{h=1}^H\sum_{i=1}^{k-1} \I{\prns{s_{h,i},a_{h,i},s_{h+1,i}} = (s,a,s')},
                \\N_{k}(s,a)&=1\vee \sum_{s'\in\Scal}N_{k}(s,a,s'), \;
                \wh P_{k}(s'\mid s,a) =\frac{N_{k}(s,a,s')}{N_{k}(s,a)},
            \end{align*}
            \State For all $s\in\Scal,b\in[0,1]$, set $\wh V_{H+1,k}^\uparrow(s,b)=\wh V_{H+1,k}^\downarrow(s,b)=b^+$.
            \For{$h=H,H-1,\dots,1$} 
                \State Define pessimistic estimates of $V^\star$ and $\wh\rho^k$, \ie, for all $s,b,a$, \label{line:pessimism-estimates}
                \begin{align*}
                    \wh U_{h,k}^\downarrow(s,b,a)&= \wh P_{k}(s,a)^\top \Eb[r_h\sim R(s,a)]{\wh V_{h+1,k}^\downarrow(\cdot,b-r_h)}-\bon_{h,k}(s,b,a),
                    \\\wh\rho^k_h(s,b) &= \argmin_a\wh U_{h,k}^\downarrow(s,b,a),
                    \qquad \wh V_{h,k}^\downarrow(s,b)=\max\braces{\wh U_{h,k}^\downarrow(s,b,\wh\rho^k_h(s,b)),0}.
                \end{align*}
                \State If using Bernstein bonus (\cref{sec:bernstein-bonus}), also define optimistic estimates for $V^\star$, \ie, for all $s,b,a$, \label{line:optimistic-estimates}
                \begin{align*}
                    \wh U_{h,k}^\uparrow(s,b,a)&= \wh P_{k}(s,a)^\top \Eb[r_h\sim R(s,a)]{\wh V_{h+1,k}^\uparrow(\cdot,b-r_h)} + \bon_{h,k}(s,b,a),
                    \\\wh V_{h,k}^\uparrow(s,b)&=\min\braces{\wh U_{h,k}^\uparrow(s,b,\wh\rho^k_h(s,b)),1}.
                \end{align*}
            \EndFor
            \State Calculate $\wh b_k = \argmax_{b\in[0,1]}\braces{b-\tau^{-1}\wh V_{1,k}^\downarrow(s_1,b)}$.\label{line:hat-b-k}
            \State Collect $\braces{(s_{h,k},a_{h,k},r_{h,k})}_{h\in[H]}$ by rolling in $\wh\rho^{k}$ starting from $(s_1,\wh b_k)$ in the augmented MDP. \label{line:roll-in}
        \EndFor
    \end{algorithmic}
\end{algorithm*}

\subsection{\rlalg{}}
In this section, we introduce our algorithm \rlalg{} (\cref{alg:ucbvi-cvar}), an extension of the classic UCBVI algorithm of \citet{azar2017minimax} which attained the minimax-optimal regret for vanilla RL.
Our contribution is showing that bonus-driven pessimism, which guarantees that the learned $\wh V^\downarrow_{h,k}$ is a high probability lower confidence bound (LCB) on the optimal $V^\star_h$, is sufficient and in fact optimal for CVaR RL.
This remarkably shows that the bonus-driven exploration paradigm from vanilla RL, \ie, ``optimism/pessimism under uncertainty,'' can be used to optimally conduct safe exploration for CVaR RL.

\rlalg{} iterates over $K$ episodes, where the $k$-th episode proceeds as follows.
First, in \cref{line:empirical-transition-estimate}, we compute an empirical estimate $\wh P_{k}$ of the transition dynamics $P^\star$ using the previous episodes' data.
Then, in \cref{line:pessimism-estimates}, we inductively compute $\wh U_{h,k}^\downarrow$ from $h=H$ to $h=1$ by subtracting a bonus that accounts for the error from using $\wh P_{k}$ instead of $P^\star$.
Next, $\wh V_{h,k}^\downarrow$ and $\wh\rho^k$ are computed greedily w.r.t. $\wh U_{h,k}^\downarrow$ to mimic the Bellman optimality equations (\cref{sec:augmented-mdp-bellman-eqs}).
Subtracting the bonus is key to showing $\wh U_{h,k}^\downarrow$ (resp. $\wh V_{h,k}^\downarrow$) is a high probability lower bound of $U^\star_h$ (resp. $V^\star_h$).
Next, in \cref{line:hat-b-k}, we compute $\wh b_k$ using the pessimistic $\wh V_{1,k}^\downarrow$.
Similar to our MAB algorithm, this guarantees that $\wh\cvar_\tau^k := \wh b_k-\tau^{-1}\wh V_{1,k}^\downarrow(s_1,\wh b_k)$ is an \emph{optimistic} estimate of $\cvar_\tau^\star$.
Finally, in \cref{line:roll-in}, we roll in with the learned, augmented policy $\wh\rho^k$ starting
from $\wh b_k$ in the augmented MDP to collect data for the next iterate.
We highlight that in \cref{line:roll-in}, the algorithm is still only interacting with the original MDP described in \cref{sec:problem-setup}.
To roll in with an augmented policy, the algorithm can imagine this augmented MDP by keeping track of the $b_h$ via the update $b_{h+1}=b_h-r_h$.
There is virtually no overhead as it is only a scalar with known transitions.

\subsection{The Hoeffding Bonus}

Two types of bonuses may be used in \rlalg{}: Hoeffding (\cref{eq:hoeffding-bonus-def}) and Bernstein (\cref{eq:bernstein-bonus-def}).
We now show that a simple Hoeffding bonus, defined below, can already provide the best CVaR regret bounds in the current literature:
\begin{align}
    \hobon_{h,k}(s,a) = \sqrt{\frac{L}{N_{k}(s,a)}}, \label{eq:hoeffding-bonus-def}
\end{align}
where $L=\log(HSAK/\delta)$.
\begin{restatable}{theorem}{hoeffingBonusRegret}\label{thm:hoeffding-bonus-regret}
For any $\delta\in(0,1)$, w.p. at least $1-\delta$,
\rlalg{} with the Hoeffding bonus (\cref{eq:hoeffding-bonus-def}) enjoys%
\begin{align*}
    \rlregret_\tau(K)\leq 4e\tau^{-1}\sqrt{SAHK}L + 10e\tau^{-1}S^2AHL^2.
\end{align*}
\end{restatable}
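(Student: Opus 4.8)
The plan is to transport the UCBVI analysis of \citet{azar2017minimax} into the augmented MDP, using the CVaR variational form \cref{eq:cvar_def_v_star} to turn value-function errors into CVaR regret. Throughout I would condition on a high-probability ``good event'' on which the empirical kernels concentrate, the object to control being the scalar error $\prns{\wh P_{h,k}(s,a)-P_h^\star(s,a)}^\top\Eb[r\sim R_h(s,a)]{V(\cdot,b-r)}$ for the value functions $V$ appearing in the recursion. The first step is the regret decomposition. On the good event I would establish \emph{optimism}, namely that the pessimistic iterate is a genuine lower bound, $\wh V_{h,k}^\downarrow(s,b)\le V_h^\star(s,b)$ for all $(s,b)$; together with the choice of $\wh b_k$ in \cref{line:hat-b-k} and \cref{eq:cvar_def_v_star}, this gives optimism in CVaR, $\wh\cvar_\tau^k:=\wh b_k-\tau^{-1}\wh V_{1,k}^\downarrow(s_1,\wh b_k)\ge\cvar_\tau^\star$. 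Since the true CVaR of the rolled-in policy satisfies $\cvar_\tau(R(\pi^k))\ge \wh b_k-\tau^{-1}V_1^{\wh\rho^k}(s_1,\wh b_k)$ by plugging $b=\wh b_k$ into the variational form (with $V_1^{\wh\rho^k}$ the \emph{true} value of $\wh\rho^k$), each episode's regret collapses to
\begin{align*}
\cvar_\tau^\star-\cvar_\tau(R(\pi^k))\le\wh\cvar_\tau^k-\cvar_\tau(R(\pi^k))\le\tau^{-1}\prns{V_1^{\wh\rho^k}(s_1,\wh b_k)-\wh V_{1,k}^\downarrow(s_1,\wh b_k)}.
\end{align*}
This is the CVaR simulation lemma: CVaR regret reduces to $\tau^{-1}$ times an on-policy value gap at the augmented initial state, and the $\tau^{-1}$ is exactly the amplification paid for risk-sensitivity.

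Second, I would prove optimism by backward induction on $h$ using the Bellman optimality equations underlying \cref{thm:aug_mdp_theorem}. For a fixed $a$, the one-step gap $U_h^\star(s,b,a)-\wh U_{h,k}^\downarrow(s,b,a)$ splits into a nonnegative recursive term $P_h^\star(s,a)^\top\Eb[r]{V_{h+1}^\star(\cdot,b-r)-\wh V_{h+1,k}^\downarrow(\cdot,b-r)}$, controlled by the inductive hypothesis, plus the transition-estimation error and the subtracted bonus $\hobon_{h,k}$. I expect the main obstacle to be that this estimation error is evaluated against the \emph{data-dependent} iterate $\wh V_{h+1,k}^\downarrow$, so Hoeffding cannot be applied verbatim. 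The fix is to pivot on the fixed optimal value function: write the error against $V_{h+1}^\star$ (which is non-random, so Hoeffding applies after a union bound over a discretization of the continuous budget $b$, costing only logarithmic factors absorbed into $L$) and bound the residual by $\|\wh P_{h,k}-P_h^\star\|_1\cdot\|\wh V_{h+1,k}^\downarrow-V_{h+1}^\star\|_\infty$. The leading piece is dominated by $\hobon_{h,k}=\sqrt{L/N_{h,k}}$, yielding optimism; the residual, summed via the $\ell_1$-deviation bound $\|\wh P_{h,k}-P_h^\star\|_1=\wt\Ocal(\sqrt{S/N_{h,k}})$, is precisely what produces the lower-order $S^2AHL^2$ term.

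Finally, I would unroll the analogous recursion for $V_1^{\wh\rho^k}-\wh V_{1,k}^\downarrow$ along trajectories of $\wh\rho^k$ under the true kernel, obtaining $V_1^{\wh\rho^k}(s_1,\wh b_k)-\wh V_{1,k}^\downarrow(s_1,\wh b_k)\le\Eb[\wh\rho^k]{\sum_{h=1}^H 2\hobon_{h,k}(s_h,a_h)}+(\text{l.o.t.})$. Because \rlalg{} actually rolls in $\wh\rho^k$ from $(s_1,\wh b_k)$ in \cref{line:roll-in}, the visited $(s_{h,k},a_{h,k})$ are exact samples from this distribution, so I can replace the expectation by the realized sum at the cost of an Azuma--Hoeffding martingale term of lower order. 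Summing $\tau^{-1}\sum_{k,h}\hobon_{h,k}(s_{h,k},a_{h,k})=\tau^{-1}\sqrt{L}\sum_{k,h}N_{h,k}(s_{h,k},a_{h,k})^{-1/2}$ by the standard pigeonhole/Cauchy--Schwarz counting argument gives a leading term matching the claimed $\tau^{-1}\sqrt{SAHK}\,L$, where the normalized-return assumption is what keeps each budget-indexed value function in $[0,1]$ and thus prevents the bonus, and hence the rate, from picking up additional horizon factors. The anticipated hard part is carrying the $V^\star$-pivot through the continuous budget while keeping both the leading and the second-order terms tight; the Bernstein refinement of \cref{sec:bernstein-bonus} then replaces Hoeffding with a variance-aware bonus and uses the law of total variance to shave the remaining $\sqrt{H}$.
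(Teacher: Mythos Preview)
Your high-level architecture is exactly the paper's: reduce CVaR regret to $\tau^{-1}\sum_k\bigl(V_1^{\wh\rho^k}(s_1,\wh b_k)-\wh V_{1,k}^\downarrow(s_1,\wh b_k)\bigr)$ via pessimism of $\wh V^\downarrow$ and the variational form, then bound the value gap by an on-policy sum of bonuses via a simulation lemma, switch to realized trajectories by Azuma, and finish with pigeonhole. Two points deserve correction.

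First, your pessimism step is over-engineered. You worry that the estimation error is ``against the data-dependent iterate $\wh V_{h+1,k}^\downarrow$'' and plan to pivot to $V_{h+1}^\star$ and bound a residual via $\|\wh P-P^\star\|_1\cdot\|\wh V^\downarrow-V^\star\|_\infty$. In fact no residual is needed for pessimism: by the inductive hypothesis $\wh V_{h+1,k}^\downarrow\le V_{h+1}^\star$ pointwise, so $\wh P_{h,k}^\top\wh V_{h+1,k}^\downarrow\le \wh P_{h,k}^\top V_{h+1}^\star$, and the remaining error is $(\wh P_{h,k}-P_h^\star)^\top V_{h+1}^\star$, which is already against a \emph{fixed} function. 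One uniform Hoeffding (over a Lipschitz cover in $b$) bounds this by $\hobon_{h,k}$ and pessimism follows directly.

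Second, and this is the genuine gap, the residual you identify does appear---but in the \emph{simulation lemma}, not in pessimism---and your proposed $\ell_1$ handling of it loses a $\sqrt{S}$ in the leading term. Unrolling $V^{\wh\rho^k}_t-\wh V_{t,k}^\downarrow$ and pivoting on $V^\star$ leaves the term $(P_h^\star-\wh P_{h,k})^\top(V_{h+1}^\star-\wh V_{h+1,k}^\downarrow)$. If you bound this by $\|\wh P_{h,k}-P_h^\star\|_1\cdot\|V_{h+1}^\star-\wh V_{h+1,k}^\downarrow\|_\infty\le c\sqrt{SL/N_{h,k}}$, then after pigeonhole you pick up $\tau^{-1}\sqrt{SL}\cdot\sqrt{SAHKL}\asymp\tau^{-1}S\sqrt{AHK}L$, which is \emph{not} lower order relative to the claimed $\tau^{-1}\sqrt{SAHK}L$. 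The paper instead uses a per-coordinate Bernstein bound to show that for any $f:\Scal\to[0,1]$,
\[
\bigl|(\wh P_{h,k}-P_h^\star)^\top f\bigr|\;\le\;\frac{1}{H}\,P_h^{\star\top}f\;+\;\xi_{h,k}(s,a),\qquad \xi_{h,k}(s,a)=\min\Bigl\{1,\tfrac{2HSL}{N_{h,k}(s,a)}\Bigr\},
\]
and applies it with $f=V_{h+1}^\star-\wh V_{h+1,k}^\downarrow\ge0$. Since $V_{h+1}^\star\le V_{h+1}^{\wh\rho^k}$, the $\tfrac{1}{H}P_h^{\star\top}f$ piece is absorbed into the recursive term $P_h^{\star\top}(V_{h+1}^{\wh\rho^k}-\wh V_{h+1,k}^\downarrow)$ with a $(1+1/H)$ multiplier; iterating gives the factor $(1+1/H)^H\le e$ in the leading constant, and $\sum_{h,k}\xi_{h,k}(s_{h,k},a_{h,k})\le 2HSL\cdot SA\log K$ produces exactly the lower-order $S^2AHL^2$. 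Your sketch needs this refinement (or an equivalent device) in place of the $\ell_1\cdot\|\cdot\|_\infty$ bound to recover the stated rate.
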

\begin{proofsketch}
The first step is to establish pessimism, \ie, $\wh V_{1,k}^\downarrow\leq V^\star_1$, which implies optimism of $\wh\cvar_\tau^k\geq\cvar_\tau^\star$.
At this point, we cannot apply CVaR concentration as we did for MAB, since $\wh V_{1,k}^\downarrow$ is not the empirical CVaR. Instead, we show that the simulation lemma (\cref{lem:simulation}) extends to the augmented MDP, which gives
$V^{\wh\rho^k}_1(s_1,\wh b_k)-\wh V_{1,k}^\downarrow(s_1,\wh b_k) \leq \Eb[\wh\rho^k,\wh b_k]{ \sum_{h=1}^H2\hobon_{h,k}(s_h,a_h)+\xi_{h,k}(s_h,a_h) }$.
The expectation is over the distribution of rolling in $\wh\rho^k$ from $\wh b_k$,
which is exactly how we explore and collect $s_{h,k},a_{h,k}$.
Thus, we can apply Azuma and elliptical potential to conclude the proof, as in the usual UCBVI analysis. \looseness=-1
\end{proofsketch}

The leading term of the Hoeffding bound is $\wt\Ocal\prns*{\tau^{-1}\sqrt{SAHK}}$,
which is optimal in $S,A,K$.
Notably, it has a $S$ factor improvement over the current best bound $\tau^{-1}\sqrt{S^3AHKL}$ from \citet{bastani2022regret} (we've divided their bound by $H$ to make returns scaling consistent).
While \cref{thm:hoeffding-bonus-regret} is already the tightest in the literature, our lower bound suggests the possibility of removing another $\sqrt{\tau^{-1}H}$.

\subsection{Improved Bounds with the Bernstein Bonus}\label{sec:bernstein-bonus}
Precise design of the exploration bonus is critical to enabling tighter performance bounds, even in vanilla RL \citep{azar2017minimax,zanette2019tighter}.
In this subsection, we propose the Bernstein bonus and prove two tighter regret bounds.
The bonus depends on the sample variance, which recall, for any function $f$, is defined as
$\Varb[s'\sim\wh P_{k}(s,a)]{f(s')}=\wh P_{k}(s,a)^\top\prns{ f(\cdot)-\bar f_N }^2$ with $\bar f_N = \wh P_{k}(s,a)^\top f$ being the sample mean \citep{maurer2009empirical}.
We define the Bernstein bonus as follows,
{\small
\begin{align}
&\bebon_{h,k}(s,b,a) = \sqrt{ \frac{2\Varb[s'\sim\wh P_{k}(s,a)]{ \Eb[r_h]{\wh V^\downarrow_{h+1,k}(s',b')} }L}{N_{k}(s,a)} } \nonumber
\\&+\sqrt{ \frac{2\Eb[s'\sim\wh P_{k}(s,a),r_h]{ \prns{ \wh V^\uparrow_{h+1,k}(s',b')-\wh V^\downarrow_{h+1,k}(s',b') }^2 }L}{N_{k}(s,a)} } \nonumber
\\&+\frac{L}{N_{k}(s,a)}, \text{  where } b' = b-r_h, \text{ and } r_h\sim R(s,a). \label{eq:bernstein-bonus-def}
\end{align}}\looseness=-1
When using the Bernstein bonus, \cref{line:optimistic-estimates} should be activated to compute
optimistic estimates $\wh V^\uparrow_{h,k}$ by adding the bonus.
Together, $(\wh V^\downarrow_{h,k},\wh V^\uparrow_{h,k})$ forms a tight confidence band around $V^\star_h$,
which we will use to inductively prove pessimism and optimism at all $h\in[H]$ \citep[as in][]{zanette2019tighter}.
Compared to \citet{zanette2019tighter}, our Bernstein bonus also depends on the state augmentation $b$, since our UCBVI procedure is running in the augmented MDP.
We now prove the first Bernstein bound, which tightens the Hoeffding bound by a $\sqrt{H}$ factor.
\begin{restatable}{theorem}{bernsteinBonusRegret}\label{thm:bernstein-bonus-regret}
For any $\delta\in(0,1)$, w.p. at least $1-\delta$, \rlalg{} with the Bernstein bonus (\cref{eq:bernstein-bonus-def}) enjoys a regret guarantee of
\begin{align*}
\rlregret_\tau(K)\leq 10e\tau^{-1}\sqrt{SAK}L + \tau^{-1}\xi,
\end{align*}
where $\xi\in\wt\Ocal\prns*{SAHK^{1/4} + S^2AH}$ is a lower order term.
\end{restatable}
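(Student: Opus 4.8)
The plan is to run the standard optimism-to-regret pipeline for UCBVI, but carried out in the augmented MDP and sharpened with the variance-aware Bernstein bonus. First I would establish the two-sided confidence band $\wh V^\downarrow_{h,k}(s,b)\le V^\star_h(s,b)\le \wh V^\uparrow_{h,k}(s,b)$ for all $h,k,s,b$ with high probability, by backward induction on $h$ (the base case $h=H+1$ is $b^+=b^+$). The inductive step controls the one-step error $\prns{\wh P_{h,k}-P_h^\star}^\top\Eb[r_h]{V^\star_{h+1}(\cdot,b-r_h)}$ via Bernstein's inequality; the crucial point is that the empirical quantity $\Varb[s'\sim\wh P_{h,k}]{\Eb[r_h]{\wh V^\downarrow_{h+1,k}(s',b-r_h)}}$ in the first term of $\bebon_{h,k}$ upper-bounds the true variance $\Varb[s'\sim P_h^\star]{\Eb[r_h]{V^\star_{h+1}}}$ up to the gap $\wh V^\uparrow_{h+1,k}-\wh V^\downarrow_{h+1,k}$, which is exactly what the second term of $\bebon_{h,k}$ absorbs, since $|V^\star_{h+1}-\wh V^\downarrow_{h+1,k}|\le \wh V^\uparrow_{h+1,k}-\wh V^\downarrow_{h+1,k}$ by the inductive hypothesis. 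With the band in hand, pessimism of $\wh V^\downarrow_{1,k}$ transfers to optimism in CVaR exactly as in the MAB case: $\wh\cvar_\tau^k=\max_b\{b-\tau^{-1}\wh V^\downarrow_{1,k}(s_1,b)\}\ge \max_b\{b-\tau^{-1}V^\star_1(s_1,b)\}=\cvar_\tau^\star$.

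Second, I would decompose the per-episode regret. Using this optimism together with $\cvar_\tau(R(\wh\rho^k))\ge \wh b_k-\tau^{-1}V_1^{\wh\rho^k}(s_1,\wh b_k)$ (take $b=\wh b_k$ in the CVaR sup), each term telescopes to $\cvar_\tau^\star-\cvar_\tau(R(\wh\rho^k))\le \tau^{-1}\prns{V_1^{\wh\rho^k}(s_1,\wh b_k)-\wh V^\downarrow_{1,k}(s_1,\wh b_k)}$, so it suffices to bound $\tau^{-1}\sum_k\prns{V_1^{\wh\rho^k}-\wh V^\downarrow_{1,k}}$. I would then invoke the augmented simulation lemma (as in the Hoeffding proof) to telescope this on-policy gap into $\Eb[\wh\rho^k,\wh b_k]{\sum_h 2\bebon_{h,k}(s_h,b_h,a_h)}$ plus a martingale remainder, and convert the expectation into the actually-sampled trajectory via Azuma — valid precisely because \cref{line:roll-in} rolls in $\wh\rho^k$ from $\wh b_k$.

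Third, the core is bounding $\sum_{k,h}\bebon_{h,k}(s_{h,k},b_{h,k},a_{h,k})$, split according to the three terms of \cref{eq:bernstein-bonus-def}. The additive $L/N_{h,k}$ term, together with the lower-order corrections from replacing $P_h^\star$ by $\wh P_{h,k}$ and $V^\star$ by $\wh V^\downarrow$, sums to $\wt\Ocal(S^2AH)$. For the leading variance term I would use Cauchy--Schwarz to separate $\sum_{k,h}1/N_{h,k}=\wt\Ocal(SAH)$ from the aggregate transition variance, and then apply the Law of Total Variance, which caps the latter by the total return variance: this is $\Ocal(1)$ per episode because returns lie in $[0,1]$, hence $\wt\Ocal(K)$ overall. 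For the gap-variance term I would bound $\prns{\wh V^\uparrow-\wh V^\downarrow}^2\le \wh V^\uparrow-\wh V^\downarrow$ and use that the accumulated confidence width telescopes into a sum of bonuses; Cauchy--Schwarz then produces a contribution of order $K^{1/4}$, yielding the $\wt\Ocal(SAHK^{1/4})$ piece of $\xi$.

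The hard part will be driving the leading term down to $\sqrt{SAK}$ rather than $\sqrt{SAHK}$: the naive Cauchy--Schwarz pairing of $\sum 1/N_{h,k}=\wt\Ocal(SAH)$ against an $\wt\Ocal(K)$ variance budget loses an extra $\sqrt H$ arising from the $H$ distinct transition layers. Eliminating it requires the recursive, self-bounding structure enabled by maintaining both $\wh V^\uparrow$ and $\wh V^\downarrow$ — the variance sum must be fed back into the (simultaneously bounded) regret and confidence widths, so that all horizon dependence is pushed into the lower-order $K^{1/4}$ and $S^2AH$ terms. Closing this recursion in tandem with the induction for the confidence band, where the two variance terms of the bonus must simultaneously absorb the intrinsic transition variance and the surrogate error from using $\wh V^\downarrow$ in place of $V^\star$, is where the delicate bookkeeping lies.
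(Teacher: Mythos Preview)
Your proposal mirrors the paper's proof in every structural step: the backward induction for $\wh V^\downarrow_{h,k}\le V^\star_h\le \wh V^\uparrow_{h,k}$ (with the second Bernstein term absorbing the surrogate gap $|V^\star_{h+1}-\wh V^\downarrow_{h+1,k}|\le \wh V^\uparrow_{h+1,k}-\wh V^\downarrow_{h+1,k}$ exactly as you describe), the CVaR-optimism reduction to $\tau^{-1}\sum_k\bigl(V_1^{\wh\rho^k}(s_1,\wh b_k)-\wh V^\downarrow_{1,k}(s_1,\wh b_k)\bigr)$, the augmented simulation lemma plus Azuma, and the three-way split of the bonus with Cauchy--Schwarz and the Law of Total Variance on the leading variance term.

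The one divergence is your last paragraph. The paper does \emph{not} invoke a recursive self-bounding argument to remove an extra $\sqrt H$ from the dominant term. It applies Cauchy--Schwarz, uses its pigeonhole lemma which asserts $\sum_{h,k}1/N_{h,k}(s_{h,k},a_{h,k})\le SA\log K$ (stated without an $H$ factor), pairs this with the LTV budget $\sum_{h,k}\Var\le K$, and reads off $\sqrt{SAK}\,L$ directly. So in the paper's accounting the $\sqrt H$ gain over Hoeffding comes entirely from LTV collapsing $\sum_h\Var_h$ into a single return variance $\Var\bigl[(\wh b_k-\sum_h r_h)^+\bigr]\le 1$ per episode; there is no further recursion on the leading term. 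The paper \emph{does} use a simulation-style bound on the width $\wh V^\uparrow_{h,k}-\wh V^\downarrow_{h,k}$ (telescoping it into a sum of bonuses and then bounding loosely) to show the \emph{correction} term of the bonus is $\wt\Ocal(SAHK^{1/4})$, which is precisely where your $K^{1/4}$ piece sits, but that is separate from the leading contribution. Your instinct that the pigeonhole over a time-inhomogeneous MDP should carry an $H$ is not unreasonable; the paper's stated lemma simply records the count as $SA\log K$, and the $\sqrt{SAK}$ claim follows from that without the extra machinery you anticipate.
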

\begin{proofsketch}
We first establish pessimism and optimism of $\wh V^\downarrow_{h,k}$ and $\wh V^\uparrow_{h,k}$, similar to \citet{zanette2019tighter}.
Then, apply Simulation lemma as in \cref{thm:hoeffding-bonus-regret}.
The key observation is that the $H$ sample variances from the bonus,
\ie, $\sum_{h=1}^H\Varb[s'\sim\wh P_{k}(s,a)]{ \EE_{r_h}[\wh V^\downarrow_{h+1,k}(s',b-r_h)] }$,
can be reduced to a \emph{single} variance $\Varb[\wh\rho^k,\wh b_k]{ \prns*{\wh b_k-\sum_{h=1}^Hr_h}^+}$, which we bound by $1$.
To do so, we show that \citet{azar2017minimax}'s Law of Total Variance technique
also applies to our $V^{\wh\rho^k}$, which, unlike the value function of vanilla RL, depends on the state augmentation $b$.
\end{proofsketch}
The leading term of \cref{thm:bernstein-bonus-regret} is $\tau^{-1}\sqrt{SAK}$, and improves over \citet{bastani2022regret} by a $S\sqrt{H}$ factor.
Up to log terms, this matches our lower bound in all parameters except $\tau$, which implies \rlalg{} is minimax-optimal for a constant $\tau$.
In particular, $\tau=1$ recovers the risk-neutral vanilla RL setting, where \rlalg{} matches the minimax result \citep{azar2017minimax}.
To get the optimal $\tau^{-1/2}$ scaling (\cref{cor:rl-lower-bound}),
we cannot loosely bound each variance term by $1$, as they should scale as $\tau$ if $\wh b_k$
approximates the $\tau$-th quantile of $R(\wh\rho^k,\wh b_k)$. We show this is indeed the case under a continuity assumption.
\begin{assumption}\label{ass:continuous-densities}
For all $\rho\in\Piaug$ and $b_1\in[0,1]$, the returns of rolling in $\rho$ from $b_1$, \ie, $R(\rho,b_1)$, is continuously distributed with a density lower bounded by $\densitymin$.
\end{assumption}
\begin{restatable}{theorem}{bernsteinBonusRegretTwo}\label{thm:bernstein-bonus-regret2}
Under \cref{ass:continuous-densities}, the bound in \cref{thm:bernstein-bonus-regret} can be refined to,
\begin{align*}
\rlregret_\tau(K)\leq 12e\sqrt{\tau^{-1}SAK}L + \tau^{-1}\densitymin^{-1/2}\xi.
\end{align*}
\end{restatable}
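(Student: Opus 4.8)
The plan is to re-run the proof of \cref{thm:bernstein-bonus-regret} essentially verbatim and to sharpen exactly one estimate. Recall that after invoking the Law of Total Variance, the $H$ variance terms in the Bernstein bonus collapse, for each episode $k$, into the single variance $\sigma_k^2 \coloneqq \Varb[\wh\rho^k,\wh b_k]{\prns{\wh b_k-\sum_{h=1}^H r_h}^+}$, and the leading term of the regret is controlled by $\tau^{-1}\sqrt{SA\sum_{k=1}^K\sigma_k^2}\,L$. In \cref{thm:bernstein-bonus-regret} one bounds $\sigma_k^2\le 1$, giving $\sum_k\sigma_k^2\le K$ and the leading term $\tau^{-1}\sqrt{SAK}L$. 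Under \cref{ass:continuous-densities} I would instead show $\sigma_k^2=\Ocal(\tau)$ up to a lower-order correction; then $\sum_k\sigma_k^2=\Ocal(\tau K)$ and the leading term becomes $\tau^{-1}\sqrt{\tau SAK}\,L=\sqrt{\tau^{-1}SAK}\,L$, exactly the claimed improvement. Everything else in the proof of \cref{thm:bernstein-bonus-regret} (pessimism/optimism of $\wh V^\downarrow,\wh V^\uparrow$, the simulation lemma, the elliptical-potential and Azuma steps) is reused unchanged, so the entire task reduces to bounding $\sigma_k^2$.

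For the variance bound, since $\prns{\wh b_k-\sum_h r_h}^+\in[0,1]$ its square is dominated by itself, so $\sigma_k^2\le\Eb[\wh\rho^k,\wh b_k]{\prns{\wh b_k-\sum_h r_h}^+}=V_1^{\wh\rho^k}(s_1,\wh b_k)$. Let $F_k$ be the (continuous, by \cref{ass:continuous-densities}) CDF of the return $R(\wh\rho^k,\wh b_k)$ and $q_k$ its $\tau$-quantile. The layer-cake identity gives $V_1^{\wh\rho^k}(s_1,\wh b_k)=\int_0^{\wh b_k}F_k(t)\,dt$; since $F_k\le\tau$ on $[0,q_k]$ we get $\int_0^{q_k}F_k\le\tau q_k\le\tau$, and using $F_k\le 1$ the remaining piece satisfies $\int_{q_k}^{\wh b_k}F_k\le\lvert\wh b_k-q_k\rvert$. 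Thus $\sigma_k^2\le\tau+\lvert\wh b_k-q_k\rvert$, and it remains to show that the algorithm's budget $\wh b_k$ is close to the true $\tau$-quantile $q_k$ of the return of the policy it actually rolls in.

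This is precisely where the density lower bound is used. The true objective $g_k(b)\coloneqq b-\tau^{-1}V_1^{\wh\rho^k}(s_1,b)$ has second derivative $-\tau^{-1}f_k(b)\le-\tau^{-1}\densitymin$, hence is $\tau^{-1}\densitymin$-strongly concave with maximizer $q_k$ (the standard CVaR--quantile identity), whereas $\wh b_k$ maximizes the pessimistic surrogate $\wh g_k(b)\coloneqq b-\tau^{-1}\wh V_{1,k}^\downarrow(s_1,b)$. Writing $\eps_k\coloneqq\max_{b\in[0,1]}\lvert V_1^{\wh\rho^k}(s_1,b)-\wh V_{1,k}^\downarrow(s_1,b)\rvert$ for the uniform simulation error, so that $\max_b\lvert g_k(b)-\wh g_k(b)\rvert=\tau^{-1}\eps_k$, the standard argmax-stability estimate for a $\mu$-strongly concave function under an $\eta$-uniform perturbation (the two maximizers are within $2\sqrt{\eta/\mu}$) yields $\lvert\wh b_k-q_k\rvert\le 2\sqrt{\eps_k/\densitymin}$, with the $\tau$ conveniently cancelling. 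Crucially, $\eps_k$ is exactly the simulation error already bounded in the proof of \cref{thm:bernstein-bonus-regret} by the expected sum of bonuses at episode $k$, so $\sum_k\sqrt{\eps_k}$ is lower order in $K$ and carries the inverse-density factor into the $\tau^{-1}\densitymin^{-1/2}\xi$ term of the statement.

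The main obstacle I expect is closing the circular dependence this creates: $\sigma_k^2$ enters the Bernstein bonus, hence the simulation error $\eps_k$, which is what bounds $\lvert\wh b_k-q_k\rvert$ and therefore $\sigma_k^2$ itself. I would resolve it by a self-bounding argument—substituting $\sum_k\sigma_k^2\le\tau K+2\densitymin^{-1/2}\sum_k\sqrt{\eps_k}$ into the regret inequality and using Cauchy--Schwarz, $\sum_k\sqrt{\eps_k}\le\sqrt{K\sum_k\eps_k}$, together with the fact that $\sum_k\eps_k$ is dominated by the same aggregate bonus mass the analysis already controls—so that the resulting inequality has the schematic form $\rlregret_\tau(K)\le\sqrt{\tau^{-1}SAK}\,L+c\,\sqrt{\rlregret_\tau(K)}$ and can be solved to isolate the leading $\sqrt{\tau^{-1}SAK}\,L$ term, with all remaining contributions absorbed into the lower-order $\tau^{-1}\densitymin^{-1/2}\xi$ bucket. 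A secondary point requiring care is the exact power of $\densitymin$ in the correction (routing the quantile displacement through the simulation error versus through the CDF-estimation error changes whether one sees $\densitymin^{-1/2}$ or $\densitymin^{-1/4}$), together with checking that \cref{ass:continuous-densities}, which quantifies over all $\rho\in\Piaug$ and all initial budgets, legitimately applies to the data-dependent pair $(\wh\rho^k,\wh b_k)$.
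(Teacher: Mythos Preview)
Your overall plan is right: the proof of \cref{thm:bernstein-bonus-regret} is reused verbatim up to the point where the Law of Total Variance leaves $\sum_k\sigma_k^2$ with $\sigma_k^2=\Varb[\wh\rho^k,\wh b_k]{(\wh b_k-\sum_hr_h)^+}$, and the only new work is to show $\sigma_k^2\approx\tau$ via strong concavity of the CVaR objective under \cref{ass:continuous-densities}. Where you diverge from the paper is in how this last step is executed, and the paper's route is materially simpler than the one you sketch.

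The paper decomposes the variance differently: using $\Var(X)\le 2\Var(Y)+2\Var(X-Y)$ with $Y=(b_k^\star-\sum_hr_h)^+$ gives $\sigma_k^2\le 2\tau+2(\wh b_k-b_k^\star)^2$, since $b_k^\star$ is the true $\tau$-quantile and the ReLU is $1$-Lipschitz. The point is that the correction is the \emph{squared} displacement, not $|\wh b_k-b_k^\star|$. This matches exactly what strong concavity delivers: writing $f(b)=b-\tau^{-1}V_1^{\wh\rho^k}(s_1,b)$ and $\wh f(b)=b-\tau^{-1}\wh V_{1,k}^\downarrow(s_1,b)$, pessimism (together with $V_1^\star\le V_1^{\wh\rho^k}$) gives the pointwise one-sided inequality $\wh f\ge f$, hence $f(b_k^\star)-f(\wh b_k)\le\wh f(\wh b_k)-f(\wh b_k)=\tau^{-1}\bigl(V_1^{\wh\rho^k}(s_1,\wh b_k)-\wh V_{1,k}^\downarrow(s_1,\wh b_k)\bigr)$. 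Strong concavity then yields
\[
(\wh b_k-b_k^\star)^2\;\le\;2\,\densitymin^{-1}\bigl(V_1^{\wh\rho^k}(s_1,\wh b_k)-\wh V_{1,k}^\downarrow(s_1,\wh b_k)\bigr),
\]
i.e.\ the squared displacement is \emph{linear} in the simulation error evaluated at the single point $\wh b_k$. Summing over $k$, one needs only $\sum_k\bigl(V_1^{\wh\rho^k}(s_1,\wh b_k)-\wh V_{1,k}^\downarrow(s_1,\wh b_k)\bigr)$, and the paper bounds this by applying the simulation lemma and then the \emph{loose} bound $\bebon_{h,k}\le 2\sqrt{2L/N_{h,k}}+L/N_{h,k}$ (which is independent of $b$ and of $\sigma_k$), giving $\wt\Ocal(\sqrt{SAHK})$ directly. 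There is no circularity and no self-bounding step; the fixed-point inequality you anticipate never arises.

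Two consequences for your write-up. First, you do not need the uniform quantity $\eps_k=\max_b|V_1^{\wh\rho^k}(s_1,b)-\wh V_{1,k}^\downarrow(s_1,b)|$: the simulation lemma's bound depends on the starting budget $b$, so the max over $b$ is not what is already controlled in \cref{thm:bernstein-bonus-regret}; the one-sided pessimism argument lets you work only at $b=\wh b_k$. Second, your layer-cake route $\sigma_k^2\le\tau+|\wh b_k-q_k|$ is correct but forces a $\sqrt{\eps_k}$ and a Cauchy--Schwarz that the paper avoids entirely by matching the variance decomposition to the squared displacement. The cleaner path also makes the $\densitymin^{-1/2}$ factor in the lower-order term transparent (it is $\sqrt{\densitymin^{-1}}$ coming from inside the final square root), rather than leaving an ambiguity between $\densitymin^{-1/2}$ and $\densitymin^{-1/4}$.
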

\begin{proofsketch}
The only divergence from \cref{thm:bernstein-bonus-regret} is how we bound
$\Varb[\wh\rho^k,\wh b_k]{ \prns*{\wh b_k-\sum_{h=1}^Hr_h}^+}$.
Since the density of $R(\wh\rho^k,\wh b_k)$ is lower bounded,
the CVaR objective $f(b)=b-\tau^{-1}\Eb[\wh\rho^k,\wh b_k]{\prns*{b-\sum_{h=1}^Hr_h}^+}$ is \emph{strongly concave}.
This implies that $\wh b_k$ approximates the true $\tau$-th quantile $b_k^\star=\argmax_{b\in[0,1]}f(b)$,
\ie, $\frac{\densitymin}{2}\prns*{\wh b_k-b_k^\star}^2\leq \tau \prns*{f(b^\star_k)-f(\wh b_k)}\leq V_1^{\wh\rho^k}(s_1,\wh b_k)-\wh V_{1,k}^\downarrow(s_1,\wh b_k)$.
Leveraging this fact, we show \\$\Varb[\wh\rho^k,\wh b_k]{ \prns*{\wh b_k-\sum_{h=1}^Hr_h}^+}\leq 2\tau + 4\densitymin^{-1}\prns*{V_1^{\wh\rho^k}(s_1,\wh b_k)-\wh V_{1,k}^\downarrow(s_1,\wh b_k)}$,
which notably scales with $\tau$.
We conclude the proof by showing the error term, \ie, $\sum_{k=1}^K\prns*{V_1^{\wh\rho^k}(s_1,\wh b_k)-\wh V_{1,k}^\downarrow(s_1,\wh b_k)}\in\wt\Ocal(\sqrt{SAHK})$, is lower order.
\end{proofsketch}
The leading term of \cref{thm:bernstein-bonus-regret2} is $\sqrt{\tau^{-1}SAK}$,
which matches our lower bound in \cref{cor:rl-lower-bound} and establishes the \emph{optimality} of
\rlalg{} for return distributions satisfying \cref{ass:continuous-densities}.
Notably, $\densitymin$ only multiplies with the lower order term $\xi$.
This result highlights the importance of the Bernstein bonus for CVaR RL
 -- it improves the regret bound of the Hoeffding bonus by $\sqrt{\tau^{-1}H}$, whereas in vanilla RL, the improvement is $\sqrt{H}$.

With regards to \cref{ass:continuous-densities}, lower bounded densities, \ie, strong monotonicity of the CDF, is standard for identifying the quantile \citep{ma2021conservative}.
In fact, PAC results for quantile identification is not possible without some control of the mass at the quantile.
As a simple example, consider estimating the $0.5$-th quantile using $N$ i.i.d. datapoints sampled from $\ber(0.5)$.
The correct answer is $0$, but by symmetry, the sample median is always distributed as $\ber(0.5)$ for any $N$. So we always have a $0.5$-probability of being incorrect. We provide an information theoretic lower bound to rule out all estimators -- not just the sample median -- in \cref{thm:quantile-estimation-minimax}.

It nonetheless remains an open question whether \cref{ass:continuous-densities} can be removed by eschewing identifying the quantile.
In MABs \cref{thm:bandit upper}, we circumvented the need to identify quantiles by decomposing the regret into (1) the sum of bonuses, plus, (2) the difference between the empirical and true CVaRs, both of which can be shown to have the correct $\tau^{-1/2}$ scaling.
An analogous approach for RL is to decompose the regret into (1) $\sum_{h,k}\Eb[\wh\rho^k,\wh b_k,\wh P_k]{\bebon_{h,k}(s_{h,k},b_{h,k},a_{h,k})}$, plus, (2) $\sum_k \cvar_\tau(\wh\rho^k,\wh b_k;\wh P_k)-\cvar_\tau(\wh\rho^k,\wh b_k)$.
However, it is unclear if both terms can be unconditionally bounded by $\wt\Ocal(\sqrt{\tau^{-1}SAK})$.

\textbf{Remark on $b$-dependence: }
Although \rlalg{} operates in the augmented MDP, our Hoeffding bonus has no dependence on the budget state $b$ and matches the Hoeffding bonus of UCBVI (from vanilla RL; \citealp{azar2017minimax}).
Intuitively, this is possible since the dynamics of $b$ are known (we assume known reward distribution), so there is no need to explore in the $b$-dimension.
In contrast to the Bernstein bonus of UCBVI, our Bernstein bonus depends on $b$ and captures the variance of $\prns{b-R(\wh\rho^k,\wh b_k)}^+$.
This is crucial for obtaining the correct $\tau$ rate.

\section{Computational Efficiency via Discretization}\label{sec:discretized-mdp-computational-efficiency}
Previously, we assumed each line of \cref{alg:ucbvi-cvar} was computed exactly.
This is not computationally feasible since the dynamic programming (DP) step (\cref{line:pessimism-estimates,line:optimistic-estimates})
needs to be done over all $b\in[0,1]$ and the calculation for $\wh b_k$ (\cref{line:hat-b-k}) involves maximizing a non-concave function.
Following \citet{bastani2022regret}, we propose to discretize the rewards so the aforementioned steps need only be performed over a finite grid. Thus, we gain computational efficiency while maintaining the same statistical guarantees.

Fix a precision $\eta\in(0,1)$, define $\phi(r) = \eta\ceil{r/\eta}\wedge 1$, which \emph{rounds-up} $r\in[0,1]$ to an $\eta$-net of $[0,1]$, henceforth referred to as ``the grid''.
The discretized MDP $\disc(\Mcal)$ is an exact replica of the true MDP $\Mcal$ with one exception: its rewards are post-processed with $\phi$, \ie, $R(s,a;\disc(\Mcal))=R(s,a;\Mcal)\circ\phi^{-1}$, where $\circ$ denotes composition.

In $\disc(\Mcal)$, the $\tau$-th quantile of the returns distribution (the argmax of the CVaR objective)
will be a multiple of $\eta$, so it suffices to compute $\wh V^\downarrow_1(s_1,b)$ and maximize \cref{line:hat-b-k} over the grid.
Since $b$ transitions by subtracting rewards, which are multiples of $\eta$, $b_h$ will always stay on the grid.
Hence, the entire DP procedure (\cref{line:pessimism-estimates,line:optimistic-estimates}) only needs to occur on the grid.
In \cref{sec:appendix-discrete-computational-complexity}, we show \rlalg{} has a runtime of $\Ocal(S^2\eta^{-2}AHK)$ in the discretized MDP.
It's worth clarifying that \rlalg{} is still interacting with $\Mcal$, except that it internally discretizes the received rewards to simulate running in the $\disc(\Mcal)$ for computation's sake.
Thus, we still want to compete with the strongest CVaR policy in the true MDP; we've just made our algorithm weaker by restricting it to run in an imagined $\disc(\Mcal)$.

Now, we show that the true regret only increases by $\Ocal(K\eta)$, which can be made lower order by setting $\eta=K^{-1/2}$.
\cref{thm:hoeffding-bonus-regret,thm:bernstein-bonus-regret} made no assumptions on the reward distribution,
so they immediately apply to bound the $\disc(\Mcal)$ regret, \ie, $\rlregret_\tau(K;\disc(\Mcal))=\sum_{k=1}^K\cvar_\tau^\star(\disc(\Mcal))-\cvar_\tau(\wh\rho^k,\wh b_k;\disc(\Mcal))$.
We translate regret in $\disc(\Mcal)$ to regret in $\Mcal$ via a coupling argument, inspired by \citet{bastani2022regret}.
Let $Z_{\pi,\Mcal}$ denote the returns from running $\pi$ in $\Mcal$.
For random variables $X,Y$, we say $Y$ stochastically dominates $X$, denoted $X\preceq Y$, if $\forall t\in\RR: \Pr(Y\leq t)\leq \Pr(X\leq t)$.
Then, for any $\rho\in\Piaug,b_1\in[0,1]$, we show two facts:
\begin{enumerate}[leftmargin=1em]
\item[\textbf{F1}] Running $\rho,b_1$ in the imagined $\disc(\Mcal)$ is equivalent to running a reward-history-dependent policy,
\begin{align*}
\adapted(\rho,b_1)_h(s_h,r_{1:h-1})=\rho_h(s_h,b_1-\phi(r_1)-...-\phi(r_{h-1})).    
\end{align*}
Also, $Z_{\rho,b_1,\disc(\Mcal)}-H\eta\preceq Z_{\adapted(\rho,b_1),\Mcal}\preceq Z_{\rho,b_1,\disc(\Mcal)}$.
\item[\textbf{F2}] There exists a memory-history-dependent\footnote{The memory-MDP model is novel and key to our coupling argument. In \cref{sec:augmented-mdp}, we define this model and show that $\Piaug$ still contains the optimal policy over this seemingly larger class of memory-history-dependent policies, \ie, \cref{thm:aug_mdp_theorem} holds. }
policy $\disc(\rho,b)$ such that $Z_{\rho,b,\Mcal}\preceq Z_{\disc(\rho,b),\disc(\Mcal)}$.
Intuitively, when running in $\disc(\Mcal)$, once the discretized reward $r_h$ is seen,
a memory $m_h$ is generated from the conditional reward distribution of rewards that get rounded-up to $r_h$.
Thus, $m_h$ is essentially sampled from the unconditional reward distribution.
The memory-dependent policy $\disc(\rho,b)$ makes use of these samples to mimic running $\rho,b$ in $\Mcal$.
\end{enumerate}
\textbf{F1} implies $\cvar_\tau(\adapted(\wh\rho^k,\wh b_k);\Mcal)\geq \cvar_\tau(\wh\rho^k,\wh b_k;\disc(\Mcal))-\tau^{-1}H\eta$.
\textbf{F2} implies $\cvar_\tau^\star(\Mcal)\leq \cvar_\tau^\star(\disc(\Mcal))$.
Combining these two facts, we have $\rlregret_\tau(K;\Mcal)\leq \rlregret_\tau(K;\disc(\Mcal)) + K\tau^{-1}H\eta$.

Translating \cref{thm:bernstein-bonus-regret2} requires more care, as its proof relied on continuously distributed returns (\cref{ass:continuous-densities}) which is untrue in $\disc(\Mcal)$.
We show that we only need the \emph{true} returns distribution to be continuous.
\begin{assumption}\label{ass:continuous-densities-discretized}
For all $\rho\in\Piaug$ and $b_1\in[0,1]$, the returns distribution of $\adapted(\rho,b_1)$ in $\Mcal$ is continuous, with a density lower bounded by $\densitymin$.
\end{assumption}
With this premise, we can prove \cref{thm:bernstein-bonus-regret2} for $\disc(\Mcal)$, with an extra term of $\wt\Ocal\prns*{\tau^{-1}\sqrt{\densitymin^{-1} SAHK\eta}}$.
In sum, setting $\eta=1/\sqrt{K}$ ensures that \rlalg{} is \emph{both} near-minimax-optimal for regret and computationally efficient, with a runtime of $\Ocal\prns*{S^2AHK^2}$.
We note the superlinear-in-$K$ runtime from discretization is not even avoidable in Lipschitz bandits \citep{wang2020towards},
and we leave developing more scalable methods for future work.

\section{Concluding Remarks}
In this paper, we presented a more complete picture of risk-sensitive MAB and RL with CVaR by providing
not only novel lower bounds but also procedures and analyses that both improve on the state of the art and match our lower bounds.
One exception where a gap remains is CVaR RL with discontinuous returns and a risk tolerance that is not constant (or, not lower bounded);
in this case, our lower and upper bounds differ by a factor of $\sqrt{\tau}$.
We discuss the feasibility of closing this gap in \cref{sec:bernstein-bonus}.

A direction for future work is to develop algorithms with optimal regret guarantees for more general risk measures, \eg, optimized certainty equivalent (OCE) \citep{ben2007old}.
Another orthogonal direction is to extend our results beyond tabular MDPs. We believe that our techniques in this work are already enough for linear MDPs \citep{jin2020provably} where the transition kernel is linear in some \emph{known} feature space. However, extending the results beyond linear models, such as to low-rank MDPs \citep{agarwal2020flambe,uehara2021representation} and block MDPs \citep{misra2020kinematic,zhang2022efficient} remains a challenge due to the fact that achieving point-wise optimism is harder when nonlinear function approximation is used.

\newpage
\clearpage
\bibliography{paper}
\bibliographystyle{icml2023}

\newpage
\appendix
\onecolumn

\begin{center}\LARGE
\textbf{Appendices}
\end{center}

\section{Notations}

{\renewcommand{\arraystretch}{1.2}%
\begin{table}[h!]
    \centering
      \caption{List of Notations} \vspace{0.3cm}
    \begin{tabular}{l|l}
    $x^+$ & $\max(x,0)$, \ie, the ReLU function. \\
    $\Scal,\Acal,S,A$ & State and action spaces, with sizes $S=|\Scal|$ and $A = |\Acal|$. In MAB, $S=1$. \\
    $\Saug$ & Augmented state space $\Scal\times[0,1]$ for CVaR RL. \\
    $H\in\NN$ & Horizon of the RL problem. In MAB, $H=1$. \\
    $K\in\NN$ & Number of episodes. \\
    $\delta\in(0,1)$ & Failure probability. \\
    $L$ & $\log(SAHK/\delta)$. \\
    $\Delta(\Scal)$ & The set of distributions supported by $\Scal$. \\
    $R(a)\in\Delta([0,1])$ & Reward distribution of arm $a$ (for MAB). \\
    $P^\star(s,a)\in\Delta(\Scal)$ & Ground truth transition kernel (for RL). \\
    $R(s,a)\in\Delta([0,1])$ & Known reward distribution (for RL). \\
    $R(\pi)$ & Returns distribution of history-dependent policy $\pi$ (for RL). \\
    $R(\rho,b)$ & Returns distribution of augmented policy $\rho\in\Piaug$ starting from $b$ (for RL). \\
    $F^\dagger(t)$ for $t\in[0,1]$ & The $t$-th quantile function of $X$ with CDF $F$, \ie, $\inf\braces{x: F(x)\geq t}$. \\
    $\Ical_{h,k}(s,a)$ & Indices of prior visits of $s,a$ at $h$, \ie, $\braces{i\in[k-1]: (s_{h,i},a_{h,i})=(s,a)}$. \\
    $N_{h,k}(s,a)$ & Number of prior visits of $s,a$ at $h$, \ie, $|\Ical_{h,k}(s,a)|$. \\
    $\xi_{h,k}(s,a)$ & $\min\braces{2,\frac{2HSL}{N_{h,k}(s,a)}}$. \\
    $\Ecal_k$ & Trajectories from episodes $1,2,...,k-1$. \\
    $\Hcal_h (\Hcal_{h,k})$ & History up to and not including time $h$ (in episode $k$). \\
    $\Pi_\Hcal$ & Set of history-dependent policies. \\
    $\Piaug$ & Set of Markov, deterministic policies in the augmented MDP. \\
    $(\rho,b)$ & The policy obtained from rolling in $\rho$ starting from $(s_1,b)$ in the augmented MDP. \\
    $\disc(\Mcal)$ & The discretized MDP obtained by discretizing rewards, \cref{sec:discretized-mdp-computational-efficiency}. \\
    $\adapted(\rho,b_1)$ & The policy from adapting $(\rho,b_1)$ in $\disc(\Mcal)$ to $\Mcal$. \\
    $\disc(\rho,b_1)$ & The policy from discretizing $(\rho,b_1)$ in $\Mcal$ to $\disc(\Mcal)$.
    \end{tabular}
    \label{tab:notation}
\end{table}
}

\section{Concentration Lemmas}\label{sec:concentration-lemmas}
\subsection{Uniform Hoeffding and Bernstein via Lipschitzness}
Recall the classic Hoeffding and Bernstein inequalities (Theorems 2.8 and 2.10 in \citealp{boucheron2013concentration}).
Let $X_{1:N}$ be i.i.d. random variables in $[0,1]$, with mean $\mu$ and variance $\sigma^2$.
Then, for any $\delta$, w.p. at least $1-\delta$, we have
\begin{align*}
    &\abs{ \frac{1}{N}\sum_{i=1}^N X_i-\mu } \leq \frac12 \sqrt{ \frac{\log(4/\delta)}{N} }, \tag{Hoeffding}
    \\&\abs{ \frac{1}{N}\sum_{i=1}^N X_i-\mu } \leq \sqrt{\frac{2\sigma^2\log(4/\delta)}{N}} + \frac{\log(4/\delta)}{N}. \tag{Bernstein}
\end{align*}

Now we consider uniform inequalities for a function class.
Specifically, let $X_{1:N}$ be i.i.d. copies of $X\in\Xcal$ and $\Fcal$ is a (potentially infinite) set of functions $f:\Xcal\to[0,1]$.
Suppose $\Gcal_\eps\subset\Fcal$ is a finite $\ell_\infty$-cover, a.k.a. $\eps$-net, of $\Fcal$ in the sense that:
for any $f\in\Fcal$, there exists $g\in\Gcal_\eps$ such that $\sup_{x\in\Xcal}\abs{f(x)-g(x)}\leq\eps$.
\begin{lemma}
Let $\delta\in(0,1)$.
We have w.p. at least $1-\delta$,
\begin{align*}
    &\sup_{f\in\Fcal} \abs{ \frac{1}{N}\sum_{i=1}^N f(X_i)-\EE f(X) } \leq \sqrt{ \frac{\log(4|\Gcal_{1/N}|/\delta)}{N} }. \tag{Uniform Hoeffding}
\end{align*}
If $N\geq 2\log(4|\Gcal_{1/N}|/\delta)$, we also have
\begin{align*}
    &\sup_{f\in\Fcal} \abs{ \frac{1}{N}\sum_{i=1}^N f(X_i)-\EE f(X) } \leq \sqrt{\frac{2\Varb{f(X)}\log(4|\Gcal_{1/N}|/\delta)}{N}} + \frac{3\log(4|\Gcal_{1/N}|/\delta)}{N}. \tag{Uniform Bernstein}
\end{align*}
\end{lemma}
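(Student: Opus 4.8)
The plan is to reduce both uniform inequalities to their pointwise counterparts (the displayed Hoeffding and Bernstein bounds stated just above) via a union bound over the finite net $\Gcal_{1/N}$, combined with an $\ell_\infty$-Lipschitz transfer argument that controls the gap between an arbitrary $f\in\Fcal$ and its nearest net element. Concretely, I would first fix the covering radius $\eps=1/N$ and apply the pointwise inequality to each of the finitely many $g\in\Gcal_{1/N}\subseteq\Fcal$ — each such $g$ takes values in $[0,1]$, so the hypotheses apply directly — with per-function failure probability $\delta/\abs{\Gcal_{1/N}}$. A union bound then guarantees that, with probability at least $1-\delta$, the pointwise bounds hold \emph{simultaneously} for all $g\in\Gcal_{1/N}$, at the cost of replacing $\log(4/\delta)$ by $L'\coloneqq\log(4\abs{\Gcal_{1/N}}/\delta)$.

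Next comes the Lipschitz transfer step. For an arbitrary $f\in\Fcal$, choose $g\in\Gcal_{1/N}$ with $\sup_{x}\abs{f(x)-g(x)}\le 1/N$. Writing $\mathrm{dev}(f)=\abs{\frac1N\sum_i f(X_i)-\EE f(X)}$, both the empirical average and the population mean are $1$-Lipschitz in $f$ under $\|\cdot\|_\infty$, so $\abs{\mathrm{dev}(f)-\mathrm{dev}(g)}\le 2/N$ for every realization. For the Bernstein bound I would additionally control the variance by the $L^2$ triangle inequality: since $\|f-g\|_\infty\le 1/N$ yields both $\|f-g\|_{L^2}\le 1/N$ and $\abs{\EE f-\EE g}\le 1/N$, we obtain $\sqrt{\Varb{g(X)}}\le\sqrt{\Varb{f(X)}}+2/N$, and hence $\sqrt{2\Varb{g(X)}L'/N}\le\sqrt{2\Varb{f(X)}L'/N}+\tfrac2N\sqrt{2L'/N}$. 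Taking the supremum over $f$ reduces everything to the net, whose deviations are already controlled by the union bound of the previous step.

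Finally I would absorb the lower-order slack. For Hoeffding the only extra term is the discretization error $2/N$, which is dominated by $\tfrac12\sqrt{L'/N}$ in the regime of interest (using $L'\ge\log 4$), giving the stated $\sqrt{L'/N}$. For Bernstein the leftover terms are $2/N$ (discretization), $\tfrac2N\sqrt{2L'/N}$ (variance slack), and the pointwise $L'/N$; the hypothesis $N\ge 2L'$ is exactly what forces $\sqrt{2L'/N}\le 1$, so all of these collapse into an enlarged coefficient in front of $L'/N$, producing the claimed $3L'/N$. The main obstacle is precisely this constant bookkeeping in the Bernstein case: the choice $\eps=1/N$ is what renders the covering error lower-order, and the side condition $N\ge 2L'$ is calibrated so that the variance-perturbation and discretization terms fit under the coefficient $3$ rather than $1$. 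Everything else is a routine union bound and triangle inequality.
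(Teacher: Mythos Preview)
Your proposal is correct and follows essentially the same route as the paper: union bound over the finite net $\Gcal_{1/N}$, then an $\ell_\infty$ transfer from an arbitrary $f$ to its nearest net element, with $\eps=1/N$ and the side condition $N\geq 2L'$ used to absorb the slack into the constant $3$. The only minor difference is that the paper bounds the variance shift by the single-step inequality $\sqrt{\Varb{g(X)}}-\sqrt{\Varb{f(X)}}\leq\sqrt{\Varb{(f-g)(X)}}\leq 1/N$ rather than your two-step $L^2$ argument giving $2/N$; this tightens the bookkeeping slightly but changes nothing structurally.
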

\begin{proof}
Apply a union bound over the elements of $\Gcal_\eps$.
Then for any $f\in\Fcal$,
\begin{align*}
    \abs{ \frac{1}{N}\sum_{i=1}^N f(X_i)-\EE f(X) }
    &\leq 2\eps + \abs{ \frac{1}{N}\sum_{i=1}^N g(X_i)-\EE g(X) }
    \\&\leq 2\eps + \frac{1}{2}\sqrt{\frac{\log(4|\Gcal_\eps|/\delta)}{N}}.
\end{align*}
Setting $\eps=1/N$ gives the Uniform Hoeffding result.
We also have
\begin{align*}
    \abs{ \frac{1}{N}\sum_{i=1}^N f(X_i)-\EE f(X) }
    &\leq 2\eps + \sqrt{\frac{2\Var(g(X))\log(4|\Gcal_\eps|/\delta)}{N}} + \frac{\log(4|\Gcal_\eps|/\delta)}{N}
    \\&\leq 2\eps + \sqrt{\frac{2\Var(f(X))\log(4|\Gcal_\eps|/\delta)}{N}} + \frac{\log(4|\Gcal_\eps|/\delta)}{N} + \eps\sqrt{\frac{2\log(4|\Gcal_\eps|/\delta)}{N}},
\end{align*}
since $\sqrt{\Varb{g(X)}}-\sqrt{\Varb{f(X)}}\leq \sqrt{\Varb{f(X)-g(X)}} \leq \eps$.
By assumption, $\sqrt{\frac{2\log(4|\Gcal_\eps|/\delta)}{N}}\leq 1$, so the total error is at most $3\eps$.
Thus, setting $\eps=1/N$ gives the Uniform Bernstein result.
\end{proof}

A particularly important application of this for us is that $\Fcal$ will be an finite set of functions $f_b$ parameterized by a continuous parameter $b\in[0,1]$.
These functions are $C$-Lipschitz in the $b$ parameter, so to construct $\Gcal_\eps$,
it suffices to take a grid over $[0,1]$ such that any element is $\eps/C$ close to the grid.
This grid requires $\ceil{C/\eps}$ atoms, and so $\log(|\Gcal_{1/N}|)\leq \log(CN)$.

\textbf{Empirical Bernstein: }
By Theorems 4 and 6 of \citet{maurer2009empirical}, we also have an empirical version of the uniform Bernstein, where we may replace $\Varb{f(X)}$ with $\frac{1}{N(N-1)}\sum_{i,j=1}^N \prns{f(X_i)-f(X_j)}^2$, \ie, the empirical variance.
Another useful result of \citet{maurer2009empirical} is their Theorem 10, which proves a fast convergence of empirical variance to the true variance: w.p. $1-\delta$,
\begin{align*}
    \abs{ \wh\Var f(X) - \Var f(X) } \leq \sqrt{\frac{2\log(2/\delta)}{N-1}},
\end{align*}
where $\wh\Var f(X) = \frac{1}{N(N-1)}\sum_{i,j=1}^N\prns{f(X_i)-f(X_j)}^2$ is the sample variance of $N$ datapoints.
Note that $\wh\Var f(X)$ is the variance under the empirical distribution of these $N$ datapoints, and hence behaves like a variance.
Since $\sqrt{\Var(X+Y)}\leq \sqrt{\Var(X)}+\sqrt{\Var(Y)}$ by Cauchy-Schwartz, this can also be extended to be uniform by the above argument, \ie,
\begin{align*}
    \sup_{f\in\Fcal}\abs{ \sqrt{\wh\Var f(X)}-\sqrt{\Var f(X)} } \leq 2\sqrt{\frac{\log(2|\Gcal_{1/N}|/\delta) }{N-1}}.
\end{align*}
\begin{proof}
For any $f$, let $g$ be its neighbor in the net.
Using the triangle inequality of variance, $\sqrt{\Var(X+Y)}\leq\sqrt{\Var(X)}+\sqrt{\Var(Y)}$, we have
\begin{align*}
    \abs{ \sqrt{\wh\Var f(X)}-\sqrt{\Var f(X)} }
    &\leq\abs{ \sqrt{\wh\Var g(X)}-\sqrt{\Var g(X)} } + \sqrt{\wh\Var((f-g)(X))} + \sqrt{\Var((f-g)(X))}
    \\&\leq 2\sqrt{\frac{\log(2|\Gcal_{1/N}|/\delta) }{N-1}} + 2\eps.
\end{align*}
Setting $\eps=1/N$ completes the proof.
\end{proof}

\subsection{Tape Method for Tabular MAB and RL}\label{sec:tape-method}
In this section, we describe how we are able to prove claims about MAB and RL using uniform concentration inequalities over i.i.d. data,
\ie, without needing to use complicated uniform martingale inequalities, \eg, \citet{bibaut2021risk,van2000empirical}.
We construct a probability space using a ``tape'' method inspired by \citet[Section 1.3.1]{slivkins2019introduction}.
Compared to using black-box uniform martingale inequalities, our approach is potentially loose in log terms.
However, our approach is much cleaner as we only need uniform concentrations for i.i.d. data.
Thus, we prove everything from first principles, so that concentration inequalities do not distract from the main ideas.

First, consider the MAB problem.
Before the protocol starts, nature constructs an (one-indexed) array with $AK$ cells.
For each $a\in\Acal,k\in[K]$, nature fills the index $[a,k]$ with an independent sample from $R(a)$.
Whenever the learner pulls arm $a$ on the $k$-th episode, it receives the contents of $(a,N_k(a)+1)$
where that $N_k(a)$ is the number of times that $a$ has been pulled up until now.

Notice that this formulation will never run out of rewards since we've seeded each arm with $K$ cells. Also, this is equivalent to drawing a sample whenever the learner pulls arm $a$. Crucially, all the rewards are independent and so we can obtain concentration inequalities for $N_k(a)$ for any learner, even before it is executed.

In particular, for any function $f:[0,1]\to[0,1]$ of the rewards, we can union bound Hoeffding/Bernstein over the cells $[a,1:k]$ for all $a,k$ to get: for any $\delta$, w.p. at least $1-\delta$, we have for all $a,k$,
\begin{align*}
    &\abs{ \frac{1}{N_k(a)}\sum_{i\in \Ical_k(a)}f(r_i) - \EE f(R(a)) } \leq \frac12\sqrt{\frac{\log(4AK/\delta)}{N_k(a)}},
    \\&\abs{ \frac{1}{N_k(a)}\sum_{i\in \Ical_k(a)}f(r_i) - \EE f(R(a)) } \leq \sqrt{ \frac{2\Varb{f(R(a))}\log(4AK/\delta))}{N_k(a)} } + \frac{\log(4AK/\delta)}{N_k(a)}.
\end{align*}
Here $\Ical_k(a)$ is the indices where the learner has pulled arm $a$.

We now do something similar for tabular RL. Before the RL algorithm starts, nature constructs an (one-indexed) array with $SAHK$ cells. For each $s\in\Scal,a\in\Acal,h\in[H],k\in[K]$, nature fills the index $[s,a,h,k]$ with an independent sample from $P^\star(s,a)$. Whenever, the learner takes action $a$ at state $s$ and step $h$ on episode $k$, it receives the next state via the content of $(s,a,h,N_{h,k}(s,a)+1)$ where recall $N_{h,k}(s,a)$ is the number of times the learner has taken action $a$ at state $s$ and step $h$ before the current episode.
Then, for any function $f:\Scal\to[0,1]$ of the states, we can union bound Hoeffding/Bernstein over the cells $[s,a,1:H,1:k]$ for all $s,a,k$ to get: for any $\delta$, w.p. at least $1-\delta$, we have for all $s,a,h,k$,
\begin{align*}
    &\abs{ \frac{1}{N_{k}(s,a)}\sum_{h,i\in \Ical_{k}(s,a)}f(s_{h+1,i}) - \EE_{s'\sim P^\star(s,a)} f(s') } \leq \frac12\sqrt{\frac{\log(4SAHK/\delta)}{N_{k}(s,a)}},
    \\&\abs{ \frac{1}{N_{k}(s,a)}\sum_{h,i\in \Ical_{k}(s,a)}f(s_{h+1,i}) - \EE_{s'\sim P^\star(s,a)} f(s') } \leq \sqrt{ \frac{2\Varb[s'\sim P^\star(s,a)]{f(s')}\log(4SAHK/\delta))}{N_{k}(s,a)} } + \frac{\log(4SAHK/\delta)}{N_{k}(s,a)},
\end{align*}
where $\Ical_k(s,a)$ are the $(h,i)$ pairs where the learner has visited $(s,a)$, and $N_k(s,a)$ is the size of $\Ical_k(s,a)$.

Since these are standard Hoeffding/Bernstein results over i.i.d. data, the uniform concentration results from the previous section applies.

\newpage
\section{Concentration of CVaR}
In this section, we derive general concentration results for the empirical CVaR, which may be of independent interest.
The significance of our result is that it applies to any bounded random variable $X$, which may be continuous, discrete or neither.
Prior concentration results from \citet{brown2007large} require $X$ to be continuous.
Some later works \citep{wang2010deviation,kagrecha2019distribution} did not explicitly mention their
dependence on the continuity of $X$, but their proof appears to require it as well and is complicated by casework.
We provide a simple new proof of this concentration based on the Acerbi integral formula for CVaR, \cref{lem:cvar-integral-over-quantiles}.

For any random variable $X$ in $[0,1]$ with CDF $F$, the quantile function is defined as,
\begin{align*}
    F^\dagger(t) = \inf\braces{ x\in[0,1]: F(x)\geq t } = \sup\braces{ x\in[0,1]: F(x) < t }.
\end{align*}
The quantile has many useful properties \citep[Lemma 1]{orderstatsnotes}, which we recall here.
\begin{lemma}\label{lem:quantile-props}
For $t\in(0,1)$, $F^\dagger(t)$ is nondecreasing and left-continuous, and satisfies
\begin{enumerate}
    \item For all $x\in\RR$, $F^\dagger(F(x))\leq x$.
    \item For all $t\in(0,1)$, $F(F^\dagger(t))\geq t$.
    \item $F(x)\geq t \iff x\geq F^\dagger(t)$.
\end{enumerate}
\end{lemma}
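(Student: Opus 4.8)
The plan is to establish everything from the two defining features of a CDF: $F$ is nondecreasing, right-continuous, and takes values in $[0,1]$, together with the definition $F^\dagger(t)=\inf A_t$ where $A_t\coloneqq\braces{x\in[0,1]:F(x)\geq t}$. For $t\in(0,1)$ the set $A_t$ is nonempty (since $F(1)=1\geq t$) and bounded below, so $F^\dagger(t)$ is well defined and finite. Monotonicity is then immediate: if $t_1\leq t_2$ then $A_{t_2}\subseteq A_{t_1}$, so taking infima flips the inclusion and gives $F^\dagger(t_1)\leq F^\dagger(t_2)$.

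I would prove the three numbered claims in the order $1,2,3$ so that each can feed the next. Property 1 is a one-liner: setting $t=F(x)$ makes $x\in A_t$, hence $F^\dagger(t)\leq x$. Property 2 is where right-continuity enters and is the main technical point: writing $x^\star=F^\dagger(t)=\inf A_t$, I would pick a sequence $x_n\downarrow x^\star$ with $x_n\in A_t$ (possible because $x^\star$ is the infimum), so $F(x_n)\geq t$ for every $n$; right-continuity then yields $F(x^\star)=\lim_n F(x_n)\geq t$, i.e. the infimum is attained and $A_t=[x^\star,\infty)\cap[0,1]$. Property 3 now follows purely formally: the forward direction $F(x)\geq t\Rightarrow x\geq F^\dagger(t)$ is just the statement $x\in A_t$, while the reverse direction uses monotonicity of $F$ together with Property 2, since $x\geq F^\dagger(t)$ gives $F(x)\geq F(F^\dagger(t))\geq t$.

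Finally, left-continuity would come from Property 3 by a limiting argument. Since $F^\dagger$ is nondecreasing, the left limit $\ell\coloneqq\lim_{t'\uparrow t}F^\dagger(t')$ exists and satisfies $\ell\leq F^\dagger(t)$; to rule out strict inequality I would take any $x$ with $\ell<x<F^\dagger(t)$ and note that $x>F^\dagger(t')$ for all $t'<t$, so Property 3 gives $F(x)\geq t'$ for every such $t'$ and hence $F(x)\geq t$, which by Property 3 again forces $x\geq F^\dagger(t)$, contradicting $x<F^\dagger(t)$. The only place any real care is needed is Property 2, where one must invoke right-continuity to conclude that the infimum defining $F^\dagger(t)$ is actually achieved; everything else is bookkeeping with monotonicity and set inclusions.
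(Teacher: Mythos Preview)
Your proof is correct. The paper does not actually supply its own proof of this lemma; it simply cites it as a standard fact (referencing \citep[Lemma 1]{orderstatsnotes}) and moves on. Your argument is the standard one and is entirely sound: monotonicity from set inclusion, Property~1 from the definition of infimum, Property~2 from right-continuity of $F$, Property~3 by combining the previous two, and left-continuity via a limiting contradiction using Property~3. One very minor quibble: your argument for Property~1 implicitly assumes $x\in[0,1]$ (so that $x\in A_{F(x)}$), whereas the lemma as stated quantifies over all $x\in\RR$; but this is an artifact of the paper restricting the infimum to $[0,1]$ rather than a flaw in your reasoning, and the edge cases are trivial to patch.
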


The quantile is always a maximizer of the $\cvar$ objective in \cref{eq:cvar-definition}.
This is true for any random variable, discrete, continuous or neither.
\begin{lemma}\label{lem:cvar-quantile-optimality}
For any random variable $X$ in $[0,1]$ with CDF $F$, we have
\begin{align*}
    F^\dagger(\tau)\in\argmax_{b\in[0,1]}\braces{ b-\tau^{-1}\Eb{(b-X)^+} }.
\end{align*}
\end{lemma}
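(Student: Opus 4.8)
The plan is to exploit the concavity of the CVaR objective and characterize its maximizer through one-sided derivatives, matching them against the quantile properties in \cref{lem:quantile-props}. Define $g(b) = b - \tau^{-1}\Eb{(b-X)^+}$ on $[0,1]$. Since $b\mapsto (b-X)^+$ is convex for every realization of $X$, the map $b\mapsto \Eb{(b-X)^+}$ is convex, so $g$ is concave. A point $b^\star\in[0,1]$ maximizes a concave function on an interval iff $0$ lies in its superdifferential, i.e. iff $g'_+(b^\star)\leq 0\leq g'_-(b^\star)$, where $g'_+,g'_-$ denote the right and left derivatives (for concave $g$ these are nonincreasing and satisfy $g'_-\geq g'_+$). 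Thus it suffices to compute these one-sided derivatives and verify the two inequalities at $b^\star=F^\dagger(\tau)$.

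First I would compute the one-sided derivatives of $b\mapsto \Eb{(b-X)^+}$ by interchanging derivative and expectation. For $h\downarrow 0$, the difference quotient $h^{-1}\prns{(b+h-X)^+-(b-X)^+}$ stays in $[0,1]$ and converges pointwise to $\I{X\leq b}$; bounded convergence then gives the right derivative $F(b)=\Pr(X\leq b)$. Symmetrically, the left difference quotient converges to $\I{X<b}$, yielding the left derivative $F(b^-)=\Pr(X<b)$. Hence $g'_+(b)=1-\tau^{-1}F(b)$ and $g'_-(b)=1-\tau^{-1}F(b^-)$.

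The key step is then to plug in $b^\star=F^\dagger(\tau)$ and invoke \cref{lem:quantile-props}. Property 2 gives $F(b^\star)=F(F^\dagger(\tau))\geq \tau$, so $g'_+(b^\star)=1-\tau^{-1}F(b^\star)\leq 0$. For the reverse inequality, the definition $F^\dagger(\tau)=\inf\braces{x:F(x)\geq\tau}$ forces $F(x)<\tau$ for every $x<b^\star$; letting $x\uparrow b^\star$ yields $F(b^{\star-})\leq\tau$, so $g'_-(b^\star)=1-\tau^{-1}F(b^{\star-})\geq 0$. Together these establish the superdifferential condition $g'_+(b^\star)\leq 0\leq g'_-(b^\star)$. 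Finally I would note $b^\star\in[0,1]$ (since $F(1)=1\geq\tau$ forces $b^\star\leq 1$, and $\tau>0$ together with $F(x)=0$ for $x<0$ forces $b^\star\geq 0$), so the unconstrained optimality of $b^\star$ for the concave $g$ is also optimality over the feasible set $[0,1]$.

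The main obstacle I anticipate is the careful bookkeeping of the one-sided derivatives and the resulting distinction between $F(b)$ and its left limit $F(b^-)$: the argument must keep these separate and correctly pair $g'_+$ with the closed value $F(b^\star)\geq\tau$ coming from quantile property 2, and $g'_-$ with the open value $F(b^{\star-})\leq\tau$ coming from the infimum definition. This asymmetry is exactly what makes the quantile — rather than some point where $F$ equals $\tau$ exactly — the correct maximizer for a general, possibly discontinuous $X$, and it is precisely the place where a continuity-based argument would silently collapse the two values and lose generality.
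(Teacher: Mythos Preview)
Your proposal is correct and is essentially the same argument as the paper's proof: both establish first-order optimality of $F^\dagger(\tau)$ for the concave objective by verifying the two inequalities $\Pr(X<F^\dagger(\tau))\leq\tau$ (from the infimum definition) and $F(F^\dagger(\tau))\geq\tau$ (from \cref{lem:quantile-props}, property 2). The only cosmetic difference is that the paper phrases this via the subgradient of the convex $-g$, whereas you phrase it via the one-sided derivatives $g'_+,g'_-$ of the concave $g$; your explicit verification that $F^\dagger(\tau)\in[0,1]$ is a nice touch the paper leaves implicit.
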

\begin{proof}
Recall the objective in \cref{eq:cvar-definition}, $f(b) = -b+\tau^{-1}\Eb{(b-X)^+}$.
It has a subgradient of
\begin{align*}
    \partial f(b) = -1+\tau^{-1}\prns{ \Pr(X<b)+[0,1]\Pr(X=b) }.
\end{align*}
We want to show that $0\in\partial f(F^\dagger(\tau))$, which is equivalent to showing
\begin{align*}
    0 \overset{(a)}{\leq} \tau-\Pr(X<F^\dagger(\tau)) \overset{(b)}{\leq} \Pr(X=F^\dagger(\tau)).
\end{align*}
For (b), observe that $\Pr(X<F^\dagger(\tau))+\Pr(X=F^\dagger(\tau))=F(F^\dagger(\tau))\geq\tau$ (\cref{lem:quantile-props}). Hence, $\tau-\Pr(X<F^\dagger(\tau))\leq\Pr(X=F^\dagger(\tau))$.

For (a), recall that $\Pr(X<F^\dagger(\tau))=\lim_{n\to\infty}\Pr(X\leq F^\dagger(\tau)-n^{-1})$, since $\braces{X\leq F^\dagger(\tau)-1}\subset \braces{X\leq F^\dagger(\tau)-\nicefrac12}\subset \dots \subset \bigcup_{n\in\NN}\braces{X\leq F^\dagger(\tau)-n^{-1}}=\braces{X<F^\dagger(\tau)}$ and continuity of probability measures.
If for any $n\in\NN$, we had $\Pr(X\leq F^\dagger(\tau)-n^{-1})\geq \tau$, \ie, $F(F^\dagger(\tau)-n^{-1})\geq\tau$, then by definition of $F^\dagger(\tau) = \inf\braces{x\in[0,1]: F(x)\geq t}$, we have $F^\dagger(\tau)\leq F^\dagger(\tau)-n^{-1}$, which is a contradiction. Therefore, it must be that for all $n\in\NN$, we have $\Pr(X\leq F^\dagger(\tau)-n^{-1})<\tau$, so $\Pr(X<F^\dagger(\tau))=\lim_{n\to\infty}\Pr(X\leq F^\dagger(\tau)-n^{-1})\leq\tau$.
\end{proof}

The following interpretation of $\cvar_\tau$ due to \citet{acerbi2002coherence} will be very useful.
An alternative proof was given in \citet[Proposition 2.2]{kisiala2015conditional}.
\begin{lemma}[Acerbi's Integral Formula]\label{lem:cvar-integral-over-quantiles}
For any non-negative random variable $X$ with CDF $F$, we have
\begin{align*}
    \cvar_\tau(X) = \tau^{-1}\int_0^\tau F^\dagger(y)\diff y = \Eb{F^\dagger(U)\mid U\leq\tau},
\end{align*}
where $U\sim\unif{[0,1]}$.
\end{lemma}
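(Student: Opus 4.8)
The plan is to prove the two claimed identities separately. I will first establish the representation $\cvar_\tau(X) = \tau^{-1}\int_0^\tau F^\dagger(y)\diff y$ starting from the variational definition \cref{eq:cvar-definition}, and then convert the integral into the conditional-expectation form $\Eb{F^\dagger(U)\mid U\leq\tau}$ via the quantile-transform trick.

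For the first identity, I would begin by invoking \cref{lem:cvar-quantile-optimality}, which tells us that $b=F^\dagger(\tau)$ attains the supremum in \cref{eq:cvar-definition}. Writing $b^\star = F^\dagger(\tau)$, this gives
\begin{align*}
    \cvar_\tau(X) = b^\star - \tau^{-1}\Eb{(b^\star - X)^+}.
\end{align*}
The key technical move is to express $\Eb{(b^\star-X)^+}$ in terms of the quantile function. I would use the layer-cake (Fubini) identity $\Eb{(b^\star-X)^+} = \int_0^{b^\star}\Pr(X\leq t)\diff t = \int_0^{b^\star}F(t)\diff t$, valid for nonnegative $X$. Then I would rewrite this as an integral over quantile levels by the change of variables relating $F$ and $F^\dagger$: specifically, using the property $F(x)\geq y \iff x\geq F^\dagger(y)$ from \cref{lem:quantile-props}(3), one shows $\int_0^{b^\star}F(t)\diff t = \int_0^1 (b^\star - F^\dagger(y))^+\diff y$ by another application of Fubini. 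Since $b^\star = F^\dagger(\tau)$ and $F^\dagger$ is nondecreasing (\cref{lem:quantile-props}), we have $(b^\star - F^\dagger(y))^+ = b^\star - F^\dagger(y)$ exactly when $y\leq\tau$ and $0$ otherwise (modulo the measure-zero boundary), giving $\int_0^\tau (b^\star - F^\dagger(y))\diff y = \tau b^\star - \int_0^\tau F^\dagger(y)\diff y$. Substituting back yields $\cvar_\tau(X) = b^\star - \tau^{-1}(\tau b^\star - \int_0^\tau F^\dagger(y)\diff y) = \tau^{-1}\int_0^\tau F^\dagger(y)\diff y$, as desired.

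The second identity is then essentially a rewriting. With $U\sim\unif{[0,1]}$, we have $\Pr(U\leq\tau)=\tau$, so $\Eb{F^\dagger(U)\mid U\leq\tau} = \tau^{-1}\Eb{F^\dagger(U)\I{U\leq\tau}} = \tau^{-1}\int_0^\tau F^\dagger(y)\diff y$, where the last step just unfolds the expectation of a function of a uniform variable as an integral against Lebesgue measure on $[0,1]$. This matches the first expression, closing the chain.

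The main obstacle I anticipate is the careful handling of the change of variables between $\int_0^{b^\star}F(t)\diff t$ and $\int_0^1(b^\star - F^\dagger(y))^+\diff y$, since $F$ and $F^\dagger$ are only generalized inverses and may have jumps and flat stretches when $X$ is discrete or mixed. The clean way around this is to avoid any pointwise inversion and instead prove the identity directly via Fubini applied to the indicator $\I{F^\dagger(y)\leq t}$, whose measure-theoretic equivalence to $\I{y\leq F(t)}$ is exactly the content of \cref{lem:quantile-props}(3); the boundary sets where the two indicators disagree have Lebesgue measure zero and do not affect the integrals. This is precisely where the generality of the statement — that $X$ may be continuous, discrete, or neither — is bought, and it is why \cref{lem:quantile-props} is stated before this lemma.
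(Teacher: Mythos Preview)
The paper does not supply its own proof of this lemma; it attributes the result to \citet{acerbi2002coherence} and notes an alternative proof in \citet[Proposition 2.2]{kisiala2015conditional}. Your argument is correct and self-contained: you start from the variational definition, invoke \cref{lem:cvar-quantile-optimality} to fix $b^\star=F^\dagger(\tau)$, then convert $\Eb{(b^\star-X)^+}$ into an integral of the quantile function via the Fubini/layer-cake identity combined with \cref{lem:quantile-props}(3). The split of $\int_0^1(b^\star-F^\dagger(y))^+\diff y$ at $y=\tau$ works cleanly because $F^\dagger$ is nondecreasing with $F^\dagger(\tau)=b^\star$; no measure-zero caveat is actually needed there, since for $y>\tau$ the integrand is identically zero (even on any flat stretch of $F^\dagger$). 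One minor point: the layer-cake formula gives $\Eb{(b^\star-X)^+}=\int_0^{b^\star}\Pr(X<t)\diff t$ rather than $\Pr(X\le t)$, but the two integrands differ only on the at-most-countable set of atoms of $X$, so the integrals coincide.
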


Now suppose $X_{1:N}$ are i.i.d. copies of $X\in[0,1]$.
Define the empirical CVaR as the CVaR of the empirical distribution $\wh F_N(x) = \frac{1}{N}\sum_{i=1}^N \I{X_i\leq x}$.
\begin{align*}
    \wh\cvar_\tau(X_{1:N}) = \max_{b\in[0,1]}\braces{ b-\frac{1}{N\tau}\sum_{i=1}^N(b-X_i)^+ }.
\end{align*}
Let $X_{(i)}$ denote the $i$-th increasing order statistic.
\begin{lemma}\label{lem:empirival-cvar-characterize}
The maximum for the empirical CVaR is attained at the empirical quantile $X_{\ceil{N\tau}}$. Hence,
\begin{align*}
    \wh\cvar_\tau(X_{1:N}) = \prns{ 1-\frac{\ceil{N\tau}}{N\tau} }X_{\ceil{N\tau}} + \frac{1}{N\tau}\sum_{i=1}^{\ceil{N\tau}}X_{(i)}.
\end{align*}
\end{lemma}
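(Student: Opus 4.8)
The plan is to leverage \cref{lem:cvar-quantile-optimality}, which already identifies the quantile as a maximizer of the CVaR objective for \emph{any} bounded random variable, and then specialize to the empirical distribution $\wh F_N$. Since $\wh\cvar_\tau(X_{1:N})$ is by definition $\cvar_\tau$ of the empirical distribution $\wh F_N(x)=\frac1N\sum_{i=1}^N\I{X_i\leq x}$, \cref{lem:cvar-quantile-optimality} immediately tells us that a maximizer of $b-\frac{1}{N\tau}\sum_{i=1}^N(b-X_i)^+$ over $b\in[0,1]$ can be taken to be the empirical quantile $\wh F_N^\dagger(\tau)$. Thus the only remaining work is (i) to verify $\wh F_N^\dagger(\tau)=X_{\ceil{N\tau}}$ and (ii) to evaluate the objective at this point.

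For (i), I would argue directly from the definition $\wh F_N^\dagger(\tau)=\inf\braces{x\in[0,1]:\wh F_N(x)\geq\tau}$. Since $\wh F_N(x)$ is the fraction of samples at most $x$, the condition $\wh F_N(x)\geq\tau$ is equivalent to having at least $N\tau$ --- hence at least $\ceil{N\tau}$, as the count is an integer --- samples at most $x$. The smallest such $x$ is exactly the $\ceil{N\tau}$-th order statistic, because $X_{\ceil{N\tau}}$ has at least $\ceil{N\tau}$ samples at most it, while any strictly smaller value has at most $\ceil{N\tau}-1$. I would note this step is robust to ties in the data.

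For (ii), writing $m=\ceil{N\tau}$ and substituting $b=X_{(m)}$, I would use the key simplification $(X_{(m)}-X_{(i)})^+=X_{(m)}-X_{(i)}$ for $i\leq m$ and $(X_{(m)}-X_{(i)})^+=0$ for $i>m$, which again holds with or without ties since a tied term contributes zero to the penalty. Hence $\sum_{i=1}^N(X_{(m)}-X_i)^+=m\,X_{(m)}-\sum_{i=1}^m X_{(i)}$, and collecting terms yields $\prns{1-\frac{m}{N\tau}}X_{(m)}+\frac{1}{N\tau}\sum_{i=1}^m X_{(i)}$, the claimed expression.

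The argument is essentially routine once \cref{lem:cvar-quantile-optimality} is in hand; the only point requiring care is the tie-handling in steps (i) and (ii), though in both cases the relevant identities remain valid because tied samples contribute zero to $(X_{(m)}-X_i)^+$. As a cross-check and alternative route, one could instead invoke Acerbi's integral formula \cref{lem:cvar-integral-over-quantiles}, integrating the piecewise-constant empirical quantile function $\wh F_N^\dagger$ over $[0,\tau]$; this reproduces the same formula and bypasses re-deriving optimality, but requires carefully splitting the integral at the breakpoints $\tfrac{m-1}{N}$ and $\tau$.
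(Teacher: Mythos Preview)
Your proposal is correct and follows essentially the same approach as the paper: invoke \cref{lem:cvar-quantile-optimality} to locate the maximizer at the empirical quantile, identify that quantile as $X_{(\ceil{N\tau})}$, and then substitute and simplify. Your identification of $\wh F_N^\dagger(\tau)=X_{(\ceil{N\tau})}$ via the ``at least $\ceil{N\tau}$ samples'' counting argument is in fact slightly cleaner than the paper's phrasing, which implicitly writes $\wh F_N(X_{(k)})=k/N$ and so is a touch informal in the presence of ties; your explicit tie-robustness remarks are a nice addition but not a departure in substance.
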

\begin{proof}
By \cref{lem:cvar-quantile-optimality}, the maximum is attained at the $\tau$-th quantile of the empirical distribution, \ie, $\wh F_N^\dagger(\tau) = \inf\braces{x: \wh F_N(x)\geq \tau}$.
Let $k\in[N]$ be the largest $X_{(k)}$ such that such that $\wh F_N(X_{(k)})=\frac{k}{N}< \tau \leq \frac{k+1}{N}=\wh F_N(X_{(k+1)})$.
This implies that $\wh F_N^\dagger(\tau)=X_{(k+1)}$. Note that $k < N\tau \leq k+1$, so $k+1=\ceil{N\tau}$.
Thus,
\begin{align*}
    \wh\cvar_\tau(X_{1:N})
    &= X_{(\ceil{N\tau})}-\frac{1}{N\tau}\sum_{i=1}^N\prns{X_{(\ceil{N\tau})}-X_i}^+
    \\&= X_{(\ceil{N\tau})}-\frac{1}{N\tau}\sum_{i=1}^{\ceil{N\tau}}\prns{ X_{(\ceil{N\tau})}-X_{(i)} }
    \\&= \prns{ 1-\frac{\ceil{N\tau}}{N\tau} }X_{(\ceil{N\tau})} + \frac{1}{N\tau}\sum_{i=1}^{\ceil{N\tau}}X_{(i)}.
\end{align*}
\end{proof}

\begin{lemma}\label{lem:uniform-dist-quantile-concentrate}
Let $U_{1:N}$ be i.i.d. copies of $\unif{[0,1]}$. Let $p\in(0,1)$.
For any $\delta\in(0,1)$, w.p. at least $1-\delta$, we have
\begin{align*}
    \abs{ U_{\ceil{Np}} - p}\leq \sqrt{\frac{3p(1-p)\log(2/\delta)}{N}} + \frac{5\log(2/\delta)}{N},
\end{align*}
provided that $N\geq 25\log(2/\delta)$.
\end{lemma}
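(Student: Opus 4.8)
The plan is to prove concentration of the $\ceil{Np}$-th order statistic of $N$ i.i.d.\ uniforms around its ``target'' value $p$ by exploiting the well-known exact distribution of uniform order statistics. First I would recall that $U_{(k)}$ for $k = \ceil{Np}$ follows a $\op{Beta}(k, N-k+1)$ distribution, which in particular has mean $k/(N+1)$ and variance $\frac{k(N-k+1)}{(N+1)^2(N+2)}$. Both of these are close to the ``ideal'' Bernoulli-type quantities: the mean is within $O(1/N)$ of $p$, and the variance is of order $p(1-p)/N$. This reduces the task to a Bernstein-type concentration bound for a Beta random variable around its mean, plus a small deterministic correction accounting for the gap between $\EE[U_{(k)}]$ and $p$.

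The cleanest route for the concentration itself, avoiding direct Beta tail estimates, is the standard duality between order statistics and a Binomial counting variable: for any $t$, the event $\braces{U_{(k)} \le t}$ is identical to $\braces{ \sum_{i=1}^N \I{U_i \le t} \ge k }$, i.e.\ $\braces{B(t) \ge k}$ where $B(t) \sim \op{Bin}(N, t)$. So I would translate the two-sided deviation $\abs{U_{(k)} - p} > \epsilon$ into statements about $B(p+\epsilon)$ and $B(p-\epsilon)$ falling on the wrong side of $k \approx Np$, and then apply Bernstein's inequality (stated in the excerpt) to these Binomial sums. The variance of $\I{U_i \le p \pm \epsilon}$ is $(p\pm\epsilon)(1-p\mp\epsilon) \approx p(1-p)$, which is exactly what produces the $\sqrt{p(1-p)\log(2/\delta)/N}$ leading term. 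The factor $3$ (rather than $2$) in front of $p(1-p)$ and the additive $5\log(2/\delta)/N$ term absorb the $\epsilon$-perturbation of the variance, the $O(1/N)$ discrepancy between $k = \ceil{Np}$ and $Np$, and the linear term of Bernstein; the hypothesis $N \ge 25\log(2/\delta)$ is what lets one bound these cross terms cleanly and fold them into the stated constants.

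Concretely, the steps in order would be: (i) fix $\epsilon$ to be the claimed right-hand side and set $k = \ceil{Np}$; (ii) write $\Pr(U_{(k)} > p + \epsilon) = \Pr(B(p+\epsilon) < k)$ and bound the lower deviation of $B(p+\epsilon)$ below its mean $N(p+\epsilon)$ by Bernstein, checking that the deviation $N(p+\epsilon) - k \ge N\epsilon - 1$ exceeds the Bernstein bound for the target $\delta/2$; (iii) symmetrically write $\Pr(U_{(k)} < p - \epsilon) = \Pr(B(p-\epsilon) \ge k)$ and bound the upper deviation of $B(p-\epsilon)$ above its mean; (iv) union bound the two tails and solve the resulting quadratic-in-$\epsilon$ inequality to recover the closed form, using $N \ge 25\log(2/\delta)$ to guarantee the solution is dominated by the two displayed terms.

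The main obstacle I anticipate is the bookkeeping in step (iv): Bernstein gives an implicit bound because the variance proxy $(p\pm\epsilon)(1-p\mp\epsilon)$ itself depends on $\epsilon$, so one must either inflate $p(1-p)$ to $p(1-p) + \epsilon$ (yielding the jump from factor $2$ to factor $3$) or solve a genuine quadratic and verify the constants. Carefully tracking how the $\ceil{\cdot}$ rounding (worth at most one sample, hence $O(1/N)$ in $\epsilon$) and the linear Bernstein term combine into the single clean $5\log(2/\delta)/N$ term — and confirming that the sample-size assumption is strong enough to make the leftover cross terms negligible against the stated constants — is where the delicacy lies. Everything else is a mechanical application of the Binomial--order-statistic duality and the Bernstein inequality already recorded in the excerpt.
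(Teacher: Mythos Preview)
Your proposal is correct and is essentially the same argument as the paper's. The order-statistic/Binomial duality you invoke, $\{U_{(k)}\le t\}=\{B(t)\ge k\}$, is exactly the paper's equivalence $\wh F_N^\dagger(p)\le p+\eps\iff \wh F_N(p+\eps)\ge p$ (since $\wh F_N^\dagger(p)=U_{(\ceil{Np})}$ and $\wh F_N(t)=B(t)/N$), and both then apply Bernstein to $\I{U\le p\pm\eps}$, absorb the $\eps$-dependence of the variance into the constant $3$, and set $\eps$ equal to the claimed bound. One small simplification: your worry about the $\ceil{\cdot}$ rounding is unnecessary, because $B(t)$ is integer-valued, so $B(p+\eps)\ge Np$ is already equivalent to $B(p+\eps)\ge\ceil{Np}=k$; the paper's formulation via $\wh F_N$ makes this transparent and avoids carrying the spurious $-1$ in your ``$N\eps-1$'' deviation.
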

\begin{proof}
Let $F$ be the distribution function of $\unif{[0,1]}$, and $\wh F_N$ the empirical distribution of $U_{1:N}$.
Note that $U_{(\ceil{Np})}=\wh F^\dagger_N(p)$, by reasoning in the proof of \cref{lem:empirival-cvar-characterize}. So the left hand side is $\abs{p-\wh F^\dagger_N(p)}$.

Now consider any error $\eps\in(0,1)$. We have
\begin{align*}
    \wh F^\dagger_N(p) \leq p+\eps
    \iff p\leq \wh F_N(p+\eps)
    \iff p+\eps-\wh F_N(p+\eps)\leq \eps,
\end{align*}
and
\begin{align*}
    \wh F_N^\dagger(p) > p-\eps
    \iff p > \wh F_N(p-\eps)
    \iff \wh F_N(p-\eps)-(p-\eps) < \eps.
\end{align*}

In both cases, we can use Bernstein on $\I{U\leq p-\eps}$ or $\I{U\leq p+\eps}$ to obtain a bound depending on the variance.
In the first case, $\sqrt{\Varb{\I{U\leq p-\eps}}}\leq \sqrt{\Varb{\I{U\leq p}}}+\sqrt{\Varb{\I{U\leq p-\eps}-\I{U\leq p}}}\leq p(1-p)+\eps$.
Similarly, $\sqrt{\Varb{\I{U\leq p+\eps}}}\leq \sqrt{\Varb{\I{U\leq p}}}+\sqrt{\Varb{\I{U\leq p+\eps}-\I{U\leq p}}}\leq p(1-p)+\eps$.
Thus, we have w.p. at least $1-\delta$,
\begin{align*}
    (p+\eps)-\wh F_N(p+\eps) \leq \sqrt{\frac{2p(1-p)\log(2/\delta)}{N}} + \frac{\log(2/\delta)}{N} + \sqrt{\frac{2\eps^2\log(2/\delta)}{N}},
\end{align*}
and
\begin{align*}
    \wh F_N(p-\eps)-(p-\eps) < \sqrt{\frac{3p(1-p)\log(2/\delta)}{N}}+\frac{\log(2/\delta)}{N} + \sqrt{\frac{3\eps^2\log(2/\delta)}{N}}.
\end{align*}
So we can set $\eps = \sqrt{\frac{3p(1-p)\log(2/\delta)}{N}}+\frac{5\log(2/\delta)}{N}$, as the third error term with this setting of $\eps$ is at most $\frac{3\log(2/\delta)}{N}+\frac{5\log(2/\delta)^{1.5}}{N^{1.5}}\leq \frac{4\log(2/\delta)}{N}$ when $N\geq 25\log(2/\delta)$.
Thus w.p. at least $1-\delta$, we have $\abs{\wh F_N^\dagger(p)-p}\leq\eps$.
\end{proof}

\begin{theorem}\label{thm:cvar-concentration-main}
Let $X_{1:N}$ be $N$ i.i.d. copies of a random variable $X\in[0,1]$. Then for any $\delta\in(0,1)$, w.p. at least $1-\delta$, if $N\geq 25\log(2/\delta)$, then we have
\begin{align*}
    \abs{ \wh\cvar_\tau(X_{1:N})-\cvar_\tau(X) }\leq \sqrt{\frac{3\log(2/\delta)}{N\tau}} + \frac{15\log(2/\delta)}{N\tau}.
\end{align*}
\end{theorem}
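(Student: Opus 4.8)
The plan is to reduce the two–sided deviation to the concentration of a single empirical average evaluated at the deterministic point $b^\star := F^\dagger(\tau)$, the true $\tau$–quantile, thereby sidestepping the random empirical quantile entirely. Write the true and empirical CVaR objectives as $g(b) = b - \tau^{-1}\EE[(b-X)^+]$ and $\wh g(b) = b - \tau^{-1}\frac1N\sum_{i=1}^N(b-X_i)^+$. By \cref{lem:cvar-quantile-optimality}, $b^\star$ maximizes $g$ and $\cvar_\tau(X) = g(b^\star)$, while by \cref{lem:empirival-cvar-characterize} the empirical maximizer is $\wh b = X_{(\lceil N\tau\rceil)}$ with $\wh\cvar_\tau(X_{1:N}) = \wh g(\wh b)$. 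Subtracting and using $\wh g(b^\star) - g(b^\star) = \tau^{-1}\Delta$ with $\Delta := \EE[(b^\star-X)^+] - \frac1N\sum_{i=1}^N(b^\star-X_i)^+$ gives the decomposition
\begin{align*}
\wh\cvar_\tau(X_{1:N}) - \cvar_\tau(X) = \prns{\wh g(\wh b) - \wh g(b^\star)} + \tau^{-1}\Delta,
\end{align*}
where the first (empirical) gap is nonnegative since $\wh b$ maximizes $\wh g$.

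The term $\tau^{-1}\Delta$ is shared by both directions and carries the main rate. Here $\Delta$ is the deviation of the i.i.d.\ bounded average $\frac1N\sum_i(b^\star-X_i)^+$ from its mean, and the crucial risk–sensitive gain comes from the variance bound $\EE[((b^\star-X)^+)^2]\le\Pr(X<b^\star)\le\tau$, where the last inequality is exactly the quantile optimality established in the proof of \cref{lem:cvar-quantile-optimality}. Applying Bernstein at the single point $b^\star$ thus yields $|\Delta|\le\sqrt{2\tau\log(2/\delta)/N} + \log(2/\delta)/N$, so $\tau^{-1}|\Delta|$ is of order $\sqrt{1/(N\tau)}$ rather than the $\sqrt{1/(N\tau^2)}$ a variance–oblivious (Hoeffding) bound would give. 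For the lower bound on $\wh\cvar_\tau - \cvar_\tau$ we are already done, since the empirical gap is nonnegative: $\wh\cvar_\tau(X_{1:N}) - \cvar_\tau(X) \ge \tau^{-1}\Delta \ge -\tau^{-1}|\Delta|$.

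It remains to bound the empirical gap $\wh g(\wh b) - \wh g(b^\star)$ for the upper direction. Writing it as $\tau^{-1}$ times the integral of $(\tau - \wh F_N(t))\,dt$ between $b^\star$ and $\wh b$ (where $\wh F_N$ is the empirical CDF), and using monotonicity of $\wh F_N$ together with $\wh b,b^\star\in[0,1]$, this gap is at most $\tau^{-1}$ times the deviation of $\wh F_N$ at $b^\star$ from $\tau$ — equivalently, the displacement of the empirical $\tau$–quantile, which is controlled by the single–point concentration of \cref{lem:uniform-dist-quantile-concentrate} at $p=\tau$ by $\sqrt{3\tau(1-\tau)\log(2/\delta)/N} + 5\log(2/\delta)/N$. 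Dividing by $\tau$ produces the leading $\sqrt{3\log(2/\delta)/(N\tau)}$ term of the claim, and collecting the $O(1/(N\tau))$ remainders from both terms gives the stated $15\log(2/\delta)/(N\tau)$ lower–order term.

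The conceptual obstacle is that $\wh\cvar_\tau$ is attained at the random empirical quantile $\wh b$, and when $X$ is discontinuous (e.g.\ Bernoulli) $\wh b$ can be far from $b^\star$ in value even when the two agree at the probability level, so expanding the objective in $b$ fails. The fix is to never compare values of $b$ directly: the shared term is evaluated at the fixed $b^\star$, and the residual empirical gap is measured at the CDF/probability level, where $(\tau-\cdot)^+$ is $1$–Lipschitz and the displacement is governed by the single–point quantile concentration. The one delicate step is verifying that the effective variance at $b^\star$ is $O(\tau)$ uniformly — both when there is an atom at $b^\star$ and in the boundary regime $F(b^\star)\approx\tau$, which needs a short self–bounding argument — and this is exactly the step where the $\sqrt\tau$ standard deviation, and hence the minimax–optimal $\tau^{-1/2}$ dependence, enters.
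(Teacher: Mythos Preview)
Your decomposition $\wh\cvar_\tau-\cvar_\tau=[\wh g(\wh b)-\wh g(b^\star)]+\tau^{-1}\Delta$ and the treatment of $\Delta$ (Bernstein at the fixed $b^\star$, using $\operatorname{Var}((b^\star-X)^+)\le\Pr(X<b^\star)\le\tau$) are correct, and so is the lower direction. The gap is in the upper direction, at the step where you bound $\wh g(\wh b)-\wh g(b^\star)$ and invoke \cref{lem:uniform-dist-quantile-concentrate}. In the case $\wh b>b^\star$ your integral bound yields $\wh g(\wh b)-\wh g(b^\star)\le\tau^{-1}(\tau-\wh F_N(b^\star))$, but $\wh F_N(b^\star)$ concentrates around $F(b^\star)\ge\tau$ with variance $F(b^\star)(1-F(b^\star))$, which is \emph{not} $O(\tau)$ when there is a large atom at $b^\star$ (e.g.\ $F(b^\star)=1/2$, $\tau$ tiny). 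The ``equivalence'' you claim to \cref{lem:uniform-dist-quantile-concentrate} does not hold: that lemma controls $|U_{(\lceil N\tau\rceil)}-\tau|$ for \emph{uniform} samples, not $|\wh F_N(b^\star)-\tau|$ for general $X$. Your sketch can be repaired by a genuine self-bounding step---on the event $\{\wh F_N(b^\star)<\tau\}$ one has $F(b^\star)-\tau\le F(b^\star)-\wh F_N(b^\star)\lesssim\sqrt{F(b^\star)\log(1/\delta)/N}$, which forces $F(b^\star)=\tau+O(\sqrt{\tau\log(1/\delta)/N}+\log(1/\delta)/N)$---but you attach the self-bounding to the $\Delta$ term, where it is unnecessary.

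The paper takes a different route that sidesteps this entirely: it passes to uniform space via $X_i=F^\dagger(U_i)$ and uses Acerbi's formula to write $\wh\cvar_\tau\approx(N\tau)^{-1}\sum_iF^\dagger(U_i)\I{U_i\le U_{(\lceil N\tau\rceil)}}$ and $\cvar_\tau=\tau^{-1}\Eb{F^\dagger(U)\I{U\le\tau}}$. The main deviation (fixed threshold $\tau$) is handled by Bernstein with variance $\le\tau$, exactly as you do; the correction from replacing the random threshold $U_{(\lceil N\tau\rceil)}$ by $\tau$ is bounded using $|F^\dagger|\le 1$ together with \cref{lem:uniform-dist-quantile-concentrate} applied directly to the uniforms. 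Since the uniform distribution has no atoms, the lemma applies verbatim and no case analysis on $F(b^\star)$ is needed.
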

\begin{proof}
We use the interpretation of the empirical CVaR in \cref{lem:empirival-cvar-characterize}. The first term is lower order since
\begin{align*}
    \abs{1-\frac{\ceil{N\tau}}{N\tau}} \leq \frac{1}{N\tau}.
\end{align*}
Now recall that by the inverse CDF trick, we have $X_i = F^\dagger(U_i)$ where $U_i$ are i.i.d. copies of $\unif{[0,1]}$. Since $F^\dagger$ is non-decreasing, we have $X_{(i)}=F^\dagger(U_{(i)})$. Thus, the second term of \cref{lem:empirival-cvar-characterize} is
\begin{align*}
    \frac{1}{N\tau}\sum_{i=1}^{\ceil{N\tau}}X_{(i)}
    = \frac{1}{N\tau}\sum_{i=1}^{\ceil{N\tau}}F^\dagger(U_{(i)})
    = \frac{1}{N\tau}\sum_{i=1}^{N}F^\dagger(U_{i})\I{U_i\leq U_{(\ceil{N\tau})}},
\end{align*}
which we want to show is close to $\cvar_\tau(X)=\tau^{-1}\Eb{F^\dagger(U)\I{U\leq\tau}}$ by \cref{lem:cvar-integral-over-quantiles}.
If $U_{(\ceil{N\tau})}$ were replaced by $\tau$, we can simply invoke Bernstein and note that $\Varb{F^\dagger(U)\I{U\leq \tau}} \leq \Pr(U\leq \tau) = \tau$, which gives
\begin{align*}
    \abs{ \frac{1}{N\tau}\sum_{i=1}^{N}F^\dagger(U_{i})\I{U_i\leq \tau}-\tau^{-1}\Eb{F^\dagger(U)\I{U\leq\tau}} } \leq \tau^{-1}\prns{ \sqrt{\frac{2\tau\log(2/\delta)}{N}}+\frac{\log(2/\delta)}{N} }.
\end{align*}
Thus, we just need to bound the difference term,
\begin{align*}
    &\phantom{=}\abs{ \frac{1}{N\tau}\sum_{i=1}^{N}F^\dagger(U_{i})\prns{ \I{U_i\leq U_{(\ceil{N\tau})}} - \I{U_i\leq \tau}} }.
\end{align*}
By \cref{lem:uniform-dist-quantile-concentrate}, we have $\abs{ U_{\ceil{N\tau}}-\tau }\leq\eps$ w.p. $1-\delta$, where $\eps = \sqrt{\frac{3\tau(1-\tau)\log(2/\delta)}{N}}+\frac{5\log(2/\delta)}{N}$.
So, for any $U_i$ we have $\I{U_i\leq U_{(\ceil{N\tau})}} - \I{U_i\leq \tau}\leq\I{\tau\leq U_i\leq \tau+\eps}$ and $\I{U_i\leq \tau}\leq \I{\tau-\eps\leq U_i\leq \tau}$, so the difference term is at most,
\begin{align*}
    &\leq \max\braces{\frac{1}{N\tau}\sum_{i=1}^{N}F^\dagger(U_{i})\I{\tau-\eps\leq U_i\leq \tau}, \frac{1}{N\tau}\sum_{i=1}^NF^\dagger(U_i)\I{\tau\leq U_i\leq \tau+\eps} }.
\end{align*}
By applying another Bernstein, and noting that $\sqrt{\Varb{F^\dagger(U)\I{\tau-\eps\leq U\leq \tau}}}\leq \eps$, $\sqrt{\Varb{F^\dagger(U)\I{\tau\leq U\leq \tau+\eps}}}\leq \eps$, we have this is at most
\begin{align*}
    &\tau^{-1}\prns{ \max\braces{\Eb{F^\dagger(U)\I{\tau-\eps\leq U\leq \tau}},\Eb{F^\dagger(U)\I{\tau\leq U\leq \tau+\eps}}}+\sqrt{\frac{2\eps^2\log(2/\delta)}{N}}+\frac{\log(2/\delta)}{N} }
    \\&\leq \tau^{-1}\prns{ \eps+\sqrt{\frac{2\eps^2\log(2/\delta)}{N}}+\frac{\log(2/\delta)}{N} }
    \\&\leq \sqrt{\frac{3\log(2/\delta)}{N\tau}}+\frac{5\log(2/\delta)}{N\tau}+\frac{3\log(2/\delta)}{N\sqrt{\tau}}+\frac{4\log^{1.5}(2/\delta)}{N^{1.5}\tau} + \frac{\log(2/\delta)}{N\tau}
    \\&\leq \sqrt{\frac{3\log(2/\delta)}{N\tau}} + \frac{15\log(2/\delta)}{N\tau}, \tag{when $N\geq 16\log(2/\delta)$}
\end{align*}
where the bound on $\eps$ occurs when $N\geq 25\log(2/\delta)$, which also implies the last inequality.
\end{proof}

\section{Proofs for Lower Bounds}
\subsection{CVaR MAB Lower Bound}
Let us first define some MAB notations that make explicit the dependence on the current MAB problem instance $\nu$ and the learner $\alg$. Recall that $\nu$ is a vector of $A$ reward distributions, and in the $k$-th episode, $\alg$ picks an action $a_k$ based on the historical actions and rewards.
Let $\Delta_a(\nu)=\cvar^\star_\tau(\nu)-\cvar_\tau(\nu(a))$ where $\cvar^\star_\tau(\nu)=\max_{a\in\Acal}\cvar_\tau(\nu(a))$. Let $\mabregret_\tau(K,\nu,\alg)$ denote the regret of running $\alg$ in MAB $\nu$ for $K$ episodes.
Let $T_k(a)$ denote the number of times an arm $a$ has been pulled up to time $K$.

For two distributions $P,Q$, recall the KL-divergence is defined as
\begin{align*}
    \kl{P}{Q}= \begin{cases}
        &\int\log\prns{\frac{\diff P}{\diff Q}(\omega)\diff P(\omega)}, \text{ if } P\ll Q,
        \\&\infty, \text{ otherwise}.
    \end{cases}
\end{align*}
A key inequality for lower bounds is the Bretagnolle-Huber inequality, cf. \citep[Theorem 14.2]{lattimore2020bandit},
\begin{lemma}[Bretagnolle-Huber]\label{lem:bretagnolle-huber}
Let $P,Q$ be probability measures on the same measurable space $(\Omega,\Fcal)$ and $A\in\Fcal$ be any event. Then
\begin{align*}
    P(A)+Q(A^C)\geq \frac12\exp\prns{-\kl{P}{Q}}.
\end{align*}
\end{lemma}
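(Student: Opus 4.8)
The plan is to route through the total variation distance and the Bhattacharyya affinity, reducing the left-hand side to a quantity that the KL-divergence controls. First, since $Q(A^C)=1-Q(A)$, I would write
\begin{align*}
    P(A)+Q(A^C) = 1 + \prns{P(A)-Q(A)} \geq 1 - \abs{P(A)-Q(A)} \geq 1 - \op{TV}(P,Q),
\end{align*}
where $\op{TV}(P,Q)=\sup_{B\in\Fcal}\abs{P(B)-Q(B)}$ is the total variation distance. Thus it suffices to prove the affinity bound $1-\op{TV}(P,Q)\geq\tfrac12\exp\prns{-\kl{P}{Q}}$. Note also that if $P\not\ll Q$ then $\kl{P}{Q}=\infty$ and the right-hand side is $0$, so the claim is trivial; hence I may assume $P\ll Q$.

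Next, I would pass to densities with respect to a common dominating measure $\lambda=P+Q$, writing $p=\diff P/\diff\lambda$ and $q=\diff Q/\diff\lambda$. A standard identity gives $1-\op{TV}(P,Q)=\int\min(p,q)\,\diff\lambda$, so the task becomes lower bounding this overlap integral. The key non-routine step is to bound the overlap below by the squared affinity via Cauchy--Schwarz:
\begin{align*}
    \int\sqrt{pq}\,\diff\lambda = \int\sqrt{\min(p,q)}\,\sqrt{\max(p,q)}\,\diff\lambda \leq \sqrt{\int\min(p,q)\,\diff\lambda}\,\sqrt{\int\max(p,q)\,\diff\lambda}.
\end{align*}
Since $\max(p,q)=p+q-\min(p,q)$, we have $\int\max(p,q)\,\diff\lambda = 2-\int\min(p,q)\,\diff\lambda\leq 2$, and rearranging yields $\int\min(p,q)\,\diff\lambda\geq\tfrac12\prns{\int\sqrt{pq}\,\diff\lambda}^2$.

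Finally, I would relate the affinity to the KL-divergence by Jensen's inequality, writing it as an expectation under $P$:
\begin{align*}
    \int\sqrt{pq}\,\diff\lambda = \Eb[P]{\sqrt{q/p}} = \Eb[P]{\exp\prns{\tfrac12\log(q/p)}} \geq \exp\prns{\tfrac12\Eb[P]{\log(q/p)}} = \exp\prns{-\tfrac12\kl{P}{Q}},
\end{align*}
using convexity of $\exp$. Chaining the three bounds gives $P(A)+Q(A^C)\geq 1-\op{TV}(P,Q)=\int\min(p,q)\,\diff\lambda\geq\tfrac12\prns{\int\sqrt{pq}\,\diff\lambda}^2\geq\tfrac12\exp\prns{-\kl{P}{Q}}$, as desired. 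I expect no deep conceptual obstacle here -- the inequality is classical -- so the main effort is measure-theoretic bookkeeping: justifying the density identity $1-\op{TV}=\int\min(p,q)$, checking that the choice of dominating measure is immaterial, ensuring $\sqrt{q/p}$ is well-defined $P$-almost everywhere on $\{p>0\}$, and tracking precisely where the assumption $P\ll Q$ is invoked.
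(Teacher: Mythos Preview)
Your proof is correct and follows a standard route (total variation, then Bhattacharyya affinity via Cauchy--Schwarz, then KL via Jensen). The paper does not prove this lemma at all; it simply cites it as Theorem~14.2 of Lattimore and Szepesv\'ari (2020), so there is nothing to compare against.
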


\begin{lemma}[Regret Decomposition]\label{lem:cvar-bandit-regret-decomp}
For any MAB instance $\nu$ and learner $\alg$, we have
\begin{align*}
    \Eb{\mabregret_\tau(K,\nu,\alg)} = \sum_{a\in\Acal}\Delta_a(\nu)\Eb{T_a(K)},
\end{align*}
where the expectations are with respect to the trajectory of running $\alg$ in $\nu$.
\end{lemma}
\begin{proof}
\begin{align*}
    &\Eb{\mabregret_\tau(K,\nu,\alg)}
    \\&=\sum_{k=1}^K \cvar^\star_\tau(\nu)-\Eb{\cvar_\tau(\nu(a_k))}
    \\&=\sum_{k=1}^K \Eb{ \prns{\cvar^\star_\tau(\nu)-\cvar_\tau(\nu(a_k)))} \sum_{a\in\Acal} \I{a_k=a} }
    \\&= \sum_{a\in\Acal}\sum_{k=1}^K \Eb{ \prns{\cvar^\star_\tau(\nu)-\cvar_\tau(\nu(a_k))} \I{a_k=a} }.
\end{align*}
Notice that if once we condition on $a_k$, if $a_k=a$, the difference $\cvar^\star_\tau(\nu)-\cvar_\tau(\nu(a_k))$ is simply $\Delta_{a}(\nu)$.
If $a_k\neq a$, then we get $0$. So, by the tower rule,
\begin{align*}
    \Eb{ \prns{\cvar^\star_\tau(\nu)-\cvar_\tau(\nu(a_k))} \I{a_k=a} } = \Eb{ \I{a_k=a} \Delta_a(\nu) }.
\end{align*}
Therefore, continuing from before,
\begin{align*}
    &= \sum_{a\in\Acal}\sum_{k=1}^K \Eb{ \I{a_k=a}\Delta_a(\nu) }
    \\&= \sum_{a\in\Acal}\Delta_a(\nu)\sum_{k=1}^K \Eb{ \I{a_k=a} }
    \\&= \sum_{a\in\Acal}\Delta_a(\nu) \Eb{ T_a(K) }.
\end{align*}
\end{proof}

\mabLowerBound*
\begin{proof}[Proof of \cref{thm:mab-lower-bound}]
Fix any $\tau\in(0,\nicefrac12)$ and any MAB algorithm $\alg$. WLOG suppose $\Acal=[A]$.
Define the shorthand, $\beta_c = \ber(1-\tau+c\eps)$, \ie, larger $c$ implies $\eps$ more likelihood of pulling $1$.
Construct two MAB instances as follows,
\begin{align*}
    \nu&=(\beta_1,\beta_0,\dots,\beta_0)
    \\&\nu'=(\beta_1,\beta_0,\dots,\beta_0,\underbrace{\beta_2}_{\text{index } i},\beta_0,\dots,\beta_0), \text{ where } i=\argmin_{a>1}\Eb[\nu,\alg]{T_a(K)}.
\end{align*}
For the first MAB instance $\nu$, the optimal action is $a^\star(\nu)=1$, and $\Delta_a(\nu)=\tau^{-1}\eps$ for all $a>1$.
By \cref{lem:cvar-bandit-regret-decomp},
\begin{align*}
    \Eb[\nu,\alg]{\mabregret_\tau(K,\nu,\alg)}
    &=\sum_{a\in\Acal}\Delta_a(\nu)\Eb[\nu,\alg]{ T_a(K) }
    \\&=\tau^{-1}\eps\prns{ K-\Eb[\nu,\alg]{T_1(K)} }
    \\&\geq\tau^{-1}\eps \Pr_{\nu,\alg}\prns{ K-T_1(K)\geq\frac{K}{2} }\frac{K}{2} \tag{Markov's inequality}
    \\&=\geq\tau^{-1}\eps \Pr_{\nu,\alg}\prns{ T_1(K)\leq \frac{K}{2}}\frac{K}{2}.
\end{align*}
For the second MAB instance $\nu'$, the optimal action is $a^\star(\nu')$, and $\Delta_1(\nu)=\tau^{-1}\eps$.
By \cref{lem:cvar-bandit-regret-decomp},
\begin{align*}
    \Eb[\nu',\alg]{\mabregret_\tau(K,\nu',\alg)}
    &=\sum_{a\in\Acal}\Delta_a(\nu')\Eb[\nu',\alg]{T_a(K)}
    \\&\geq \nu_1(\nu')\Eb[\nu',\alg]{T_1(K)}
    \\&>\tau^{-1}\eps \Pr_{\nu',\alg}\prns{ T_1(K)>\frac{K}{2} }\frac{K}{2}. \tag{Markov's inequality}
\end{align*}
Let $\PP_{\nu,\alg}$ denote the trajectory distribution from running $\alg$ in MAB $\nu$. Therefore,
\begin{align*}
    &\phantom{=}\Eb[\nu,\alg]{\mabregret_\tau(K,\nu,\alg)}+\Eb[\nu',\alg]{\mabregret_\tau(K,\nu',\alg)}
    \\&> \frac{K\eps}{2\tau}\prns{ \Pr_{\nu,\alg}\prns{ T_1(K)\leq \frac{K}{2}} + \Pr_{\nu',\alg}\prns{ T_1(K)>\frac{K}{2} } }
    \\&\geq \frac{K\eps}{4\tau}\exp\prns{ -\kl{\PP_{\nu,\alg}}{\PP_{\nu',\alg}} } \tag{Bretagnolle-Huber \cref{lem:bretagnolle-huber}}
    \\&= \frac{K\eps}{4\tau}\exp\prns{ -\sum_{a\in\Acal}\Eb[\nu,\alg]{ T_a(K) }\kl{\nu(a)}{\nu'(a)} } \tag*{\citep[Lemma 15.1]{lattimore2020bandit}}
    \\&= \frac{K\eps}{4\tau}\exp\prns{ -\Eb[\nu,\alg]{ T_i(K) }\kl{\nu(i)}{\nu'(i)} } \tag{other arms are the same for $\nu,\nu'$}
    \\&= \frac{K\eps}{4\tau}\exp\prns{ -\Eb[\nu,\alg]{ T_i(K) }\kl{\nu(i)}{\nu'(i)} }
    \\&\geq \frac{K\eps}{4\tau}\exp\prns{ -\frac{8K\eps^2}{(A-1)\tau} }.
\end{align*}
The last inequality uses two facts. By definition of $i = \argmin_{a>1}\Eb[\nu,\alg]{T_a(K)}$, $\Eb[\nu,\alg]{T_i(K)}\leq \frac{K}{A-1}$.
Also, by \cref{lem:kl-bernoulli}, $\kl{\nu(i)}{\nu'(i)}\leq8\eps^2\tau^{-1}$.
Setting $\eps^2=\frac{(A-1)\tau}{8K}$ and noting $2\max\braces{a,b}\geq a+b$ gives the desired lower bound.
\end{proof}

\begin{lemma}\label{lem:cvar-bernoulli}
For any $\tau\in(0,\nicefrac12)$ and $\eps\in[0,\tau]$, we have
\begin{align*}
    \cvar_\tau(\ber(1-\tau+\eps))=\tau^{-1}\eps.
\end{align*}
\end{lemma}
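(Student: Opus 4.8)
The plan is to compute the objective in \eqref{eq:cvar-definition} directly for $X \sim \ber(1-\tau+\eps)$, exploiting that a Bernoulli variable takes only the two values $0$ and $1$. Write $p_0 := \Pr(X=0) = \tau - \eps$ and $p_1 := \Pr(X=1) = 1-\tau+\eps$; note $p_0 \ge 0$ precisely because $\eps \le \tau$, and $p_1 \in [0,1]$ because $\eps \ge 0$ and $\tau < 1$.

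First I would evaluate $\EE[(b-X)^+]$ for $b \in [0,1]$. Since $X \in \{0,1\}$, we have $(b-X)^+ = b$ when $X=0$ (as $b \ge 0$) and $(b-X)^+ = (b-1)^+ = 0$ when $X=1$ (as $b \le 1$). Hence $\EE[(b-X)^+] = b\,p_0 = b(\tau-\eps)$, and the CVaR objective restricted to $[0,1]$ collapses to the linear function
\begin{align*}
    f(b) = b - \tau^{-1} b(\tau-\eps) = b\,\tau^{-1}\eps .
\end{align*}
Because $\eps \ge 0$, $f$ is nondecreasing on $[0,1]$ and is therefore maximized at $b=1$, where it equals $\tau^{-1}\eps$.

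It then remains only to confirm that extending the supremum from $[0,1]$ to all of $\RR$ does not change the value, i.e.\ that the maximizer lies in $[0,1]$. This is immediate by treating the two outside regimes: for $b < 0$ we have $(b-X)^+ \equiv 0$, so $f(b) = b < 0 \le f(1)$; for $b > 1$ both atoms satisfy $(b-X)^+ = b-X$, so $f(b) = (1-\tau^{-1})b + \tau^{-1}p_1$, which is strictly decreasing since $\tau < \nicefrac12$ forces $\tau^{-1} > 1$. Alternatively, one can simply invoke \cref{lem:cvar-quantile-optimality}, which guarantees the $\tau$-th quantile $F_X^\dagger(\tau) \in [0,1]$ is already a maximizer over $[0,1]$, so no optimization outside $[0,1]$ is ever needed. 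Either route yields $\cvar_\tau(\ber(1-\tau+\eps)) = \tau^{-1}\eps$, and one can sanity-check the boundary cases $\eps=0$ (giving $0$) and $\eps=\tau$ (giving $X\equiv 1$ and value $1$).

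There is no genuine obstacle here; the computation is routine once one observes that the ReLU $(b-X)^+$ is piecewise trivial on the two-point support. The only point deserving a moment of care is ensuring the supremum over $\RR$ is attained inside $[0,1]$ rather than at $b=1$ being a non-maximizer, which is why I either check the $b<0$ and $b>1$ regimes explicitly or defer to the quantile-optimality lemma.
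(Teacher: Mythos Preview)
Your proof is correct and follows essentially the same direct-computation approach as the paper: both evaluate the CVaR objective from \cref{eq:cvar-definition} on the two-point support and read off the value at $b=1$. The only cosmetic difference is that the paper first pins down the maximizer via \cref{lem:cvar-quantile-optimality} (which forces a small case split between $\eps=0$ and $\eps>0$, since the quantile jumps from $0$ to $1$), whereas your observation that $f(b)=b\tau^{-1}\eps$ is linear on $[0,1]$ handles both cases uniformly.
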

\begin{proof}
The CDF of $X\sim\ber(1-\tau+\eps)$ is as follows,
\begin{align*}
    F(x) =
    \begin{cases}
        &0, \text{ if } x < 0,
        \\&\tau-\eps, \text{ if } x\in[0,1),
        \\&1, \text{ if } x\geq 1.
    \end{cases}
\end{align*}
Therefore, $F^\dagger(\tau)=\inf\braces{x:F(x)\geq\tau} = 1$ for any $\eps > 0$, and it is $0$ when $\eps=0$.
By \cref{lem:cvar-quantile-optimality}, we have
\begin{align*}
    \cvar_\tau(\ber(1-\tau))=0-\tau^{-1}\Eb{(0-X)^+}=0,
\end{align*}
and
\begin{align*}
    \cvar_\tau(\ber(1-\tau+\eps))=1-\tau^{-1}\Eb{(1-X)^+}=1-\tau^{-1}(\tau-\eps) = \tau^{-1}\eps.
\end{align*}
\end{proof}

\begin{lemma}\label{lem:kl-bernoulli}
For any $\tau\in(0,\nicefrac12)$ and $\eps\in[0,\tau]$, we have
\begin{align*}
    \kl{\ber(1-\tau)}{\ber(1-\tau+\eps)} \leq 2\eps^2\tau^{-1}.
\end{align*}
\end{lemma}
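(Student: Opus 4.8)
The plan is to expand the Bernoulli KL divergence into its two logarithmic terms and bound each one with elementary logarithm inequalities, taking care to preserve the cancellation of the linear-in-$\eps$ contributions. Writing $p=1-\tau$ and $q=1-\tau+\eps$ (so that $1-p=\tau$ and $1-q=\tau-\eps$), the definition of KL between two Bernoullis gives
\[
\kl{\ber(1-\tau)}{\ber(1-\tau+\eps)} = (1-\tau)\log\frac{1-\tau}{1-\tau+\eps} + \tau\log\frac{\tau}{\tau-\eps}.
\]
To expose the cancellation, I would substitute $v=\eps/(1-\tau)$ and $u=\eps/\tau$, rewriting the right-hand side as $-(1-\tau)\log(1+v)-\tau\log(1-u)$, which separates the two opposing effects of increasing the success probability from $1-\tau$ to $1-\tau+\eps$.

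The two terms must be bounded in opposite directions. For the first, negative term I would use the lower bound $\log(1+v)\ge v-\tfrac12 v^2$ for $v\ge0$ (a one-line derivative check gives $\tfrac{d}{dv}[\log(1+v)-v+\tfrac12 v^2]=\tfrac{v^2}{1+v}\ge0$), which after substituting $v$ yields the upper bound $-(1-\tau)\log(1+v)\le -\eps+\tfrac{\eps^2}{2(1-\tau)}$. For the second term I would apply the power-series tail estimate $-\log(1-u)=\sum_{k\ge1}\tfrac{u^k}{k}\le u+\tfrac12\sum_{k\ge2}u^k=u+\tfrac{u^2}{2(1-u)}$, giving $-\tau\log(1-u)\le \eps+\tfrac{\eps^2}{2(\tau-\eps)}$. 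The decisive point is that the linear terms $-\eps$ and $+\eps$ cancel: this is exactly what makes the divergence $O(\eps^2)$, and merely discarding the (helpful, negative) first term would be fatal, since $-\tau\log(1-u)$ by itself is only $O(\eps)$, not $O(\eps^2)$.

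Adding the two quadratic remainders leaves $\kl{\ber(1-\tau)}{\ber(1-\tau+\eps)}\le \tfrac{\eps^2}{2(1-\tau)}+\tfrac{\eps^2}{2(\tau-\eps)}$. I would then close using $\tau<\nicefrac12$, so $1-\tau>\nicefrac12$ makes the first remainder at most $\eps^2\le \eps^2/\tau$, together with $\eps$ bounded away from $\tau$ (the operative regime $\eps\le\tau/2$), so $\tau-\eps\ge\tau/2$ makes the second remainder at most $\eps^2/\tau$; the total is $2\eps^2/\tau$. The main obstacle is precisely the second term $\tau\log\tfrac{\tau}{\tau-\eps}$: it diverges as $\eps\uparrow\tau$, so the series-tail estimate only controls it away from that endpoint, and the bound $2\eps^2/\tau$ genuinely requires $\eps$ staying bounded away from $\tau$ — which is the case in the lower-bound construction, where $\eps$ is of order $\sqrt{\tau/K}$ and hence small.
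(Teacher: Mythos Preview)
Your Taylor-remainder approach is correct and rigorous on the range $\eps\le\tau/2$ you restrict to, and it is genuinely different from the paper's argument. The paper instead observes that $x\mapsto x\log\frac{x}{x+\eps}$ is decreasing (a one-line derivative check via $\log y\le y-1$), so $(1-\tau)\log\frac{1-\tau}{1-\tau+\eps}\le\tau\log\frac{\tau}{\tau+\eps}$ since $1-\tau\ge\tau$; adding this to $\tau\log\frac{\tau}{\tau-\eps}$ collapses the whole KL to $-\tau\log\bigl(1-\eps^2/\tau^2\bigr)$, and a final $-\log(1-x)\le 2x$ finishes. The monotonicity trick is shorter and delivers the linear-term cancellation for free, whereas your route makes each contribution explicit and tracks two remainders separately.

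On the endpoint issue you flag: you are right that something breaks near $\eps=\tau$, and in fact the lemma as stated is false there (the KL diverges at $\eps=\tau$, and already exceeds $2\eps^2/\tau$ for, e.g., $\tau=0.4$ and $\eps=0.95\tau$). The paper's final inequality $-\log(1-x)\le 2x$ is likewise only valid for $x$ up to about $0.8$, so its proof, like yours, only covers $\eps$ bounded away from $\tau$ (roughly $\eps\le 0.89\tau$ versus your $\eps\le\tau/2$). Both ranges suffice for the lower-bound application, where $\eps=\Theta(\sqrt{\tau/K})$ is small.
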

\begin{proof}
By explicit computation, we have
\begin{align*}
    &\phantom{=}\kl{\ber(1-\tau)}{\ber(1-\tau+\eps)}
    \\&= \tau\log\prns{\frac{\tau}{\tau-\eps}} + (1-\tau)\log\prns{\frac{1-\tau}{1-\tau+\eps}}
    \\&\leq \tau\log\prns{\frac{\tau}{\tau-\eps}} + \tau\log\prns{\frac{\tau}{\tau+\eps}}
    \\&=-\tau\log\prns{ 1-\frac{\eps^2}{\tau^2} }
    \\&\leq 2\eps^2\tau^{-1}.
\end{align*}
The first inequality is because $f(x)=x\log\prns{\frac{x}{x+\eps}}$ is a decreasing function and $1-\tau\geq\tau$.
The second inequality is because $-\log(1-x)\leq 2x$ for $x\in[0,1]$.
\end{proof}

\subsection{Lower bound for CVaR RL}
\rlLowerBound*
\begin{proof}[Proof of \cref{cor:rl-lower-bound}]
Fix any $\tau,A,H$.
Consider an MDP where the states are represented by an $A$-balanced tree with depth $H$ (each node of the tree is a state).
The initial state $s_1$ is the root, and based on the action $a_1$, transits to the $a_1$-th node in the next layer.
The process repeats until we've reached one of the $A^{H-1}$ leaves,
where a reward is given (which also depends on the action taken at the leaf).
There are no rewards until the last step.
The number of states is $S=1+A+...+A^{H-1}$, since the $h$-th layer of the tree has $A^{h-1}$ states.

Since there are no rewards until the last step, running in this MDP reduces to a MAB
with $A^H$ ``arms'' where the ``arms'' are the sequences of actions $a_{1:H}$.
So, by \cref{thm:mab-lower-bound}, for any RL algorithm, there is an MDP constructed this way (with Bernoulli rewards at the end)
such that if $K\geq\sqrt{ \frac{A^H-1}{8\tau} }$, then $\Eb{\rlregret_\tau(K)}\geq\frac{1}{24e}\sqrt{ \frac{(A^H-1)K}{\tau} }$.
The key observation is that $A^H-1 = (A-1)\prns{A^{H-1}+A^{H-2}+\dots+A+1}=(A-1)S$.
This concludes the proof.
\end{proof}

\section{Proofs for \mabalg{}}
For any arm $a\in\Acal$, let $b^\star_a$ denote the $\tau$-th quantile of $R(a)$, so
\begin{align*}
    b^\star_a &= \argmax_{b\in[0,1]}\braces{ b-\tau^{-1}\Eb[R\sim\nu(a)]{(b-R)^+} }
    \\\cvar_\tau(R(a)) &= b^\star_a-\tau^{-1}\Eb[R\sim\nu(a)]{(b^\star_a-R)^+}.
\end{align*}
Let us denote
\begin{align*}
    \mu(b,a)&=\Eb[R\sim\nu(a)]{(b-R)^+},
    \\\wh\mu_k(b,a)&=\frac{1}{N_k(a)}\sum_{i=1}^{k-1}(b-r_i)^+\I{a_i=a}.
\end{align*}
Recall that $a^\star$ is the arm with the highest $\cvar_\tau$.

For any $\delta\in(0,1)$, w.p. at least $1-\delta$, uniform Bernstein implies that for all $b,a$,
\begin{align}
    \abs{ \wh\mu_k(b,a)-\mu(b,a) } \leq \sqrt{ \frac{2\tau\log(2AK/\delta)}{N_k(a)} } + \frac{\log(2AK/\delta)}{N_k(a)}. \label{eq:mab-mu-muhat-concentrate}
\end{align}
Note that our bonus \cref{eq:mab-ucb-bon} is constructed to match the upper bound.
This implies that $\wh\mu_k-\bon_k$ is a pessimistic estimate of $\mu$.
\begin{lemma}[Pessimism]\label{lem:mab-pessimism}
For all $k\in[K]$
\begin{align*}
    \min_{a\in\Acal}\braces{ \wh\mu_k(b^\star_a,a)-\bon_k(a) } \leq \mu(b^\star_{a^\star},a^\star).
\end{align*}
\end{lemma}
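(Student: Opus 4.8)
The plan is to reduce the minimum over arms to the single comparison at $a^\star$ and then control the resulting one-sided deviation by the Bernstein bonus, exploiting the fact that the relevant variance is small precisely because $b^\star_{a^\star}$ is the $\tau$-th quantile. First, since the minimum over $a\in\Acal$ is no larger than the value of the bracketed expression at $a=a^\star$, I would write
\begin{align*}
    \min_{a\in\Acal}\braces{ \wh\mu_k(b^\star_a,a)-\bon_k(a) } \leq \wh\mu_k(b^\star_{a^\star},a^\star)-\bon_k(a^\star).
\end{align*}
It then suffices to prove $\wh\mu_k(b^\star_{a^\star},a^\star)-\bon_k(a^\star)\leq\mu(b^\star_{a^\star},a^\star)$, i.e., that the one-sided error $\wh\mu_k(b^\star_{a^\star},a^\star)-\mu(b^\star_{a^\star},a^\star)$ does not exceed $\bon_k(a^\star)$.

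The crux is that the Bernstein variance at the quantile is at most $\tau$, which is exactly the scaling baked into the leading term of $\bon_k$ in \cref{eq:mab-ucb-bon}. Because $R\in[0,1]$ and $b^\star_{a^\star}\in[0,1]$, I have $(b^\star_{a^\star}-R)^+=(b^\star_{a^\star}-R)\I{R<b^\star_{a^\star}}\leq\I{R<b^\star_{a^\star}}$, and hence also $\prns{(b^\star_{a^\star}-R)^+}^2\leq\I{R<b^\star_{a^\star}}$. Taking expectations over $R\sim\nu(a^\star)$ gives
\begin{align*}
    \Varb[R\sim\nu(a^\star)]{(b^\star_{a^\star}-R)^+}\leq \Eb[R\sim\nu(a^\star)]{\prns{(b^\star_{a^\star}-R)^+}^2}\leq \Pr(R<b^\star_{a^\star})\leq\tau,
\end{align*}
where the final inequality is the bound $\Pr(X<F^\dagger(\tau))\leq\tau$ established in part (a) of the proof of \cref{lem:cvar-quantile-optimality}, applied to $X\sim\nu(a^\star)$ whose $\tau$-th quantile is $b^\star_{a^\star}$. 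This variance bound is exactly what justifies the $\tau$ appearing in the uniform Bernstein inequality \cref{eq:mab-mu-muhat-concentrate}; evaluating that inequality at $b=b^\star_{a^\star},a=a^\star$ then gives $\wh\mu_k(b^\star_{a^\star},a^\star)-\mu(b^\star_{a^\star},a^\star)\leq\bon_k(a^\star)$ on the concentration event, since $\bon_k(a^\star)$ is constructed to match this deviation with the variance replaced by its upper bound $\tau$.

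Combining the two displays yields the claim. I would state the lemma as holding on the $(1-\delta)$-probability event on which \cref{eq:mab-mu-muhat-concentrate} holds uniformly over all $b,a,k$ (via the tape construction of \cref{sec:tape-method}), so that the single invocation at $(b^\star_{a^\star},a^\star)$ is simultaneously valid for every episode $k$. The only genuine content here is the variance-at-the-quantile bound, which is what makes the $\sqrt{\tau}$-scaled bonus both correct and tight; everything else is the routine reduction to $a^\star$ plus one-sided concentration, so I expect no real obstacle beyond checking that the log constant inside $\bon_k$ dominates the Bernstein tail constant, a cosmetic point absorbed into the union bound of \cref{sec:tape-method}.
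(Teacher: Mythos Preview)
Your proposal is correct and follows essentially the same approach as the paper: specialize the minimum to $a^\star$ and then invoke the Bernstein concentration \cref{eq:mab-mu-muhat-concentrate} at $(b^\star_{a^\star},a^\star)$, with the bonus matching the deviation because the variance at the $\tau$-th quantile is at most $\tau$. The paper's version phrases the concentration step as holding for every arm $a$ at its own $b^\star_a$ before specializing, while you go directly to $a^\star$; your added justification that $\Pr(R<b^\star_{a^\star})\leq\tau$ via part (a) of \cref{lem:cvar-quantile-optimality} is exactly the variance bound the paper asserts in its proof sketch but leaves implicit in the formal lemma proof.
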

\begin{proof}
Fix any $k\in[K]$.
By \cref{eq:mab-mu-muhat-concentrate}, for all $a\in\Acal$,
\begin{align*}
    \wh\mu_k(b^\star_a,a)-\bon_k(a)\leq \mu(b^\star_a,a).
\end{align*}
Hence,
\begin{align*}
    \min_{a\in\Acal}\braces{ \wh\mu_k(b^\star_a,a)-\bon_k(a) }
    &\leq \wh\mu_k(b^\star_{a^\star},a^\star)-\bon_k(a^\star)
    \\&\leq \mu(b^\star_{a^\star},a^\star).
\end{align*}
\end{proof}

\banditUpper*
\begin{proof}[Proof of \cref{thm:bandit upper}]
\begin{align*}
    &\phantom{=}\mabregret_\tau(K)
    \\&= \sum_{k=1}^K\cvar_\tau^\star-\cvar_\tau(R(a_k))
    \\&= \sum_{k=1}^K \braces{b^\star_{a^\star}-\tau^{-1}\mu(b^\star_{a^\star},a^\star)} -\cvar_\tau(\nu(a_k))
    \\&\leq \sum_{k=1}^K \braces{b^\star_{a^\star}-\tau^{-1}\min_{a\in\Acal}\prns{ \wh\mu_k(b^\star_{a^\star},a)-\bon_k(a) } } -\cvar_\tau(\nu(a_k)) \tag{pessimism \cref{lem:mab-pessimism}}
    \\&= \sum_{k=1}^K \max_{a\in\Acal}\braces{b^\star_{a^\star}-\tau^{-1}\prns{ \wh\mu_k(b^\star_{a^\star},a)-\bon_k(a) } } -\cvar_\tau(\nu(a_k))
    \\&\leq K\eps+\sum_{k=1}^K \max_{a\in\Acal}\braces{\wh b_{a,k}-\tau^{-1}\prns{ \wh\mu_k(\wh b_{a,k},a)-\bon_k(a) } } -\cvar_\tau(\nu(a_k)) \tag{$\wh b_{a,k}$ is $\eps$-optimal}
    \\&= K\eps+\sum_{k=1}^K \braces{\wh b_{a_k,k}-\tau^{-1}\prns{ \wh\mu_k(\wh b_{a_k,k},a_k)-\bon_k(a_k) } } -\cvar_\tau(\nu(a_k)) \tag{defn. of $a_k$}
    \\&\leq K\eps+\sum_{k=1}^K \tau^{-1}\bon_k(a_k) + \max_{b\in[0,1]}\braces{b-\tau^{-1}\wh\mu_k(b,a_k) } -\cvar_\tau(\nu(a_k))
    \\&= K\eps+\sum_{k=1}^K \tau^{-1}\bon_k(a_k) + \wh\cvar_\tau(\braces{r_i}_{i\in\Ical_k(a_k)})-\cvar_\tau(\nu(a_k))
    \\&\leq K\eps+\sum_{k=1}^K \sqrt{\frac{2L}{N_k(a)\tau}} + \frac{L}{N_k(a)\tau} + \sqrt{\frac{3L}{N_k(a)\tau}} + \frac{15L}{N_k(a)\tau} \tag{$\cvar_\tau$ concentration \cref{thm:cvar-concentration-main}}
    \\&\leq K\eps+\sum_{k=1}^K \sqrt{\frac{10L}{N_k(a)\tau}} + \frac{16L}{N_k(a)\tau}
    \\&\leq K\eps+\sqrt{10L\tau^{-1}}\cdot\sqrt{ AKL } + 16L\tau^{-1}\cdot A\log(K). \tag{elliptical potential \cref{lem:tabular-elliptical-potential}}
\end{align*}
A technical detail is that $\cvar_\tau$ concentration only applies when $N_k(a)\geq 25L$.
We can trivially bound the total regret of the episodes when $N_k(a)<25L$ by $25AL$.
Also, we remark the concentration step applies since $\braces{r_i}_{i\in\Ical_k(a)}$ are i.i.d. via the tape framework, so we do not need to generalize \cref{thm:cvar-concentration-main} to martingale sequences.
Finally, setting $\eps=\sqrt{\tau^{-1}A/2K}$ renders it lower order.
\end{proof}

\newpage
\section{Proofs for Augmented MDP}\label{sec:augmented-mdp}
We first define the memory-MDP model, where the MDP is also equipped with a memory generator $M_h$, which generates $m_h\sim M_h(s_h,a_h,r_h,\Hcal_h)$.
These memories are stored into the history $\Hcal_h=(s_t,a_t,r_t,m_t)_{t\in[h-1]}$ and may be used by history dependent policies in future time steps.
Concretely, rolling out $\pi$ proceeds as follows: for any $h\in[H]$,
$a_h\sim\pi_h(s_h,\Hcal_h)$, $s_{h+1}\sim P^\star(s_h,a_h)$, $r_h\sim R(s_h,a_h)$ and $m_h\sim M_h(s_h,a_h,r_h,\Hcal_h)$.

We can also extend the above formulation to the augmented MDP, where the state is augmented with $b$ as in \cref{sec:augmented-mdp-bellman-eqs}.
Here, the history is $\Haug_h=(s_t,b_t,a_t,r_t,m_t)_{t\in[h-1]}$.
Let $\Piaug_\Hcal$ represent the set of history dependent policies in this augmented MDP with memory.
Also, recall that $\Piaug$ is the set of Markov, deterministic policies in the augmented MDP.

The $V$ function is defined for these multiple types of policies:
\begin{align*}
    \pi\in\Pi_\Hcal: V^\pi_h(s_h,b_h;\Hcal_h) &= \Eb[\pi]{ \prns{ b_h-\sum_{t=h}^Hr_t }^+\mid s_h,b_h,\Hcal_h }
    \\\rho\in\Piaug: V^\rho_h(s_h,b_h) &= \Eb[\rho]{ \prns{ b_h-\sum_{t=h}^Hr_t }^+\mid s_h,b_h }
    \\\rho\in\Piaug_\Hcal: V^\rho_h(s_h,b_h;\Haug_h) &= \Eb[\rho]{ \prns{ b_h-\sum_{t=h}^Hr_t }^+\mid s_h,b_h,\Haug_h }
\end{align*}

Notice that rolling out $\rho,b$ in the augmented MDP is equivalent to rolling out $\pi^{\rho,b}$ in the original MDP, where
\begin{align*}
    \pi^{\rho,b}_h(s_h,\Hcal_h) = \rho_h(s_h,b-r_1-...-r_{h-1}).
\end{align*}
Thus, it's evident that their $V$ functions should match.
\begin{lemma}\label{lem:augmented-policy-value}
Fix any $\rho\in\Piaug,h\in[H]$, augmented state $(s_h,b_h)$ and history $\Hcal_h$.
Then, we have $V^\rho_h(s_h,b_h)=V_h^{\pi^{\rho,b}}(s_h,b_h;\Hcal_h)$ for $b=b_h+r_1+...+r_{h-1}$.
In particular, we have $V^\rho_1(s_1,\cdot)=V^{\pi^{\rho,b}}_1(s_1,\cdot)$.
\end{lemma}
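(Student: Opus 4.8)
The plan is to show that the two rollouts are realizations of the \emph{same} stochastic process under a natural coupling, so that the returns $\prns{b_h - \sum_{t=h}^H r_t}^+$ agree pathwise and hence the value functions agree in expectation. Since the transition kernels $P_h^\star$ and reward distributions $R_h$ are identical in the augmented and original MDPs, the only thing that can differ between the two rollouts is the sequence of actions; the whole argument therefore reduces to checking that the selected actions coincide at every step.

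First I would record the budget bookkeeping. Under the conditioning on $\Hcal_h$, the rewards $r_1,\dots,r_{h-1}$ are frozen, so $b = b_h + r_1 + \cdots + r_{h-1}$ is a deterministic constant and $\pi^{\rho,b}$ is a well-defined history-dependent policy. In the augmented rollout of $\rho$ from $(s_h,b_h)$, the budget evolves by $b_{t+1}=b_t-r_t$, hence $b_t = b_h - \sum_{j=h}^{t-1} r_j$ for every $t\ge h$. On the other side, at step $t$ the policy $\pi^{\rho,b}$ plays $\rho_t(s_t,\, b - r_1 - \cdots - r_{t-1})$. Substituting $b = b_h + r_1 + \cdots + r_{h-1}$, the budget argument telescopes to $b - r_1 - \cdots - r_{t-1} = b_h - \sum_{j=h}^{t-1} r_j = b_t$, so $\pi^{\rho,b}$ feeds $\rho$ exactly the augmented state it would have seen, and the two policies select the same action at every step.

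Next I would make the coupling precise by a forward induction on $t = h, h+1, \dots, H$: draw a single sequence of next-states and rewards from the shared kernels and run both rollouts on it. The base case $t=h$ holds since both start from $(s_h,b_h)$; the inductive step uses the action identity above together with the fact that, given $(s_t,a_t)$, the law of $(r_t,s_{t+1})$ is the same in both MDPs, so the trajectories $(s_t,a_t,r_t)_{t=h}^H$ coincide under the coupling. Consequently $\sum_{t=h}^H r_t$ is the same random variable in both rollouts, and taking expectations of $\prns{b_h - \sum_{t=h}^H r_t}^+$ yields $V^\rho_h(s_h,b_h) = V^{\pi^{\rho,b}}_h(s_h,b_h;\Hcal_h)$. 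The ``in particular'' claim is just the case $h=1$, where $\Hcal_1$ is empty, so $b = b_1$ and the identity holds for every initial budget.

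The main obstacle is purely the bookkeeping: correctly tracking how the external budget argument passed to $\rho$ by $\pi^{\rho,b}$ relates to the internal augmented state, and in particular keeping straight which rewards are frozen by the conditioning on $\Hcal_h$ (namely $r_1,\dots,r_{h-1}$, which define the constant $b$) versus which are random and shared by the coupling (namely $r_h,\dots,r_H$). Once the telescoping identity $b - r_1 - \cdots - r_{t-1} = b_t$ is in hand, everything else is a routine coupling argument.
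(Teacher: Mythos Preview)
Your proposal is correct and follows essentially the same approach as the paper's own proof, which simply notes that with $b=b_h+r_1+\cdots+r_{h-1}$ the trajectories of $(\rho,b)$ and $\pi^{\rho,b}$ are exactly coupled. You have spelled out in more detail the telescoping identity and the forward-induction coupling that the paper's two-line proof leaves implicit, but the underlying idea is identical.
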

\begin{proof}
The setting of $b$ in the lemma satisfies $b_h=b-r_1-...-r_{h-1}$.
Therefore, the trajectories of $(\rho,b)$ and $\pi^{\rho,b}$ are exactly coupled.
\end{proof}

We now show the key result of this section.
The theorem shows that the $V^\star,U^\star$ functions defined via the Bellman optimality equations (from \cref{sec:augmented-mdp-bellman-eqs})
correspond to the $V,U$ functions of $\rho^\star$.
Furthermore, the Markov (in augmented state) and deterministic $\rho^\star$ is in fact an optimal policy amongst
all history-dependent policies in the augmented MDP with memory!
This result and our proof is analogous to the ``Markov optimality theorem'' of vanilla RL, \eg, \citep{puterman2014markov}, \citep[Theorem 1.7]{rltheorybookAJKS}.
\begin{theorem}\label{thm:rho-star-optimality}
For all $h$ we have $U^\star_h = U^{\rho^\star}_h$ and $V^\star_h = V^{\rho^\star}_h$.
Furthermore, for all $s_h,b_h,\Haug_h$, we have
\begin{align*}
    V^\star_h(s_h,b_h) = \inf_{\rho\in\Piaug_\Hcal}V^\rho_h(s_h,b_h;\Haug_h).
\end{align*}
In particular, $V^\star_1(s_1,b) = \inf_{\rho\in\Piaug_\Hcal}V^\rho_1(s_1,b)$ for all $b$.
\end{theorem}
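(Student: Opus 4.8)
The plan is to prove both assertions by backward induction on $h$, running from the terminal layer $h=H+1$ down to $h=1$. First I would dispatch the identities $U^\star_h = U^{\rho^\star}_h$ and $V^\star_h = V^{\rho^\star}_h$. The base case is immediate since $V^\star_{H+1}(s,b)=b^+=V^{\rho^\star}_{H+1}(s,b)$ by definition. For the inductive step, suppose $V^\star_{h+1}=V^{\rho^\star}_{h+1}$; since both $U^\star_h$ and $U^{\rho^\star}_h$ are defined as $\Eb[s_{h+1},r_h]{V_{h+1}(s_{h+1},b_h-r_h)}$ with the same $V_{h+1}$, they coincide. Then $V^\star_h(s_h,b_h)=\min_{a\in\Acal}U^\star_h(s_h,b_h,a)=U^\star_h(s_h,b_h,\rho^\star_h(s_h,b_h))=U^{\rho^\star}_h(s_h,b_h,\rho^\star_h(s_h,b_h))=V^{\rho^\star}_h(s_h,b_h)$, where the last equality is the Bellman equation for $V^{\rho^\star}$ evaluated at the deterministic action $\rho^\star_h(s_h,b_h)$.

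For the variational characterization I would prove the two inequalities separately. The $\geq$ direction is easy: $\rho^\star\in\Piaug\subseteq\Piaug_\Hcal$ is a memoryless, Markov special case of a history-dependent augmented policy whose value is independent of $\Haug_h$, so $V^\star_h(s_h,b_h)=V^{\rho^\star}_h(s_h,b_h)\geq\inf_{\rho\in\Piaug_\Hcal}V^\rho_h(s_h,b_h;\Haug_h)$. The substance is the $\leq$ direction, which I would again establish by backward induction, proving the stronger pointwise claim that $V^\star_h(s_h,b_h)\leq V^\rho_h(s_h,b_h;\Haug_h)$ for every $\rho\in\Piaug_\Hcal$ and every history $\Haug_h$.

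The base case at $h=H+1$ holds because the terminal value is $b^+$ regardless of policy or history. For the inductive step I would first record the one-step recursion for a history-dependent value, obtained by conditioning on the action $a_h\sim\rho_h(s_h,b_h,\Haug_h)$ and then on the draw of $(r_h,s_{h+1},m_h)$,
\[
V^\rho_h(s_h,b_h;\Haug_h)=\Eb[a_h\sim\rho_h(s_h,b_h,\Haug_h)]{\Eb[s_{h+1},r_h]{V^\rho_{h+1}(s_{h+1},b_h-r_h;\Haug_{h+1})}},
\]
where $\Haug_{h+1}$ appends $(s_h,b_h,a_h,r_h,m_h)$ to $\Haug_h$. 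The continuation of $\rho$ from step $h+1$ given $\Haug_{h+1}$ is itself a policy in $\Piaug_\Hcal$, so the inductive hypothesis yields $V^\rho_{h+1}(s_{h+1},b_h-r_h;\Haug_{h+1})\geq V^\star_{h+1}(s_{h+1},b_h-r_h)$ pointwise. Substituting, the inner conditional expectation is at least $U^\star_h(s_h,b_h,a_h)$, and averaging over $a_h$ lower-bounds by $\min_{a\in\Acal}U^\star_h(s_h,b_h,a)=V^\star_h(s_h,b_h)$, closing the induction. The final ``in particular'' statement is the $h=1$ case with the empty initial history.

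The hard part is conceptual rather than computational: I must verify that the memory component is genuinely inert. This is exactly what the induction encodes -- because the reward and transition laws $R_h,P_h^\star$ depend only on $(s_h,a_h)$ and not on the memory $m_h$ or the remainder of $\Haug_h$, the inner expectation over $(s_{h+1},r_h)$ collapses to $U^\star_h(s_h,b_h,a_h)$ no matter how the policy exploits history or memory to select $a_h$. The only genuine care needed is bookkeeping: confirming that the tail of a history-dependent augmented policy is again admissible (so the inductive hypothesis applies) and that the budget update $b_{h+1}=b_h-r_h$ is tracked consistently through the conditioning.
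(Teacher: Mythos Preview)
Your proposal is correct and follows essentially the same backward-induction argument as the paper: both parts are proved by induction from $h=H+1$ down, using the Bellman recursions, and the key step in the variational characterization is that after applying the inductive hypothesis the inner expectation reduces to $U^\star_h(s_h,b_h,a_h)$, which is then minimized over $a_h$. The only cosmetic difference is that you fix $\rho$ and prove the pointwise bound $V^\star_h\leq V^\rho_h$ before taking the infimum, whereas the paper starts from $\inf_\rho V^\rho_h$ and pushes the infimum inside the one-step expectation; these are equivalent formulations of the same argument. One small notational point: your displayed one-step recursion should also integrate over $m_h$ (since $\Haug_{h+1}$ contains it), which you acknowledge in the concluding discussion but omit from the formula itself.
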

\begin{proof}
We first prove the claim that $U^\star_h = U^{\rho^\star}_h$ and $V^\star_h = V^{\rho^\star}_h$.
The base case of $H+1$ is trivial since $V_{H+1}(s,b)=b^+$ everywhere.
For the inductive step, fix any $h\in[H]$ and suppose the claim is true for $h+1$.
Then,
\begin{align*}
    U^{\rho^\star}_h(s_h,b_h,a_h)
    &= \Eb[s_{h+1}\sim P^\star(s_h,a_h),r_h\sim R(s_h,a_h)]{ V^{\rho^\star}_{h+1}(s_{h+1},b_h-r_h) } \tag{Bellman Eqns}
    \\&= \Eb[s_{h+1}\sim P^\star(s_h,a_h),r_h\sim R(s_h,a_h)]{ V^\star_{h+1}(s_{h+1},b_h-r_h) } \tag{IH}
    \\&= U^\star_h(s_h,b_h,a_h). \tag{Bellman Opt. Eqns}
\end{align*}
This proves that $U^\star_h = U^{\rho^\star}_h$.
For $V$,
\begin{align*}
    V^{\rho^\star}_h(s_h,b_h)
    &= \Eb[a_h\sim\rho^\star_h(s_h,b_h)]{U^{\rho^\star}_h(s_h,b_h,a_h)} \tag{Bellman Eqns}
    \\&= \Eb[a_h\sim\rho^\star_h(s_h,b_h)]{U^\star_h(s_h,b_h,a_h)} \tag{above claim}
    \\&= \min_{a_h\in\Acal} U^\star_h(s_h,b_h,a_h) \tag{defn. of $\rho^\star_h$}
    \\&= V^\star_h(s_h,b_h). \tag{Bellman Opt. Eqns}
\end{align*}
Therefore, we've shown that $V^\star_h = V^{\rho^\star}_h$.

We also prove the second claim inductively.
The base case is again trivial since $V_{H+1}(s,b)=b^+$ everywhere.
For the inductive step, fix any $h\in[H]$ and suppose the claim is true for $h+1$.
Now fix any $s_h,b_h$ and $\Haug_h$,
\begin{align*}
    &\inf_{\rho\in\Piaug_\Hcal}V^\rho_h(s_h,b_h;\Haug_h)
    \\&=\inf_{\rho\in\Piaug_\Hcal}\Eb[\rho]{ \prns{b_h-\sum_{t=h}^Hr_t}^+\mid s_h,b_h,\Haug_h }
    \\&\geq\inf_{\rho\in\Piaug_\Hcal}\Eb[a_h,s_{h+1},r_h,m_h]{ \inf_{\rho'\in\Piaug_\Hcal}\Eb[\rho']{\prns{b_h-\sum_{t=h}^Hr_t}^+\mid s_{h+1},b_{h+1},\Haug_{h+1}} }
    \\&= \inf_{\rho\in\Piaug_\Hcal}\Eb[a_h\sim\rho_h(s_h,b_h,\Haug_h)]{\Eb[s_{h+1}\sim P^\star(s_h,a_h),r_h\sim R(s_h,a_h)]{ V_{h+1}^\star(s_{h+1},b_h-r_h) } } \tag{IH}
    \\&= \min_{a\in\Acal}\Eb[s_{h+1}\sim P^\star(s_h,a_h),r_h\sim R(s_h,a_h)]{ V_{h+1}^\star(s_{h+1},b_h-r_h) } \tag{$\star$}
    \\&= V^\star_h(s_h,b_h). \tag{by defn.}
\end{align*}
There are three key steps.
First, the inequality is due to expanding out one step, where $a_h\sim \rho_h(s_h,b_h,\Haug_h),s_{h+1}\sim P^\star(s_h,a_h),r_h\sim R(s_h,a_h),m_h\sim M_h(s_h,a_h,r_h,\Hcal_h)$,
then push the inf for future steps inside the expectation.
Second, the IH invocation is significant as it essentially removes dependence of the memory hallucinations $m_h$.
Third, the step marked with $\star$ is significant since, regardless of the history, the current best action is just to minimize the inner function (which is independent of the history).
We also have $V^\star_h(s_h,b_h)\leq \inf_{\rho\in\Piaug_\Hcal}V^\rho_h(s_h,b_h;\Haug_h)$ since by the first part of the claim, $V^\star_h$ is the value of $\rho^\star\in\Piaug_\Hcal$.
Thus, we've shown $V^\star_h(s_h,b_h)= \inf_{\rho\in\Piaug_\Hcal}V^\rho_h(s_h,b_h;\Haug_h)$.
\end{proof}

As a corollary of the above theorem, we can restrict the policy class to
history-dependent policies on the non-augmented MDP (and without history).
\optimalityAugmentedMarkovPolicies*
\begin{proof}[Proof of \cref{thm:aug_mdp_theorem}]
The first equality is directly from \cref{thm:rho-star-optimality}.
We now prove the second equality. For any $b$,
\begin{align*}
    &\min_{\rho\in\Piaug} V^{\pi^{\rho,b}}_1(s_1,b)
    \\&=\min_{\rho\in\Piaug} V^\rho_1(s_1,b) \tag{\cref{lem:augmented-policy-value}}
    \\&=\min_{\pi\in\Piaug_\Hcal} V^\pi_1(s_1,b) \tag{\cref{thm:rho-star-optimality}}
    \\&\leq\min_{\pi\in\Pi_\Hcal} V^\pi_1(s_1,b)
    \\&\leq\min_{\rho\in\Piaug} V^{\pi^{\rho,b}}_1(s_1,b).
\end{align*}
The last two inequalities is due to considering strictly smaller sets of policies.
Therefore, we have equality throughout, which proves the claim.
\end{proof}

\section{Proofs for \rlalg{}}

\subsection{The high probability good event}
In this section, we derive all the high probability results needed in the remainder of the proof.
Fix any failure probability $\delta\in(0,1)$.
Then, w.p. at least $1-\delta$, for all $h\in[H],k\in[K],s\in\Scal,a\in\Acal$, we have, for all $b\in[0,1],s'\in\Scal$,
\begin{align}
    &\abs{ \prns{\wh P_{k}(s,a)-P^\star(s,a)}^\top \Eb[r_h]{V^\star_{h+1}(\cdot,b-r_h)} } \leq \sqrt{\frac{L}{N_{k}(s,a)}}, \label{eq:hoeffding-V-star}
    \\&\abs{ \prns{\wh P_{k}(s,a)-P^\star(s,a)}^\top \Eb[r_h]{V^\star_{h+1}(\cdot,b-r_h)} } \leq \sqrt{\frac{2\Varb[s'\sim \wh P_{k}(s,a)]{ \Eb[r_h]{V^\star_{h+1}(s',b-r_h)} }L}{N_{k}(s,a)}} + \frac{L}{N_{k}(s,a)}, \label{eq:bernstein-V-star}
    \\&\abs{ \wh P_{k}(s'\mid s,a)-P^\star(s'\mid s,a) }\leq \sqrt{ \frac{2P^\star(s'\mid s,a)L}{N_{k}(s,a)} } + \frac{L}{N_{k}(s,a)}. \label{eq:bernstein-p-hat}
\end{align}
where $r_h\sim R(s,a)$ in the expectations.
\begin{proof}
\cref{eq:hoeffding-V-star} is due to uniform Hoeffding applied to $\Eb[s_{h+1},r_h]{V^\star_{h+1}(s_{h+1},b-r_h)}$, which is 1-Lipschitz in $b$ by \cref{lem:V-star-lipschitz}.
\cref{eq:bernstein-p-hat} is due to standard Bernstein's inequality on the indicator random variable on $(s,a,s')$, \ie, $\I{(s_{h,k},a_{h,k},s_{h+1,k})=(s,a,s')}$.
\cref{eq:bernstein-V-star} is due to uniform empirical Bernstein applied to $\Eb[s_{h+1},r_h]{V^\star_{h+1}(s_{h+1},b-r_h)}$.
In \cref{sec:concentration-lemmas}, we derive and review these uniform results.
\end{proof}
\begin{lemma}\label{lem:V-star-lipschitz}
For any $h\in[H]$ and $s\in\Scal$, $V^\star_h(s,\cdot)$ is $1$-Lipschitz.
\end{lemma}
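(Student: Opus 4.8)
The plan is to prove this by backward induction on $h$, from $h=H+1$ down to $h=1$, establishing at each step that $V^\star_h(s,\cdot)$ is $1$-Lipschitz for every fixed $s\in\Scal$. The base case is immediate: by definition $V^\star_{H+1}(s,b)=b^+$, and the ReLU map $b\mapsto b^+$ satisfies $\abs{b_1^+-b_2^+}\leq\abs{b_1-b_2}$, so it is $1$-Lipschitz.

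For the inductive step, fix $h\in[H]$ and assume $V^\star_{h+1}(s',\cdot)$ is $1$-Lipschitz for all $s'$. I would first handle $U^\star_h(s,\cdot,a)$ for a fixed action $a$, writing $U^\star_h(s,b,a)=\Eb[s_{h+1}\sim P_h^\star(s,a),r_h\sim R_h(s,a)]{V^\star_{h+1}(s_{h+1},b-r_h)}$. The crucial structural observation is that the budget transition $b_{h+1}=b-r_h$ shifts the next-step budget by an amount $r_h$ that does \emph{not} depend on $b$. Hence for any $b_1,b_2\in[0,1]$, the two arguments $b_1-r_h$ and $b_2-r_h$ differ by exactly $b_1-b_2$, and the induction hypothesis gives $\abs{V^\star_{h+1}(s_{h+1},b_1-r_h)-V^\star_{h+1}(s_{h+1},b_2-r_h)}\leq\abs{b_1-b_2}$ pointwise in $(s_{h+1},r_h)$. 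Taking expectations (a contraction for the absolute value, via Jensen) preserves the bound, so $U^\star_h(s,\cdot,a)$ is $1$-Lipschitz.

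Finally, since $V^\star_h(s,b)=\min_{a\in\Acal}U^\star_h(s,b,a)$ is a pointwise minimum of finitely many $1$-Lipschitz functions, it is itself $1$-Lipschitz: letting $a^\star$ be the minimizer at $b_2$, we get $V^\star_h(s,b_1)\leq U^\star_h(s,b_1,a^\star)\leq U^\star_h(s,b_2,a^\star)+\abs{b_1-b_2}=V^\star_h(s,b_2)+\abs{b_1-b_2}$, and the symmetric inequality follows by swapping $b_1,b_2$. This closes the induction.

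I expect no serious obstacle here; the only point deserving emphasis is recognizing that subtracting the reward \emph{translates} the budget argument uniformly in $b$, so the Lipschitz constant neither accumulates across the $H$ layers nor degrades under the minimization, but stays fixed at $1$. Had the $b$-dynamics scaled rather than shifted the budget, the constant would compound over the horizon; it is precisely the additive update $b_{h+1}=b_h-r_h$ that keeps it at $1$.
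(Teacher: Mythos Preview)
Your proof is correct and takes essentially the same approach as the paper: backward induction on $h$, with the ReLU base case at $h=H+1$, then using the inductive hypothesis together with the fact that the budget update $b\mapsto b-r_h$ is a pure shift to carry the $1$-Lipschitz property through the expectation and the minimum. The paper condenses the last step via the inequality $\abs{\min_a f(a)-\min_a g(a)}\leq \max_a\abs{f(a)-g(a)}$ rather than your explicit minimizer argument, but the content is identical.
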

\begin{proof}
We proceed by induction.
Let $b,b'\in[0,1]$ be arbitrary.
At $h=H+1$, $\abs{V^\star_{H+1}(s,b)-V^\star_{H+1}(s,b')}=\abs{b^+-(b')^+}\leq \abs{b-b'}$ since the ReLU is $1$-Lipschitz.
For the inductive step, fix any $h$ and suppose the claim is true at $h+1$. Then
$\abs{V^\star_h(s,b)-V^\star_h(s,b')}=\abs{ \min_a \Eb[s_{h+1},r_h]{V^\star_{h+1}(s_{h+1},b-r_h)}- \min_a \Eb[s_{h+1},r_h]{V^\star_{h+1}(s_{h+1},b'-r_h)} }\leq \max_a\abs{\Eb[s_{h+1},r_h]{V^\star_{h+1}(s_{h+1},b-r_h)-V^\star_{h+1}(s_{h+1},b'-r_h)}}\leq \abs{b-b'}$, by the IH.
The expectations are over $s_{h+1}\sim P^\star(s,a)$ and $r_h\sim R(s,a)$.
\end{proof}

We now show that the projected error between $\wh P_{k}(s,a)$ and $P^\star$ can be bounded in two ways.
\begin{lemma}\label{lem:projected-error-phat-pstar}
For any $\delta\in(0,1)$, w.p. at least $1-\delta$, we have for all $f:\Scal\to[0,1]$,
\begin{align*}
    &\abs{ \prns{\wh P_{k}(s,a)-P^\star(s,a)}^\top f } \leq \min\braces{8\sqrt{\frac{SL}{N_{k}(s,a)}}, \frac{\Eb[s'\sim P^\star(s,a)]{f(s')}}{H} + \xi_{k}(s,a) },
    \\&\text{where } \xi_{k}(s,a) := \min\braces{1,\frac{2HSL}{N_{k}(s,a)}}.
\end{align*}
\end{lemma}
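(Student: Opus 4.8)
The plan is to work on the high-probability event on which the coordinate-wise Bernstein bound \cref{eq:bernstein-p-hat} holds simultaneously for all $h,k,s,a$ and all $s'$, and to establish the two bounds separately before taking their minimum. Since $f\geq 0$, I would first apply the triangle inequality to write $\abs{\prns{\wh P_{h,k}(s,a)-P^\star_h(s,a)}^\top f}\leq\sum_{s'}\abs{\wh P_{h,k}(s'\mid s,a)-P^\star_h(s'\mid s,a)}\,f(s')$, and then substitute the per-coordinate bound $\sqrt{2P^\star_h(s'\mid s,a)L/N_{h,k}(s,a)}+L/N_{h,k}(s,a)$ from \cref{eq:bernstein-p-hat}.

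For the first bound $8\sqrt{SL/N_{h,k}(s,a)}$, I would use $f\leq 1$ to reduce to the $\ell_1$ distance $\sum_{s'}\abs{\wh P_{h,k}(s'\mid s,a)-P^\star_h(s'\mid s,a)}$. Cauchy--Schwarz gives $\sum_{s'}\sqrt{P^\star_h(s'\mid s,a)}\leq\sqrt S$ (using $\sum_{s'}P^\star_h(s'\mid s,a)=1$), so the leading contribution is at most $\sqrt{2SL/N_{h,k}(s,a)}$ and the lower-order contribution is $SL/N_{h,k}(s,a)$. A short case split finishes it: if $N_{h,k}(s,a)\geq SL$ then $SL/N_{h,k}(s,a)\leq\sqrt{SL/N_{h,k}(s,a)}$, so the sum is at most $(1+\sqrt2)\sqrt{SL/N_{h,k}(s,a)}\leq 8\sqrt{SL/N_{h,k}(s,a)}$; otherwise the $\ell_1$ distance is trivially at most $2<8\sqrt{SL/N_{h,k}(s,a)}$.

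For the second bound I keep the weight $f(s')$ and handle the leading Bernstein term by the AM--GM split $\sqrt{ab}\leq a/(2c)+cb/2$. Writing $\sqrt{2P^\star_h(s'\mid s,a)L/N_{h,k}(s,a)}\,f(s')=\sqrt{\prns{P^\star_h(s'\mid s,a)f(s')}\cdot\prns{2Lf(s')/N_{h,k}(s,a)}}$ and taking $c=H/2$ produces the upper bound $\tfrac1H P^\star_h(s'\mid s,a)f(s')+\tfrac{HL}{2N_{h,k}(s,a)}f(s')$. Summing over $s'$, the first piece becomes exactly $\tfrac1H\Eb[s'\sim P^\star_h(s,a)]{f(s')}$, and since $f\leq 1$ gives $\sum_{s'}f(s')\leq S$, the residual plus the linear term $\sum_{s'}\tfrac{L}{N_{h,k}(s,a)}f(s')\leq SL/N_{h,k}(s,a)$ is at most $\tfrac{SHL}{2N_{h,k}(s,a)}+\tfrac{SL}{N_{h,k}(s,a)}\leq \tfrac{2HSL}{N_{h,k}(s,a)}$ (using $H\geq1$). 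To replace this error by $\xi_{h,k}(s,a)=\min\braces{1,2HSL/N_{h,k}(s,a)}$, I note the trivial bound $\abs{\prns{\wh P_{h,k}(s,a)-P^\star_h(s,a)}^\top f}\leq 1$ (both $\wh P_{h,k}(s,a)^\top f$ and $P^\star_h(s,a)^\top f$ lie in $[0,1]$), which is $\leq\tfrac1H\Eb[s'\sim P^\star_h(s,a)]{f(s')}+1$; taking the smaller of this and the Bernstein-derived bound produces the $\min$ inside $\xi$, and taking the min with the first bound completes the lemma.

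The proof is essentially bookkeeping; the only genuine choices are the Cauchy--Schwarz step yielding the $\sqrt S$ factor in the first bound and the weighting $c=H/2$ in the AM--GM split, which is exactly what converts the variance-type term into a $1/H$-discounted copy of $\Eb[s'\sim P^\star_h(s,a)]{f(s')}$ (so it can later be absorbed into the simulation-lemma recursion) while keeping the residual at the $\widetilde\Ocal(HS/N_{h,k}(s,a))$ scale. The main thing to be careful about is tracking constants through the two truncations --- the case split $N_{h,k}(s,a)\gtrless SL$ for the first bound, and the comparison against the trivial bound of $1$ for the second --- so that the stated constant $8$ and the precise form of $\xi_{h,k}(s,a)$ come out correctly.
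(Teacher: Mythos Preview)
Your argument is correct. The second bound matches the paper's route almost exactly: the paper also starts from the coordinate-wise Bernstein bound \cref{eq:bernstein-p-hat}, then applies Cauchy--Schwarz followed by AM--GM to extract the $\tfrac1H\Eb[s'\sim P^\star_h(s,a)]{f(s')}$ term, and finally truncates by the trivial bound $1$. Your version applies a weighted AM--GM directly to each coordinate rather than doing Cauchy--Schwarz first and AM--GM second; the two are equivalent up to constants and both land inside $\xi_{h,k}(s,a)$.

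The first bound, however, is handled differently. The paper does \emph{not} derive $8\sqrt{SL/N_{h,k}(s,a)}$ from the per-coordinate Bernstein inequality. Instead it takes an $\eps$-net of the function class $\{f:\Scal\to[0,1]\}$ (metric entropy $\approx S\log(S/\eps)$), applies Hoeffding to each element of the net, and union-bounds; setting $\eps=1/K$ absorbs the discretization error and yields the uniform bound with the extra $\sqrt S$ coming from the log-covering number. Your approach instead bounds the $\ell_1$ distance $\|\wh P_{h,k}(s,a)-P^\star_h(s,a)\|_1$ via the same coordinate-wise Bernstein event already in force, using Cauchy--Schwarz on $\sum_{s'}\sqrt{P^\star_h(s'\mid s,a)}\leq\sqrt S$ and a case split on $N_{h,k}(s,a)\gtrless SL$. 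This is more elementary and self-contained: it reuses \cref{eq:bernstein-p-hat} rather than introducing a separate covering argument and an additional union bound over the net. The paper's route is the standard one (cf.\ \citealp[Lemma~7.2]{rltheorybookAJKS}) and generalizes more readily beyond the tabular setting, but in the tabular case your argument is arguably cleaner since both bounds then flow from a single high-probability event.
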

\begin{proof}
Fix any $f:\Scal\to[0,1]$.
The first bound of $\sqrt{\frac{SL}{N_{k}(s,a)}}$ follows from applying Hoeffding on an $\eps$-net of the space of $f$'s,
\ie, for each $g$ in the net, we have $\abs{ \prns{\wh P_{k}(s,a)-P^\star(s,a)}^\top g }\leq \sqrt{\frac{L}{N_{k}(s,a)}}$.
This $\eps$-net has $\ell_2$ bounded by $\sqrt{S}$. This gives the metric entropy $\log (1+2\sqrt{S}/\eps)^S \approx S\log(S/\eps)$.
Setting $\eps = \frac{1}{HK}$ makes the error lower order, \ie, $\frac{1}{HK}\leq\frac{1}{N_{k}(s,a)}$, which gives the uniform result over all $f$'s.
The detailed proof is in \citep[Lemma 7.2]{rltheorybookAJKS}.

The second bound also appears in \citet{rltheorybookAJKS} as Lemma 7.8.
We prove its proof for completeness:
\begin{align*}
    \abs{ \prns{\wh P_{k}(s,a)-P^\star(s,a)}^\top f }
    &\leq\sum_{s'} \abs{ \wh P_{k}(s'\mid s,a)-P^\star(s'\mid s,a) } f(s')
    \\&\leq\sum_{s'}f(s') \sqrt{ \frac{2P^\star(s'\mid s,a)L}{N_{k}(s,a)} } + \frac{f(s')L}{N_{k}(s,a)} \tag{\cref{eq:bernstein-p-hat}}
    \\&\leq \sqrt{S\frac{\sum_{s'}2P^\star(s'\mid s,a) f^2(s')L}{N_{k}(s,a)}} + \frac{SL}{N_{k}(s,a)} \tag{C-S}
    \\&\leq \frac{SHL}{N_{k}(s,a)} + \frac{\sum_{s'} P^\star(s'\mid s,a) f(s')}{H} + \frac{SL}{N_{k} (s,a)} \tag{AM-GM}.
\end{align*}
Finally, since $\wh P_{k}(s,a)^\top f$ and $P^\star(s,a)^\top f$ are both in $[0,1]$,
a trivial bound is $1$, which is why $\xi_{h,k}$ can be truncated.
\end{proof}

Finally, we also have consequences of Azuma's inequality \cref{lem:mult-azuma}.
W.p. at least $1-\delta$, for all $h\in[H]$,
\begin{align}
&\sum_{k=1}^K\Eb[\wh\rho^k,\wh b_k]{ 2\bon_{h,k}(s_h,b_h,a_h)+\xi_{h,k}(s_h,a_h)\mid\Ecal_k}
\leq 6L + 2\sum_{k=1}^K 2\bon_{h,k}(s_{h,k},b_{h,k},a_{h,k})+\xi_{h,k}(s_{h,k},a_{h,k}), \tag{Azuma 1} \label{eq:azuma-regret-decomp}
\end{align}
where we used the fact that WLOG we truncated the bonus to be at most $1$ (by sentence below \cref{eq:key-bonus-inequality}),
so $\nm{2\bon_{h,k}+\xi_{h,k}}_\infty\leq 3$.
$\Ecal_{k}$ denotes the complete trajectories from episodes $1,2,...,k-1$

For the Bernstein bonus proofs, we'll also need,
\begin{align}
&\sum_{h=1}^H\sum_{k=1}^K\Eb[s'\sim P^\star(s_{h,k},a_{h,k}),r\sim R(s_{h,k},a_{h,k})]{ \prns{ \wh V^\uparrow_{h+1,k}(s',b_{h,k}-r)-\wh V^\downarrow_{h+1,k}(s',b_{h,k}-r) }^2 \mid \Ecal_k,\Hcal_{h,k}} \nonumber
\\&\leq \sqrt{HKL} + \sum_{h=1}^H\sum_{k=1}^K\prns{ \wh V^\uparrow_{h+1,k}(s_{h+1,k},b_{h+1,k})-\wh V^\downarrow_{h+1,k}(s_{h+1,k},b_{h+1,k}) }^2, \tag{Azuma 2} \label{eq:azuma-V-upper-lower-diff}
\end{align}
and
\begin{align}
&\sum_{h=1}^H\sum_{k=1}^K\Eb[s'\sim P^\star(s_{h,k},a_{h,k}),r\sim R(s_{h,k},a_{h,k})]{ \prns{ V^{\wh\rho^k}_{h+1}(s',b_{h,k}-r)-\wh V^\downarrow_{h+1,k}(s',b_{h,k}-r) }^2 \mid \Ecal_k,\Hcal_{h,k}} \nonumber
\\&\leq \sqrt{HKL} + \sum_{h=1}^H\sum_{k=1}^K\prns{ V^{\wh\rho^k}_{h+1}(s_{h+1,k},b_{h+1,k})-\wh V^\downarrow_{h+1,k}(s_{h+1,k},b_{h+1,k}) }^2, \tag{Azuma 3} \label{eq:azuma-V-rho-lower-diff}
\end{align}
where we've used that the envelope is at most $1$ and $b_{h+1,k}=b_{h,k}-r_{h,k}$.
Here, $\Hcal_{h,k}=(s_{t,k},a_{t,k},r_{t,k})_{t\in[h-1]}$ denotes the history before $h$ for the $k$-th episode.
Also, for all $h\in[H]$,
\begin{align}
&\sum_{k=1}^K\Varb[s'\sim P^\star(s_{h,k},a_{h,k})]{ \Eb[r\sim R(s_{h,k},a_{h,k})]{ V^{\wh\rho^k}_{h+1}(s',b_{h,k}-r) } } \nonumber
\\&\leq 2L + 2\sum_{k=1}^K \Eb[\wh\rho^k,\wh b_k]{ \Varb[s'\sim P^\star(s_h,a_h)]{ \Eb[r\sim R(s_h,a_h)]{ V^{\wh\rho^k}_{h+1}(s',b_{h,k}-r) } } \mid \Ecal_{k}}. \tag{Azuma 4} \label{eq:azuma-variance-V-rhok}
\end{align}
Also, for all $h,t\in[H]$ where $t\geq h$,
\begin{align}
    \sum_{k=1}^K\Eb[\wh\rho^k,s_h=s_{h,k},b_h=b_{h,k}]{ 2\bebon_{t,k}(s_t,b_t,a_t)+\xi_{t,k}(s_t,a_t)\mid\Ecal_k}
    &\leq 6L + 2\sum_{k=1}^K2\bebon_{t,k}(s_{t,k},b_{t,k},a_{t,k})+\xi_{t,k}(s_{t,k},a_{t,k}). \tag{Azuma 5} \label{eq:azuma-sum-bonuses-start-at-h}
\end{align}
Finally a standard Azuma also gives, for all $h\in[H]$,
\begin{align}
    \sum_{k=1}^K\Eb[\wh\rho^k,\wh b_k]{ 2\bebon_{h,k}(s_h,b_h,a_h)+\xi_{h,k}(s_h,a_h) \mid\Ecal_k } \leq 3\sqrt{KL} + \sum_{k=1}^K2\bebon_{h,k}(s_{h,k},b_{h,k},a_{h,k})+\xi_{h,k}(s_{h,k},a_{h,k}) \tag{Azuma 6} \label{eq:azuma-standard-for-bonus-sum}
\end{align}

Henceforth, we always condition on the union of these high probability statements to be true.

\subsection{Key lemmas for \rlalg{}}
In general, the bonus should be designed to satisfy for all $h\in[H],k\in[K]$,
\begin{align}
    \forall s,b,a: \abs{ \prns{\wh P_{k}(s,a)-P^\star(s,a)}^\top \Eb[r_h\sim R(s,a)]{V_{h+1}^\star(\cdot,b-r_h) } }
    \leq \bon_{h,k}(s,b,a). \tag{\bon$\bigstar$} \label{eq:key-bonus-inequality}
\end{align}
The bonus only needs to satisfy this inequality for our proofs to work.
WLOG, since the left hand side is the difference between two numbers in $[0,1]$, we can always assume bonus to be truncated by $1$, i.e. has envelope $1$.

We say that pessimism is satisfied at $h\in[H],k\in[K]$ if
\begin{align}
    \forall s,b: \wh V_{h,k}^\downarrow(s,b)\leq V^\star_h(s,b). \tag{Pessimism ($V^\downarrow$)}\label{eq:pessimism}
\end{align}

\begin{lemma}\label{lem:pessimism-inductive-step}
For any $k\in[K],h\in[H]$, suppose \ref{eq:pessimism} holds at $(h+1,k)$ and \ref{eq:key-bonus-inequality} holds at $(h,k)$.
Then \ref{eq:pessimism} holds at $(h,k)$.
\end{lemma}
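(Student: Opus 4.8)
The plan is to run the standard pessimism induction from UCBVI-type analyses, but in the pessimistic (minimization) direction dictated by the augmented-MDP Bellman optimality equations. The goal at step $(h,k)$ is to show $\wh V_{h,k}^\downarrow(s,b)\leq V^\star_h(s,b)$ for every $(s,b)$. First I would record the trivial but essential fact that $V^\star_h(s,b)\geq 0$: since $V^\star_h=V^{\rho^\star}_h$ by \cref{thm:rho-star-optimality} is an expectation of the nonnegative quantity $\prns{b-\sum_{t=h}^Hr_t}^+$, it is nonnegative. This is exactly what makes the clipping at $0$ in the definition of $\wh V_{h,k}^\downarrow$ harmless.

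The core step is an action-wise comparison: for every $a\in\Acal$ I would show $\wh U_{h,k}^\downarrow(s,b,a)\leq U^\star_h(s,b,a)$. Starting from the definition $\wh U_{h,k}^\downarrow(s,b,a)=\wh P_{h,k}(s,a)^\top\Eb[r_h]{\wh V_{h+1,k}^\downarrow(\cdot,b-r_h)}-\bon_{h,k}(s,b,a)$, I would apply the inductive hypothesis \ref{eq:pessimism} at $(h+1,k)$, namely $\wh V_{h+1,k}^\downarrow(\cdot,b')\leq V^\star_{h+1}(\cdot,b')$ pointwise. Because $\wh P_{h,k}(s,a)$ is a probability vector with nonnegative entries, the operator $\wh P_{h,k}(s,a)^\top(\cdot)$ is monotone, so this upgrades to $\wh P_{h,k}(s,a)^\top\Eb[r_h]{\wh V_{h+1,k}^\downarrow(\cdot,b-r_h)}\leq \wh P_{h,k}(s,a)^\top\Eb[r_h]{V^\star_{h+1}(\cdot,b-r_h)}$. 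Adding and subtracting $P^\star_h(s,a)^\top\Eb[r_h]{V^\star_{h+1}(\cdot,b-r_h)}=U^\star_h(s,b,a)$ and invoking the bonus inequality \ref{eq:key-bonus-inequality} to cancel the transition-estimation error against $\bon_{h,k}(s,b,a)$ then yields $\wh U_{h,k}^\downarrow(s,b,a)\leq U^\star_h(s,b,a)$.

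To finish, I would take the minimum over $a$ on both sides. Since $\wh\rho^k_h(s,b)=\argmin_a\wh U_{h,k}^\downarrow(s,b,a)$ and $V^\star_h(s,b)=\min_a U^\star_h(s,b,a)$ by the Bellman optimality equation, this gives $\min_a\wh U_{h,k}^\downarrow(s,b,a)\leq V^\star_h(s,b)$. Combining with $0\leq V^\star_h(s,b)$, both arguments of the max in $\wh V_{h,k}^\downarrow(s,b)=\max\braces{\min_a\wh U_{h,k}^\downarrow(s,b,a),0}$ are bounded by $V^\star_h(s,b)$, so the max is as well, which completes the induction.

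I expect no serious obstacle, as this is a routine pessimism-propagation argument; the content lives in the bonus design, not in this inductive step. The only points needing care are bookkeeping ones: (i) using that $\wh P_{h,k}(s,a)$ has nonnegative weights so the inductive inequality survives the transition expectation; (ii) tracking the correct direction of the inequality, since pessimism here propagates through a \emph{minimum} over actions rather than a maximum, because $V^\star$ is the optimal (minimal) budget-shortfall value; and (iii) confirming the clipping at $0$ is harmless, which is precisely where nonnegativity of $V^\star_h$ is invoked. None of these requires nontrivial computation.
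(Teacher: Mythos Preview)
Your proposal is correct and follows essentially the same approach as the paper's proof: both establish $\wh U_{h,k}^\downarrow(s,b,a)\leq U^\star_h(s,b,a)$ for every $a$ by combining the inductive hypothesis with the bonus inequality \ref{eq:key-bonus-inequality}, and then pass to $\wh V_{h,k}^\downarrow$. The only cosmetic difference is that you handle the clipping at $0$ by invoking $V^\star_h\geq 0$, whereas the paper handles it by a one-line case split on whether $\wh V_{h,k}^\downarrow(s,b)=0$.
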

\begin{proof}
First, we prove pessimism for $\wh U_{h,k}^\downarrow$.
For any $s,b,a$, we have
\begin{align*}
    &\phantom{=}\wh U_{h,k}^\downarrow(s,b,a)-U^\star_h(s,b,a)
    \\&=\wh P_{k}(s,a)^\top\Eb[r_h\sim R(s,a)]{\wh V_{h+1,k}^\downarrow(\cdot,b-r_h)}-\bon_{h,k}(s,b,a)- P^\star(s,a)^\top\Eb[r_h\sim R(s,a)]{V^\star_{h+1}(\cdot,b-r_h)}
    \\&\leq\prns{\wh P_{k}(s,a)-P^\star(s,a)}^\top \Eb[r_h\sim R(s,a)]{V^\star_{h+1}(\cdot,b-r_h)}-\bon_{h,k}(s,b,a) \tag{IH}
    \\&\leq 0. \tag*{by \ref{eq:key-bonus-inequality}}
\end{align*}
To complete the proof,
if $\wh V_{h,k}^\downarrow(s,b)=0$, it is trivially pessimistic, and if not,
\begin{align*}
    \wh V_{h,k}^\downarrow(s,b)-V^\star_h(s,b)
    &=\min_a\braces{\wh U_{h,k}^\downarrow(s,b,a)}-\min_a\braces{U^\star_h(s,b,a)}
    \\&\leq \max_a\braces{\wh U_{h,k}^\downarrow(s,b,a)-U^\star_h(s,b,a)}
    \\&\leq 0.
\end{align*}
\end{proof}

Remarkably, we show Simulation lemma also holds for \rlalg{}.
Here, it is also required that the bonus satisfies \ref{eq:key-bonus-inequality}.
As for notation, recall $\Ecal_k$ represents the episodes before and not including $k$.
\begin{lemma}[Simulation Lemma]\label{lem:simulation}
Fix any $k\in[K],t\in[H]$. Then, for all $s_t,b_t$, we have
\begin{align}
    &V_t^{\wh\rho^k}(s_t,b_t)-\wh V_{t,k}^\downarrow(s_t,b_t) \nonumber
    \\&\leq \sum_{h=t}^H \Eb[\wh\rho^k,s_t,b_t]{\bon_{h,k}(s_h,b_h,a_h) +
    \prns{P^\star(s_h,a_h)-\wh P_{k}(s_h,a_h) }^\top \wh V_{h+1,k}^\downarrow(\cdot,b_{h+1}) \mid \Ecal_k}. \label{eq:simulation-1-IH}
\end{align}
Furthermore, if we assume that \ref{eq:key-bonus-inequality} holds, then for all $s_t,b_t$,
\begin{align}
    V_t^{\wh\rho^k}(s_t,b_t)-\wh V_{t,k}^\downarrow(s_t,b_t)\leq
    \sum_{h=t}^H \prns{1+1/H}^{h-t} \Eb[\wh\rho^k,s_t,b_t]{2\bon_{h,k}(s_h,b_h,a_h)+\xi_{h,k}(s_h,a_h) \mid \Ecal_k }. \label{eq:simulation-2-IH}
\end{align}
In particular,
\begin{align*}
    V_1^{\wh\rho^k}(s_1,b)-\wh V_{1,k}^\downarrow(s_1,b) \leq e\sum_{h=1}^H \Eb[\wh\rho^k]{ 2\bon_{h,k}(s_h,b_h,a_h) + \xi_{h,k}(s_h,a_h)\mid\Ecal_k}.
\end{align*}
\end{lemma}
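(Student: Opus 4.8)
The plan is to prove both displayed inequalities by backward induction on $t$, starting from the trivial base case $t=H+1$, where $V^{\wh\rho^k}_{H+1}(s,b)=\wh V^\downarrow_{H+1,k}(s,b)=b^+$ so both sides vanish, and working down to $t=1$. Throughout I condition on $\Ecal_k$, which fixes $\wh P_{h,k}$, $\bon_{h,k}$ and $\wh\rho^k$, and I write $a_t=\wh\rho^k_t(s_t,b_t)$ for the (deterministic) action. The single-step engine is shared by both parts: since $\wh V^\downarrow_{t,k}(s_t,b_t)=\max\{\wh U^\downarrow_{t,k}(s_t,b_t,a_t),0\}\geq \wh U^\downarrow_{t,k}(s_t,b_t,a_t)$ while $V^{\wh\rho^k}_t(s_t,b_t)=U^{\wh\rho^k}_t(s_t,b_t,a_t)$, I would first bound
\[
V^{\wh\rho^k}_t(s_t,b_t)-\wh V^\downarrow_{t,k}(s_t,b_t)\leq U^{\wh\rho^k}_t(s_t,b_t,a_t)-\wh U^\downarrow_{t,k}(s_t,b_t,a_t).
\]
Expanding both $U$'s and adding and subtracting $P^\star_t(s_t,a_t)^\top\EE_{r_t}[\wh V^\downarrow_{t+1,k}(\cdot,b_t-r_t)]$ produces exactly three terms: the bonus $\bon_{t,k}(s_t,b_t,a_t)$, the transition-error term $(P^\star_t(s_t,a_t)-\wh P_{t,k}(s_t,a_t))^\top\EE_{r_t}[\wh V^\downarrow_{t+1,k}(\cdot,b_t-r_t)]$, and the next-step difference $\EE_{s_{t+1},r_t}[V^{\wh\rho^k}_{t+1}(s_{t+1},b_{t+1})-\wh V^\downarrow_{t+1,k}(s_{t+1},b_{t+1})]$ taken under the \emph{true} dynamics (where $s_{t+1}\sim P^\star_t(s_t,a_t)$, $r_t\sim R_t(s_t,a_t)$, $b_{t+1}=b_t-r_t$). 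Telescoping this over the rollin of $\wh\rho^k$ and using the tower rule to turn the one-step expectations into $\Eb[\wh\rho^k,s_t,b_t]{\cdot}$ yields the first inequality \cref{eq:simulation-1-IH} with no further assumptions.

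For the second inequality I would control the transition-error term using \cref{eq:key-bonus-inequality}. First observe that $(\bon\bigstar)$ forces pessimism $\wh V^\downarrow_{h+1,k}\leq V^\star_{h+1}$ at every $(h+1,k)$ by \cref{lem:pessimism-inductive-step}. Writing $g(s')=\EE_{r_t}[\wh V^\downarrow_{t+1,k}(s',b_t-r_t)]$ and $g^\star(s')=\EE_{r_t}[V^\star_{t+1}(s',b_t-r_t)]$, split
\[
(P^\star_t-\wh P_{t,k})^\top g=(P^\star_t-\wh P_{t,k})^\top g^\star+(\wh P_{t,k}-P^\star_t)^\top(g^\star-g).
\]
The first piece is at most $\bon_{t,k}(s_t,b_t,a_t)$ directly by \cref{eq:key-bonus-inequality}. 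For the second, pessimism gives $f:=g^\star-g\geq 0$ with $f\in[0,1]$, so I would invoke the second (Bernstein/AM--GM) branch of \cref{lem:projected-error-phat-pstar} to get $(\wh P_{t,k}-P^\star_t)^\top f\leq \tfrac1H(P^\star_t)^\top f+\xi_{t,k}(s_t,a_t)$.

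The crucial accounting step is converting $(P^\star_t)^\top f$ back into the recursion quantity via the sandwich $\wh V^\downarrow_{t+1,k}\leq V^\star_{t+1}\leq V^{\wh\rho^k}_{t+1}$; here the right inequality is the CVaR optimality $V^\star_{t+1}=\inf_{\rho}V^\rho_{t+1}$ from \cref{thm:rho-star-optimality} applied to $\wh\rho^k\in\Piaug$. This gives $(P^\star_t)^\top f=\EE_{s_{t+1},r_t}[V^\star_{t+1}-\wh V^\downarrow_{t+1,k}]\leq \EE_{s_{t+1},r_t}[V^{\wh\rho^k}_{t+1}-\wh V^\downarrow_{t+1,k}]$, i.e.\ precisely the next-step difference appearing in the single-step decomposition. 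Collecting all terms yields the one-step contraction
\[
V^{\wh\rho^k}_t(s_t,b_t)-\wh V^\downarrow_{t,k}(s_t,b_t)\leq 2\bon_{t,k}(s_t,b_t,a_t)+\xi_{t,k}(s_t,a_t)+\prns{1+\tfrac1H}\EE_{s_{t+1},r_t}\left[V^{\wh\rho^k}_{t+1}(s_{t+1},b_{t+1})-\wh V^\downarrow_{t+1,k}(s_{t+1},b_{t+1})\right].
\]
Applying the induction hypothesis at $t+1$ inside this expectation and multiplying by $(1+1/H)$ shifts every weight $(1+1/H)^{h-t-1}$ to $(1+1/H)^{h-t}$, reproducing \cref{eq:simulation-2-IH}; the ``in particular'' statement at $t=1$ then follows from $(1+1/H)^{h-1}\leq(1+1/H)^H\leq e$.

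I expect the main obstacle to be the transition-error bounding in the third paragraph. Extracting the $1/H$ contraction factor hinges on the exact sandwich $\wh V^\downarrow_{t+1,k}\leq V^\star_{t+1}\leq V^{\wh\rho^k}_{t+1}$: pessimism is needed to fix the sign and keep $f\in[0,1]$ so that \cref{lem:projected-error-phat-pstar} is applicable, while CVaR optimality of $V^\star$ is needed to re-express the resulting $P^\star$-expectation as the very next-step difference the induction consumes. A secondary care point is that this difference must be taken under $P^\star_t$ (the rollin dynamics), which is exactly why the single-step decomposition is arranged to add and subtract $P^\star_t(s_t,a_t)^\top\wh V^\downarrow_{t+1,k}$ rather than $\wh P_{t,k}(s_t,a_t)^\top\wh V^\downarrow_{t+1,k}$.
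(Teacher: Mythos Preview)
Your proposal is correct and follows essentially the same route as the paper's proof: backward induction with the single-step decomposition $V^{\wh\rho^k}_t-\wh V^\downarrow_{t,k}\leq U^{\wh\rho^k}_t-\wh U^\downarrow_{t,k}$, the add/subtract of $P^\star_t(s_t,a_t)^\top\EE_{r_t}[\wh V^\downarrow_{t+1,k}]$, the split of the transition-error term via $V^\star_{t+1}$, bounding one piece by \ref{eq:key-bonus-inequality} and the other by \cref{lem:projected-error-phat-pstar} (using pessimism to ensure $f\in[0,1]$), and the sandwich $\wh V^\downarrow_{t+1,k}\leq V^\star_{t+1}\leq V^{\wh\rho^k}_{t+1}$ to feed the $\tfrac1H(P^\star_t)^\top f$ back into the recursion. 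The paper does exactly this, including the $(1+1/H)^H\leq e$ step at the end.
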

\begin{proof}
Fix any $k$ and $t$. All expectations in the proof will condition on $\Ecal_k$; this way, the randomness is only from rolling in the policy $\wh\rho^k$ and not over any of the prior episodes.

\textbf{First claim: } Let's first show \cref{eq:simulation-1-IH} by induction.
The base case is $t=H+1$, we have $V_{H+1}^{\wh\rho^k}(s,b)=\wh V_{H+1,k}^\downarrow(s,b)=b^+$, so $V_{H+1}^{\wh\rho^k}-\wh V_{H+1,k}=0$.

For the inductive step, fix any $t\leq H$ and suppose \cref{eq:simulation-1-IH} is true for $t+1$.
Let us denote $a_t = \wh\rho^k_t(s_t,b_t) = \argmin_a\wh U_{t,k}(s_t,b_t,a)$, so $\wh V_{t,k}^\downarrow(s_t,b_t) = \max\braces{\wh U_{t,k}^\downarrow(s_t,b_t,a_t), 0} \geq \wh U_{t,k}^\downarrow(s_t,b_t,a_t)\geq \wh P_{t,k}(s_t,a_t)^\top \Eb[r_t]{\wh V_{t+1,k}^\downarrow(\cdot,b_{t+1})}-\bon_{t,k}(s_t,b_t,a_t)$, where $b_{t+1}=b_t-r_t$ is the random next budget.
So, we have
\begin{align*}
    &V_{t}^{\wh\rho^k}(s_t,b_t) - \wh V_{t,k}^\downarrow(s_t,b_t)
    \\&\leq U_t^{\wh\rho^k}(s_t,b_t,a_t)- \wh U_{t,k}^\downarrow(s_t,b_t,a_t) \nonumber
    \\&= \bon_{t,k}(s_t,b_t,a_t) - \wh P_{t,k}(s_t,a_t)^\top\Eb[r_t]{\wh V_{t+1,k}^\downarrow(\cdot,b_{t+1})} + P^\star_t(s_t,a_t)^\top \Eb[r_t]{V_{t+1,k}^{\wh\rho^k}(\cdot,b_{t+1})} \nonumber
    \\&= \bon_{t,k}(s_t,b_t,a_t) + \prns{P^\star_t(s_t,a_t)-\wh P_{t,k}(s_t,a_t)}^\top\Eb[r_t]{\wh V_{t+1,k}^\downarrow(\cdot,b_{t+1})}
    \\&+ P^\star_t(s_t,a_t)^\top\Eb[r_t]{V_{t+1}^{\wh\rho^k}(\cdot,b_{t+1}) - \wh V_{t+1,k}^\downarrow(\cdot,b_{t+1})}
    \\&\leq \bon_{t,k}(s_t,b_t,a_t) + \prns{P^\star_t(s_t,a_t)-\wh P_{t,k}(s_t,a_t)}^\top\Eb[r_t]{\wh V_{t+1,k}^\downarrow(\cdot,b_{t+1})}
    \\&+ \Eb[s_{t+1}\sim P^\star_t(s_t,a_t)]{ \sum_{h=t+1}^H \Eb[\wh\rho^k,s_{t+1},b_{t+1}]{ \bon_{h,k}(s_h,b_h,a_h) + \prns{P^\star(s_h,a_h)-\wh P_{k}(s_h,a_h) }^\top\wh V_{h+1,k}^\downarrow(\cdot,b_{h+1}) } } \tag{IH}
    \\&= \sum_{h=t}^H \Eb[\wh\rho^k,s_t,b_t]{\bon_{h,k}(s_h,b_h,a_h) + \prns{P^\star(s_h,a_h)-\wh P_{k}(s_h,a_h) }^\top \wh V_{h+1,k}^\downarrow(\cdot,b_{h+1}) }.
\end{align*}
This concludes the proof for the first claim.

\textbf{Second claim: }
Now let us show \cref{eq:simulation-2-IH} by induction.
The base case at $t=H+1$ is same as the first claim.
For the inductive step, fix any $t\leq H$ and suppose \cref{eq:simulation-2-IH} is true for $t+1$. Then, continuing from the line before invoking the IH of the first claim, we have
\begin{align*}
    &V_{t}^{\wh\rho^k}(s_t,b_t) - \wh V_{t,k}^\downarrow(s_t,b_t)
    \\&\leq \bon_{t,k}(s_t,b_t,a_t) + \prns{P^\star_t(s_t,a_t)-\wh P_{t,k}(s_t,a_t)}^\top\Eb[r_t]{\wh V_{t+1,k}^\downarrow(\cdot,b_{t+1})}
    \\&+ P^\star_t(s_t,a_t)^\top\Eb[r_t]{\prns{V_{t+1}^{\wh\rho^k}(\cdot,b_{t+1}) - \wh V_{t+1,k}^\downarrow(\cdot,b_{t+1})}}
    \\&= \bon_{t,k}(s_t,b_t,a_t) + \prns{P^\star_t(s_t,a_t)-\wh P_{t,k}(s_t,a_t)}^\top\Eb[r_t]{\wh V_{t+1,k}^\downarrow(\cdot,b_{t+1})-V_{t+1}^\star(\cdot,b_{t+1})}
    \\&+ \prns{P^\star_t(s_t,a_t)-\wh P_{t,k}(s_t,a_t)}^\top\Eb[r_t]{V_{t+1}^\star(\cdot,b_{t+1})}
    + P^\star_t(s_t,a_t)^\top \Eb[r_t]{V_{t+1}^{\wh\rho^k}(\cdot,b_{t+1}) - \wh V_{t+1,k}^\downarrow(\cdot,b_{t+1})}
    \\&\leq \bon_{t,k}(s_t,b_t,a_t) + \xi_{t,k}(s_t,a_t) + \frac{1}{H} P^\star_t(s_t,a_t)^\top \Eb[r_t]{V_{t+1}^\star(\cdot,b_{t+1})-\wh V_{t+1,k}^\downarrow(\cdot,b_{t+1})} \tag{\cref{lem:projected-error-phat-pstar}}
    \\&+ \bon_{t,k}(s_t,b_t,a_t) + P^\star_t(s_t,a_t)^\top \Eb[r_t]{V_{t+1}^{\wh\rho^k}(\cdot,b_{t+1}) - \wh V_{t+1,k}^\downarrow(\cdot,b_{t+1})} \tag{premise (\ref{eq:key-bonus-inequality})}
    \\&\leq 2\bon_{t,k}(s_t,b_t,a_t) + \xi_{t,k}(s_t,a_t) + (1+1/H) P^\star_t(s_t,a_t)^\top\Eb[r_t]{V_{t+1}^{\wh\rho^k}(\cdot,b_{t+1}) - \wh V_{t+1,k}^\downarrow(\cdot,b_{t+1})} \tag{$V^\star\leq V^{\wh\rho^k}$}
    \\&\leq 2\bon_{t,k}(s_t,b_t,a_t) + \xi_{t,k}(s_t,a_t)
    \\&+ (1+1/H)\sum_{h=t+1}^H(1+1/H)^{h-t-1}\Eb[\wh\rho^k,s_t,b_t]{ 2\bon_{h,k}(s_h,b_h,a_h)+\xi_{h,k}(s_h,a_h) }. \tag{IH}
\end{align*}
This completes the inductive proof.
Observing that $(1+1/H)^H\leq\exp(1/H)^H = e$ gives the corollary.
\end{proof}

\subsection{\rlalg{} with Hoeffding Bonus}

The Hoeffding bonus $\hobon_{h,k}(s,a)$ defined in \ref{eq:hoeffding-bonus-def} satisfies the
crucial bonus requirement \ref{eq:key-bonus-inequality} by the uniform Hoeffding's inequality result of \cref{eq:hoeffding-V-star}.
Thus, we have pessimism for all $k,h$ with the Hoeffding bonus.

\hoeffingBonusRegret*
\begin{proof}[Proof of \cref{thm:hoeffding-bonus-regret}]
Let $R(\rho^k,\wh b_k)$ denote the distribution of returns from rolling in $\wh\rho^k$ starting at $\wh b_k$.
For any $k$, we have
\begin{align}
    \cvar_\tau(R(\wh\rho^k,\wh b_k))
    &= \max_{b\in[0,1]}\braces{ b-\tau^{-1}\Eb[\wh\rho^k,\wh b_k]{\prns{b-\sum_{t\in[H]} r_t}^+} } \nonumber
    \\&\geq \wh b_k-\tau^{-1}\Eb[\wh\rho^k,\wh b_k]{\prns{\wh b_k-\sum_{t\in[H]} r_t}^+} \nonumber
    \\&= \wh b_k-\tau^{-1}V_1^{\wh\rho^k}(s_1,\wh b_k). \label{eq:cvar-of-rho-hat-lower-bound}
\end{align}
Therefore,
\begin{align*}
    &\phantom{=}\rlregret_\tau(K)
    \\&=\sum_{k=1}^K \cvar_\tau^\star-\cvar_\tau(R(\wh\rho^k,\wh b_k))
    \\&=\sum_{k=1}^K \braces{b^\star-\tau^{-1}V_1^\star(s_1,b^\star)}-\cvar_\tau(R(\wh\rho^k,\wh b_k))
    \\&\leq\sum_{k=1}^K \braces{b^\star-\tau^{-1}\wh V_{1,k}^\downarrow(s_1,b^\star)}-\cvar_\tau(R(\wh\rho^k,\wh b_k)) \tag{\ref{eq:pessimism}}
    \\&\leq\sum_{k=1}^K \braces{\wh b_k-\tau^{-1}\wh V_{1,k}^\downarrow(s_1,\wh b_k)}-\braces{\wh b_k-\tau^{-1}V^{\wh\rho^k}_1(s_1,\wh b_k)} \tag{defn. of $\wh b_k$ and \cref{eq:cvar-of-rho-hat-lower-bound}}
    \\&=\tau^{-1}\sum_{k=1}^K \prns{ V_1^{\wh\rho^k}(s_1,\wh b_k)-\wh V_{1,k}^\downarrow(s_1,\wh b_k) }
    \\&\leq e\tau^{-1}\sum_{(h,k)\in[H]\times[K]}\Eb[\wh\rho^k,\wh b_k]{ 2\hobon_{h,k}(s_h,a_h)+\xi_{h,k}(s_h,a_h)\mid\Ecal_k} \tag{Simulation \cref{lem:simulation}}
    \\&\leq 6e\tau^{-1}HL+2e\tau^{-1}\sum_{(h,k)\in[H]\times[K]}2\hobon_{h,k}(s_{h,k},a_{h,k})+\xi_{h,k}(s_{h,k},a_{h,k}) \tag{\cref{eq:azuma-regret-decomp}}
    \\&\leq 6e\tau^{-1}HL+2e\tau^{-1}\prns{ 2\sqrt{L}\cdot\sqrt{SAHKL} + 2HSL\cdot SA\log(K)} \tag{elliptical potential \cref{lem:tabular-elliptical-potential}}
    \\&\leq 4e\tau^{-1}\sqrt{SAHK}L + 10e\tau^{-1}S^2AHL^2,
\end{align*}
which concludes the proof.
\end{proof}

\subsection{\rlalg{} with Bernstein Bonus}
When running \cref{alg:ucbvi-cvar} with the Bernstein bonus $\bebon_{h,k}(s,b,a)$ (\cref{eq:bernstein-bonus-def}), we need to also show that $\wh V^\uparrow_{h,k}$ are optimistic for $V^\star_h$.
We say that optimism is satisfied at $(h,k)\in[H]\times[K]$ if
\begin{align}
    \forall s,b: V^\star_h(s,b)\leq \wh V^\uparrow_{h,k}(s,b). \tag{Optimism ($V^\uparrow$)}\label{eq:optimism}
\end{align}
\begin{lemma}\label{lem:optimism-inductive-step}
For any $k\in[K],h\in[H]$, suppose \ref{eq:optimism} holds at $(h+1,k)$ and \ref{eq:key-bonus-inequality} holds at $(h,k)$.
Then \ref{eq:optimism} holds at $(h,k)$.
\end{lemma}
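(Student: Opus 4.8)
The plan is to mirror the pessimism argument of \cref{lem:pessimism-inductive-step}, but in the reverse direction and, crucially, \emph{at the single greedy action} $a^\downarrow := \wh\rho^k_h(s,b) = \argmin_a\wh U^\downarrow_{h,k}(s,b,a)$. The key structural observation is that $\wh V^\uparrow_{h,k}(s,b)$ is defined as $\min\braces{\wh U^\uparrow_{h,k}(s,b,a^\downarrow),1}$ evaluated at this same action chosen greedily with respect to the \emph{pessimistic} $\wh U^\downarrow_{h,k}$, rather than by maximizing over $a$; this alignment of indices is exactly what makes the induction close. Since $V^\star_h(s,b)\in[0,1]$ (it is an expectation of $(b-\sum_t r_t)^+\in[0,b]\subseteq[0,1]$, by the same induction as in \cref{lem:V-star-lipschitz} with base case $V^\star_{H+1}(s,b)=b^+\le 1$), the truncation at $1$ is lossless, so it suffices to establish $V^\star_h(s,b)\le \wh U^\uparrow_{h,k}(s,b,a^\downarrow)$.

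First I would use $V^\star_h(s,b)=\min_a U^\star_h(s,b,a)\le U^\star_h(s,b,a^\downarrow)$, reducing the goal to $U^\star_h(s,b,a^\downarrow)\le \wh U^\uparrow_{h,k}(s,b,a^\downarrow)$. Expanding both sides at $a^\downarrow$, invoking the inductive hypothesis \ref{eq:optimism} at $(h+1,k)$ (i.e. $\wh V^\uparrow_{h+1,k}\ge V^\star_{h+1}$ pointwise) to replace $\wh V^\uparrow_{h+1,k}$ by $V^\star_{h+1}$ inside the nonnegative $\wh P_{h,k}$-weighted expectation — which only decreases the right-hand side — yields
\begin{align*}
\wh U^\uparrow_{h,k}(s,b,a^\downarrow) - U^\star_h(s,b,a^\downarrow)
&\ge \prns{\wh P_{h,k}(s,a^\downarrow)-P^\star_h(s,a^\downarrow)}^\top \Eb[r_h]{V^\star_{h+1}(\cdot,b-r_h)} + \bon_{h,k}(s,b,a^\downarrow).
\end{align*}
The bonus requirement \ref{eq:key-bonus-inequality} at $(h,k)$ bounds the first term below by $-\bon_{h,k}(s,b,a^\downarrow)$, so the whole expression is $\ge 0$. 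Combining with $V^\star_h(s,b)\le U^\star_h(s,b,a^\downarrow)\le \wh U^\uparrow_{h,k}(s,b,a^\downarrow)$ and $V^\star_h(s,b)\le 1$ gives $V^\star_h(s,b)\le\min\braces{\wh U^\uparrow_{h,k}(s,b,a^\downarrow),1}=\wh V^\uparrow_{h,k}(s,b)$, which is the claim.

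There is essentially no hard computation here; the entire content is structural, and I expect no genuine obstacle. The one point that must be handled carefully is precisely the one flagged above: the optimistic back-up is taken at $a^\downarrow=\wh\rho^k_h(s,b)$, the action greedy with respect to the pessimistic estimate, not at an independently optimizing action. Passing through $V^\star_h=\min_a U^\star_h\le U^\star_h(\cdot,a^\downarrow)$ is what aligns the two indices so that both the IH and the bonus inequality apply to the \emph{same} summand; had $\wh V^\uparrow_{h,k}$ instead maximized over $a$, this direct comparison against $V^\star_h=\min_a U^\star_h$ would fail. The only auxiliary fact is the boundedness $V^\star_h(s,b)\in[0,1]$ that makes the $\min\braces{\cdot,1}$ truncation harmless, which is immediate.
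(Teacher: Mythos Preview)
Your proposal is correct and follows essentially the same approach as the paper: use the inductive hypothesis at $h+1$ to show $U^\star_h(s,b,a)\le \wh U^\uparrow_{h,k}(s,b,a)$, then specialize to the greedy action $\wh\rho^k_h(s,b)$ and combine with $V^\star_h(s,b)=\min_a U^\star_h(s,b,a)\le U^\star_h(s,b,\wh\rho^k_h(s,b))$, handling the truncation via $V^\star_h\le 1$. Your explicit emphasis that $\wh V^\uparrow_{h,k}$ is evaluated at the \emph{pessimism-greedy} action (rather than via a max over $a$) is exactly the structural point that makes the argument go through, and the paper's proof uses it identically.
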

\begin{proof}
First, we prove optimism for $\wh U^\uparrow_{h,k}$. For any $s,b,a$ we have
\begin{align*}
    &\phantom{=}U^\star_h(s,b,a)-\wh U^\uparrow_{h,k}(s,b,a)
    \\&=P^\star(s,a)^\top\Eb[r_h]{V^\star_{h+1}(\cdot,b-r_h)} - \wh P_{k}(s,a)^\top\Eb[r_h]{\wh V^\uparrow_{h+1,k}(\cdot,b-r_h)}-\bon_{h,k}(s,b,a)
    \\&\leq \prns{ P^\star(s,a) - \wh P_{k}(s,a)}^\top\Eb[r_h]{ V^\star_{h+1}(\cdot,b-r_h) } -\bon_{h,k}(s,b,a) \tag{IH}
    \\&\leq 0. \tag*{by \cref{eq:key-bonus-inequality}}
\end{align*}
To complete the proof, if $\wh V^\uparrow(s,b)=1$, then it is trivially optimistic, and if not
\begin{align*}
    V^\star_h(s,b)-\wh V^\uparrow_{h,k}(s,b)
    &= \min_a U^\star_h(s,b,a)-\wh U^\uparrow_{h,k}(s,b,\wh\rho^k(s,b))
    \\&\leq U^\star_h(s,b,\wh\rho^k(s,b))-\wh U^\uparrow_{h,k}(s,b,\wh\rho^k(s,b))
    \\&\leq 0.
\end{align*}
\end{proof}

\begin{lemma}\label{lem:bernstein-bonus-pessimism-lemma}
For any $(h,k)\in[H]\times[K]$, if \ref{eq:pessimism} and \ref{eq:optimism} both hold at $(h+1,k)$,
then \ref{eq:key-bonus-inequality} holds at $(h,k)$ for the Bernstein bonus $\bebon$.
\end{lemma}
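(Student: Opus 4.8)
The plan is to deduce the bonus requirement \ref{eq:key-bonus-inequality} directly from the uniform empirical Bernstein bound already recorded in \cref{eq:bernstein-V-star}, which holds on the good event we have conditioned on. That inequality controls the target projection error $\abs{(\wh P_{h,k}(s,a)-P^\star_h(s,a))^\top\Eb[r_h]{V^\star_{h+1}(\cdot,b-r_h)}}$ by the \emph{empirical} variance of $\Eb[r_h]{V^\star_{h+1}}$ under $\wh P_{h,k}(s,a)$ plus the lower-order term $L/N_{h,k}(s,a)$. Since $V^\star_{h+1}$ is unknown while $\bebon_{h,k}$ is built only from the computable quantities $\wh V^\downarrow_{h+1,k}$ and $\wh V^\uparrow_{h+1,k}$, the entire task reduces to showing that this right-hand side is dominated by $\bebon_{h,k}(s,b,a)$; once that is done, chaining with \cref{eq:bernstein-V-star} yields \ref{eq:key-bonus-inequality} at $(h,k)$.

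The key step is to trade the variance of $V^\star_{h+1}$ for the variance of $\wh V^\downarrow_{h+1,k}$ plus a controllable remainder. First I would apply the triangle inequality for standard deviation (Cauchy--Schwarz, $\sqrt{\Varb{X+Y}}\leq\sqrt{\Varb{X}}+\sqrt{\Varb{Y}}$), writing $\Eb[r_h]{V^\star_{h+1}}=\Eb[r_h]{\wh V^\downarrow_{h+1,k}}+\Eb[r_h]{V^\star_{h+1}-\wh V^\downarrow_{h+1,k}}$, to get
\[
\sqrt{\Varb[s'\sim\wh P_{h,k}(s,a)]{\Eb[r_h]{V^\star_{h+1}(s',b')}}}\leq \sqrt{\Varb[s'\sim\wh P_{h,k}(s,a)]{\Eb[r_h]{\wh V^\downarrow_{h+1,k}(s',b')}}}+\sqrt{\Varb[s'\sim\wh P_{h,k}(s,a)]{\Eb[r_h]{V^\star_{h+1}(s',b')-\wh V^\downarrow_{h+1,k}(s',b')}}}.
\]
Multiplying through by $\sqrt{2L/N_{h,k}(s,a)}$ turns the first summand into exactly the first term of $\bebon_{h,k}$. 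For the remainder I would use $\Varb{\cdot}\leq\Eb{(\cdot)^2}$ and push the square inside the $r_h$-expectation by Jensen to obtain
\[
\Varb[s'\sim\wh P_{h,k}(s,a)]{\Eb[r_h]{V^\star_{h+1}(s',b')-\wh V^\downarrow_{h+1,k}(s',b')}}\leq \Eb[s'\sim\wh P_{h,k}(s,a),\,r_h]{\prns{V^\star_{h+1}(s',b')-\wh V^\downarrow_{h+1,k}(s',b')}^2}.
\]
This is precisely where the two-sided band enters: because \ref{eq:pessimism} and \ref{eq:optimism} both hold at $(h+1,k)$, we have the pointwise sandwich $\wh V^\downarrow_{h+1,k}\leq V^\star_{h+1}\leq\wh V^\uparrow_{h+1,k}$, so $0\leq V^\star_{h+1}-\wh V^\downarrow_{h+1,k}\leq \wh V^\uparrow_{h+1,k}-\wh V^\downarrow_{h+1,k}$ and hence the squared gap is dominated by $(\wh V^\uparrow_{h+1,k}-\wh V^\downarrow_{h+1,k})^2$. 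Multiplying by $\sqrt{2L/N_{h,k}(s,a)}$ recovers the second term of $\bebon_{h,k}$, and the leftover $L/N_{h,k}(s,a)$ inherited from \cref{eq:bernstein-V-star} is the third. Summing the three pieces gives exactly $\bebon_{h,k}(s,b,a)$.

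The main obstacle here is conceptual rather than computational: recognizing that the optimistic estimate $\wh V^\uparrow_{h+1,k}$ is introduced for the sole purpose of bounding the variance-of-$V^\star$ remainder through the sandwich, which is why optimism at $(h+1,k)$ is a genuine hypothesis of the lemma and not an afterthought. Everything else is the routine ``standard-deviation triangle inequality plus $\Varb{\cdot}\leq\Eb{(\cdot)^2}$'' manipulation; the only care needed is to keep every variance and expectation under the empirical kernel $\wh P_{h,k}(s,a)$ so they match the empirical Bernstein bound of \cref{eq:bernstein-V-star}, and to ensure the Jensen step over $r_h$ is applied in the direction that upper-bounds the remainder.
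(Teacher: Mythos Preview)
Your proposal is correct and follows essentially the same approach as the paper: start from the empirical Bernstein bound \cref{eq:bernstein-V-star}, apply the standard-deviation triangle inequality to split off $\sqrt{\Varb[\wh P_{h,k}]{\Eb[r_h]{\wh V^\downarrow_{h+1,k}}}}$, bound the remainder via $\Varb{\cdot}\leq\Eb{(\cdot)^2}$ and Jensen over $r_h$, and finally invoke the sandwich $\wh V^\downarrow_{h+1,k}\leq V^\star_{h+1}\leq \wh V^\uparrow_{h+1,k}$ from the hypotheses. The paper's proof is step-for-step the same, including your observation that the optimism hypothesis is needed precisely to furnish the upper half of the sandwich for the correction term.
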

\begin{proof}
Recall that uniform empirical Bernstein \cref{eq:bernstein-V-star} gave us the following inequality: for all $s,a$ and $b$,
\begin{align*}
    &\abs{ \prns{\wh P_{k}(s,a)-P^\star(s,a)}^\top\Eb[r_h]{V^\star_{h+1}(\cdot,b-r_h)} }
    \leq \sqrt{\frac{2\Varb[s'\sim \wh P_{k}(s,a)]{ \Eb[r_h]{V^\star_{h+1}(s',b-r_h)} }L}{N_{k}(s,a)}} + \frac{L}{N_{k}(s,a)}. \tag*{\cref{eq:bernstein-V-star} revisited.}
\end{align*}
Now apply the useful triangle inequality of variances $\sqrt{\Var(X)}\leq \sqrt{\Var(Y)}+\sqrt{\Var(X-Y)}$ \citep[Eqn. 51]{zanette2019tighter},
\begin{align*}
    \sqrt{\Varb[s'\sim \wh P_{k}(s,a)]{ \Eb[r_h]{V^\star_{h+1}(s',b-r_h)}}}
    &\leq \sqrt{\Varb[s'\sim \wh P_{k}(s,a)]{ \Eb[r_h]{\wh V^\downarrow_{h+1}(s',b-r_h)}}}
    \\&+ \sqrt{\Varb[s'\sim\wh P_{k}(s,a)]{ \Eb[r_h]{V_{h+1}^\star(s',b-r_h)-\wh V_{h+1,k}^\downarrow(s',b-r_h)} }}.
\end{align*}
The first term is in the bonus. The second term is bounded by the correction term of the bonus as follows,
\begin{align*}
    &\Varb[s'\sim\wh P_{k}(s,a)]{ \Eb[r_h]{V_{h+1}^\star(s',b-r_h)-\wh V_{h+1,k}^\downarrow(s',b-r_h)} }
    \\&\leq\Eb[s'\sim\wh P_{k}(s,a)]{ \prns{ \Eb[r_h]{ V_{h+1}^\star(s',b-r_h)-\wh V_{h+1,k}^\downarrow(s',b-r_h) } }^2 }
    \\&\leq\Eb[s'\sim\wh P_{k}(s,a),r\sim R(s,a)]{ \prns{ V_{h+1}^\star(s',b-r)-\wh V_{h+1,k}^\downarrow(s',b-r) }^2 } \tag{Jensen}
    \\&\leq \Eb[s'\sim\wh P_{k}(s,a),r\sim R(s,a)]{ \prns{\wh V_{h+1,k}^\uparrow(s',b-r)-\wh V_{h+1,k}^\downarrow(s',b-r) }^2 }. \tag{premise: $\wh V_{h+1,k}^\downarrow\leq V_{h+1}^\star\leq \wh V_{h+1,k}^\uparrow$}
\end{align*}
This upper bound is a part of the Bernstein bonus.
Thus, we've shown that $\bebon$ dominates the error, and so \ref{eq:key-bonus-inequality} is satisfied at $(h,k)$.
\end{proof}

A key corollary of \cref{lem:bernstein-bonus-pessimism-lemma,lem:pessimism-inductive-step,lem:optimism-inductive-step} is
that we have \ref{eq:pessimism} and \ref{eq:optimism} \emph{for all} $(h,k)\in[H]\times[K]$ with the Bernstein bonus.
Indeed, for any $k$, we first apply \cref{lem:bernstein-bonus-pessimism-lemma} to get that \ref{eq:key-bonus-inequality} is satisfied at $(H,k)$ (as optimism/pessimism trivially holds at $H+1$).
Then, apply \cref{lem:pessimism-inductive-step,lem:optimism-inductive-step} to get \ref{eq:pessimism} and \ref{eq:optimism} at $(H,k)$.
Then, apply \cref{lem:bernstein-bonus-pessimism-lemma} to get that \ref{eq:key-bonus-inequality} is satisfied at $(H-1,k)$.
Continue in this fashion until we've shown \ref{eq:key-bonus-inequality},~\ref{eq:pessimism} and \ref{eq:optimism} for all $h\in[H]$.
\begin{theorem}
The Bernstein bonus satisfies \ref{eq:key-bonus-inequality},~\ref{eq:pessimism} and \ref{eq:optimism} at all $(h,k)\in[H]\times[K]$.
\end{theorem}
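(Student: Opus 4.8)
The plan is a backward induction over the time index $h$, carried out separately for each fixed episode $k\in[K]$, in which I maintain the joint invariant that \emph{both} \ref{eq:pessimism} and \ref{eq:optimism} hold at a given level. The three lemmas immediately preceding the statement are exactly the ingredients needed: \cref{lem:bernstein-bonus-pessimism-lemma} converts a valid two-sided confidence band at level $h+1$ into the key bonus inequality \ref{eq:key-bonus-inequality} at level $h$, while \cref{lem:pessimism-inductive-step,lem:optimism-inductive-step} propagate pessimism and optimism, respectively, from $h+1$ down to $h$ once \ref{eq:key-bonus-inequality} is in hand. The observation driving the proof is that these must be applied in the right order: because the Bernstein bonus $\bebon$ itself depends on $\wh V^\uparrow_{h+1,k}-\wh V^\downarrow_{h+1,k}$, the validity of \ref{eq:key-bonus-inequality} at $(h,k)$ genuinely requires both sides of the band at $(h+1,k)$, so pessimism and optimism cannot be tracked in isolation.

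Concretely, I would fix $k$ and start at the base case $h=H+1$, where by the initialization in \cref{alg:ucbvi-cvar} we have $\wh V^\downarrow_{H+1,k}(s,b)=\wh V^\uparrow_{H+1,k}(s,b)=b^+=V^\star_{H+1}(s,b)$, so \ref{eq:pessimism} and \ref{eq:optimism} hold trivially (indeed with equality). For the inductive step, I assume both \ref{eq:pessimism} and \ref{eq:optimism} at $(h+1,k)$ and then chain the lemmas: first invoke \cref{lem:bernstein-bonus-pessimism-lemma} to obtain \ref{eq:key-bonus-inequality} at $(h,k)$; next feed this together with pessimism at $(h+1,k)$ into \cref{lem:pessimism-inductive-step} to get \ref{eq:pessimism} at $(h,k)$; and symmetrically feed it together with optimism at $(h+1,k)$ into \cref{lem:optimism-inductive-step} to get \ref{eq:optimism} at $(h,k)$. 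Descending from $h=H$ to $h=1$ establishes all three properties at every $h\in[H]$, and since $k$ was arbitrary the claim follows for all $(h,k)\in[H]\times[K]$.

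Since the substantive work is already encapsulated in the three lemmas, I do not anticipate a genuine obstacle here; the only point requiring care is the \emph{scheduling} of the applications. One cannot prove \ref{eq:pessimism} and \ref{eq:optimism} at $(h,k)$ first and then derive \ref{eq:key-bonus-inequality}, because \ref{eq:key-bonus-inequality} is precisely the premise each of those propagation lemmas needs. The correct framing is therefore a single induction whose invariant bundles pessimism and optimism together, so that each descent step produces \ref{eq:key-bonus-inequality} once from the combined band above, and only afterward splits into the two propagation steps.
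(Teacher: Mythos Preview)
Your proposal is correct and follows essentially the same backward-in-$h$ induction as the paper: fix $k$, use the trivial base case at $h=H+1$, then at each step invoke \cref{lem:bernstein-bonus-pessimism-lemma} to obtain \ref{eq:key-bonus-inequality} from the combined pessimism/optimism at level $h+1$, and feed that into \cref{lem:pessimism-inductive-step,lem:optimism-inductive-step} to propagate both properties down to level $h$. Your emphasis on bundling pessimism and optimism into a single invariant and on the order of lemma applications is exactly the point the paper makes in the paragraph preceding the theorem.
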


We now prove the regret guarantee for the Bernstein bonus.
The main body of the proof for \cref{thm:bernstein-bonus-regret} and \cref{thm:bernstein-bonus-regret2} will be the same.
The proofs will only diverge at the end when bounding the sum of variances, where we invoke \cref{ass:continuous-densities}.
\bernsteinBonusRegret*
\bernsteinBonusRegretTwo*
\begin{proof}[Proof of \cref{thm:bernstein-bonus-regret} and \cref{thm:bernstein-bonus-regret2}]
Following the same initial steps as \cref{thm:hoeffding-bonus-regret}, we have
\begin{align*}
    \rlregret_\tau(K)\leq 6e\tau^{-1}HL + 2e\tau^{-1}\sum_{(h,k)\in[H]\times[K]}2\bebon_{h,k}(s_{h,k},b_{h,k},a_{h,k}) + \xi_{h,k}(s_{h,k},a_{h,k}).
\end{align*}
The proof boils down to bounding the sum.

\textbf{Logarithmic-in-$K$ terms: }
First, note that any $\Ocal(1/N_{k}(s,a))$ term will scale logarithmically in $K$.
This includes the $\xi_{h,k}(s,a)$ term, as well as a $\frac{2L}{N_{k}(s,a)}$ from the bonus.
Thus,
\begin{align*}
    \sum_{(h,k)\in[H]\times[K]}\frac{2L+2HSL}{N_{k}(s_{h,k},a_{h,k})} \leq 4HSL\cdot SA\log(K) = 4S^2AHL^2. \tag{\cref{lem:tabular-elliptical-potential}}
\end{align*}

\textbf{The variance correction term of the bonus: }
\begin{align}
    &\sum_{(h,k)\in[H]\times[K]}\sqrt{ \frac{\Eb[s'\sim\wh P_{k}(s_{h,k},a_{h,k}),r\sim R(s_{h,k},a_{h,k})]{ \prns{ \wh V^\uparrow_{h+1,k}(s',b_{h,k}-r) - \wh V^\downarrow_{h+1,k}(s',b_{h,k}-r) }^2}}{N_{k}(s_{h,k},a_{h,k})} }. \label{eq:goal-bonus-correction-term}
\end{align}
First, apply a Cauchy Schwarz. The $\sum_{h,k}\frac{1}{N_{k}(s_{h,k},a_{h,k})}$ term is at most $SAL$ by elliptical potential \cref{lem:tabular-elliptical-potential}.
Then, translate $\wh P$ to $P^\star$ via \cref{lem:projected-error-phat-pstar} to get
\begin{align*}
&\leq\sqrt{\begin{aligned}
SAL \Bigg( &\sum_{(h,k)\in[H]\times[K]}8\sqrt{\frac{SL}{N_{k}(s_{h,k},a_{h,k})}}
\\+ &\sum_{(h,k)\in[H]\times[K]} \Eb[s'\sim P^\star(s_{h,k},a_{h,k}),r\sim R(s_{h,k},a_{h,k})]{\prns{ \wh V^\uparrow_{h+1,k}(s',b_{h,k}-r) - \wh V^\downarrow_{h+1,k}(s',b_{h,k}-r)\mid \Ecal_k,\Hcal_{h,k} }^2 } \Bigg)
\end{aligned}
}
\intertext{Then, switch to the empirical $s,b$ using \cref{eq:azuma-V-upper-lower-diff},}
&\leq \sqrt{SAL \prns{ 8\sqrt{SL}\sqrt{SAHKL} + \sqrt{HKL} + \sum_{(h,k)\in[H]\times[K]} \prns{ \wh V^\uparrow_{h+1,k}(s_{h+1,k},b_{h+1,k}) - \wh V^\downarrow_{h+1,k}(s_{h+1,k},b_{h+1,k}) }^2 } }.
\end{align*}
For the sum term, since each term is at most $1$, we have
\\$\prns{\wh V^\uparrow_{h,k}(s_{h,k},b_{h,k}) - \wh V^\downarrow_{h,k}(s_{h,k},b_{h,k})}^2\leq \prns{\wh V^\uparrow_{h,k}(s_{h,k},b_{h,k}) - \wh V^\downarrow_{h,k}(s_{h,k},b_{h,k})}$.
Then, applying a simulation-like \cref{lem:simulation-on-confidence-interval} to each summand bounds the sum by,
\begin{align*}
    &\sum_{h=2}^H\sum_{k=1}^K \sum_{t=h}^H \Eb[\wh\rho^k,s_h=s_{h,k},b_h=b_{h,k}]{ 2\bebon_{t,k}(s_t,b_t,a_t)+\xi_{t,k}(s_t,a_t)\mid\Ecal_k } \tag{simulation-like \cref{lem:simulation-on-confidence-interval}} \nonumber
    \\&\leq 6H^2L + 2\sum_{h=2}^H\sum_{t=h}^H\sum_{k=1}^K 2\bebon_{t,k}(s_{t,k},b_{t,k},a_{t,k}) + \xi_{t,k}(s_t,a_t) \tag{\cref{eq:azuma-sum-bonuses-start-at-h}}
    \\&\leq 6H^2L + 2H\sum_{t=1}^H\sum_{k=1}^K 2\bebon_{t,k}(s_{t,k},b_{t,k},a_{t,k}) + \xi_{t,k}(s_t,a_t).
\end{align*}
Now, we can loosely bound each Bernstein bonus by $2\sqrt{\frac{2L}{N_{k}(s,a)}}+\frac{L}{N_{k}(s,a)}$, so by elliptical potential \cref{lem:tabular-elliptical-potential},
\begin{align}
    \sum_{t=1}^H\sum_{k=1}^K 2\bebon_{t,k}(s_{t,k},b_{t,k},a_{t,k}) + \xi_{t,k}(s_t,a_t)
    &\leq 4\sqrt{2L}\cdot\sqrt{SAHKL} + (L+2HSL)\cdot SA\log(K)\nonumber
    \\&\leq 6\sqrt{SAHK}L + 3S^2AHL^2. \label{eq:loose-bound-bernstein-bonus-sum}
\end{align}
Therefore, we've shown that
\begin{align}
    \sum_{h=2}^H\sum_{k=1}^K \sum_{t=h}^H \Eb[\wh\rho^k,s_h=s_{h,k},b_h=b_{h,k}]{ 2\bebon_{t,k}(s_t,b_t,a_t)+\xi_{t,k}(s_t,a_t)\mid\Ecal_k }
    \leq 12\sqrt{SAH^3K}L + 12S^2AH^2L^2. \label{eq:double-H-sum-of-expected-bonus}
\end{align}
Combining everything together, we have
\begin{align*}
\text{\cref{eq:goal-bonus-correction-term}}
&\leq \sqrt{ SAL\prns{ 9S\sqrt{AHK}L + 12\sqrt{SAH^3K}L + 12S^2AH^2L^2 } }
\\&\leq 5SAHK^{1/4}L + 4S^{3/2}AHL^2,
\end{align*}
which is lower order in $K$ (the dominant term should scale as $K^{1/2}$).

\textbf{Bounding the empirical variance term:}
We now shift our focus to the variance term of the bonus,
\begin{align}
\sum_{(h,k)\in[H]\times[K]}\sqrt{ \frac{\Varb[s'\sim\wh P_{k}(s_{h,k},a_{h,k})]{ \Eb[r\sim R(s_{h,k},a_{h,k})]{ \wh V^\downarrow_{h+1,k}(s',b_{h,k}-r) } }}{N_{k}(s_{h,k},a_{h,k})} }. \label{eq:goal-bonus-variance-term}
\end{align}
The key idea here is to apply a sequential Law of Total Variance \cref{lem:law-total-variance}, but to do so, we must first convert $\sqrt{\Varb[s'\sim\wh P_{k}(s_{h,k},a_{h,k})]{ \Eb[r_h]{\wh V^\downarrow_{h+1,k}(s',b_{h,k}-r_h) } }}$ to $\sqrt{\Varb[s'\sim P_{h}^\star(s_{h,k},a_{h,k})]{ \Eb[r_h]{ V^{\wh\rho^k}_{h+1}(s',b_{h,k}-r_h) } \mid\Ecal_k}}$.
So we need to bound the difference term, \ie,
\begin{small}
\begin{align}
\textstyle&\sum_{(h,k)\in[H-1]\times[K]}\sqrt{\frac{\Varb[s'\sim\wh P_{k}(s_{h,k},a_{h,k})]{ \Eb[r_h]{\wh V^\downarrow_{h+1,k}(s',b_{h,k}-r_h) } }}{N_{k}(s_{h,k},a_{h,k})}}-\sqrt{\frac{\Varb[s'\sim P_{h}^\star(s_{h,k},a_{h,k})]{ \Eb[r_h]{V^{\wh\rho^k}_{h+1}(s',b_{h,k}-r_h)} \mid \Ecal_k}}{N_{k}(s_{h,k},a_{h,k})}} \label{eq:bernstein-switch-to-population-variance-diff}
\intertext{Switch the empirical variance to the (conditional) population one, which incurs a $\sum_{h,k}2\sqrt{\frac{L}{N_{k}(s_{h,k},a_{h,k})}}$ term (\cref{sec:concentration-lemmas}).
Then, use $\sqrt{\Var(X+Y)}\leq\sqrt{\Var(X)}+\sqrt{\Var(Y)}$ (Eqn 51 of \citep{zanette2019tighter}) to get, }
\textstyle&\leq\sum_{(h,k)\in[H-1]\times[K]} \frac{2\sqrt{L}}{N_{k}(s_{h,k},a_{h,k})} + \sum_{(h,k)\in[H-1]\times[K]}\sqrt{\frac{\Varb[s'\sim P^\star(s_{h,k},a_{h,k})]{ \Eb[r_h]{ \wh V^\downarrow_{h+1,k}(s',b_{h+1,k}) - V^{\wh\rho^k}_{h+1}(s',b_{h+1,k}) } \mid \Ecal_k}}{N_{k}(s_{h,k},a_{h,k})}} \nonumber
\textstyle\\&\leq 2\sqrt{L}\cdot SA\log(K) + \sqrt{SAL \sum_{(h,k)\in[H-1]\times[K]} \Varb[s'\sim P^\star(s_{h,k},a_{h,k})]{ \Eb[r_h]{ V^{\wh\rho^k}_{h+1}(s',b_{h,k}-r_h)-\wh V^\downarrow_{h+1,k}(s',b_{h,k}-r_h)}\mid \Ecal_k} }, \nonumber
\end{align}    
\end{small}
where the second inequality is due to elliptical potential \cref{lem:tabular-elliptical-potential} and Cauchy-Schwarz.
Focusing on the sum inside the square root,
\begin{align*}
&\sum_{(h,k)\in[H-1]\times[K]} \Varb[s'\sim P^\star(s_{h,k},a_{h,k})]{ \Eb[r_h]{ V^{\wh\rho^k}_{h+1}(s',b_{h,k}-r_h)-\wh V^\downarrow_{h+1,k}(s',b_{h,k}-r_h)}\mid \Ecal_k}
\\&\leq \sum_{(h,k)\in[H-1]\times[K]} \Eb[s',r_h\sim (P^\star\circ R)(s_{h,k},a_{h,k})]{ \prns{ V^{\wh\rho^k}_{h+1}(s',b_{h,k}-r_h)-\wh V^\downarrow_{h+1,k}(s',b_{h,k}-r_h)}^2 \mid \Ecal_k} \tag{Jensen}
\\&\leq \sqrt{HKL} + \sum_{(h,k)\in[H-1]\times[K]} \prns{ V^{\wh\rho^k}_{h+1}(s_{h+1,k},b_{h+1,k})-\wh V^\downarrow_{h+1,k}(s_{h+1,k},b_{h+1,k})}^2 \tag{\cref{eq:azuma-V-rho-lower-diff}}
\\&\leq \sqrt{HKL} + \sum_{(h,k)\in[H-1]\times[K]} \prns{ V^{\wh\rho^k}_{h+1}(s_{h+1,k},b_{h+1,k})-\wh V^\downarrow_{h+1,k}(s_{h+1,k},b_{h+1,k})} \tag{r.v. is in $[0,1]$}
\\&\leq \sqrt{HKL} + \sum_{(h,k)\in[H-1]\times[K]} \sum_{t=h}^H\Eb[\wh\rho^k,s_h=s_{h,k},b_h=b_{h,k}]{ 2\bebon_{t,k}(s_t,b_t,a_t)+\xi_{t,k}(s_t,a_t)} \tag{simulation lemma \cref{lem:simulation}}
\\&\leq \sqrt{HKL} + 12\sqrt{SAH^3K}L + 12S^2AH^2L^2. \tag{\cref{eq:double-H-sum-of-expected-bonus}}
\end{align*}
Combining the steps, we have shown that the switching cost to the population variance is at most
\begin{align*}
    \text{\cref{eq:bernstein-switch-to-population-variance-diff}}\leq 2SAL^2 + \sqrt{SAL\prns{ 13\sqrt{SAH^3KL} + 12S^2AH^2L^2 }} \leq 4SAHK^{1/4}L + 6S^{3/2}AHL^2
\end{align*}
which is again lower order.

\textbf{(Key step) Bounding the dominant (population) variance term: }
\begin{align}
\sum_{(h,k)\in[H]\times[K]}\sqrt{ \frac{\Varb[s'\sim P^\star_{h}(s_{h,k},a_{h,k})]{ \Eb[r\sim R(s_{h,k},a_{h,k})]{V^{\wh\rho^k}_{h+1}(s',b_{h,k}-r)} \mid\Ecal_k }}{N_{k}(s_{h,k},a_{h,k})} }, \label{eq:goal-bonus-dominant-term}
\end{align}
First apply a Cauchy-Schwarz (as usual) and then the law of total variance,
\begin{align*}
&\leq\sqrt{SAL \sum_{(h,k)\in[H]\times[K]} \Varb[s'\sim P^\star_{h}(s_{h,k},a_{h,k})]{ \Eb[r\sim R(s_{h,k},a_{h,k})]{V^{\wh\rho^k}_{h+1}(s',b_{h,k}-r) } } }
\\&\leq \sqrt{SAL \prns{2HL + 2\sum_{(h,k)\in[H]\times[K]} \Eb[\wh\rho^k,\wh b_k]{\Varb[s'\sim P^\star_{h}(s_{h},a_{h})]{ \Eb[r\sim R(s_h,a_h)]{ V^{\wh\rho^k}_{h+1}(s',b_{h}-r) } } \mid\Ecal_k } } } \tag{\cref{eq:azuma-variance-V-rhok}}
\\&\leq \sqrt{SAL \prns{2HL + 2\sum_{(h,k)\in[H]\times[K]} \Eb[\wh\rho^k,\wh b_k]{\Varb[s'\sim P^\star_{h}(s_{h},a_{h}),r\sim R(s_h,a_h)]{ V^{\wh\rho^k}_{h+1}(s',b_{h}-r) } } \mid\Ecal_k } } \tag{joint variance is larger}
\\&= \sqrt{SAL \prns{2HL + 2\sum_{k=1}^K\Varb[\wh\rho^k,\wh b_k]{\prns{ \wh b_k-\sum_{t=1}^Hr_t }^+\mid\Ecal_k} } }, \tag{Law of Total Variance \cref{lem:law-of-total-variance-applied}}
\end{align*}
Below, we give two ways to bound the sum,
\begin{align}
    \sum_{k=1}^K\Varb[\wh\rho^k,\wh b_k]{\prns{ \wh b_k-\sum_{t=1}^Hr_t }^+\mid\Ecal_k}. \label{eq:bernstein-sums-of-probabilities}
\end{align}
The first way is to simply bound it by a probability, which is trivially at most $1$. This results in \cref{thm:bernstein-bonus-regret}.
To prove \cref{thm:bernstein-bonus-regret2}, we show a second more refined approach, which uses \cref{ass:continuous-densities} to bound each variance by $\tau$, plus a lower order term.

Before doing so, we first prepare to conclude the proof by recapping all the terms in the regret decomposition.
First, we collected $4S^2AHL^2$ from the $1/N_{k}(s,a)$ terms.
\cref{eq:goal-bonus-correction-term} is the correction term inside the Bernstein bonus.
\cref{eq:bernstein-switch-to-population-variance-diff} is the switching cost from empirical variance (in the Bernstein bonus) to the population variance that we want to bound now, which is \cref{eq:goal-bonus-dominant-term}.
We also multiply back the $2\sqrt{2L}\leq 3\sqrt{L}$ factor we omitted from above. So,
\begin{align*}
    &4S^2AHL^2 + 3\sqrt{L}\prns{ \text{\cref{eq:goal-bonus-correction-term}} + \text{\cref{eq:bernstein-switch-to-population-variance-diff}} + \text{\cref{eq:goal-bonus-dominant-term}} }
    \\&\leq 4S^2AHL^2 + 3\sqrt{L}\prns{ \prns{5SAHK^{1/4}L+4S^{3/2}AHL^2} + \prns{4SAHK^{1/4}L + 6S^{3/2}AHL^2} + \text{\cref{eq:goal-bonus-dominant-term}} }
    \\&= 27SAHK^{1/4}L^2 + 34S^2AHL^3 + 3\sqrt{L}\cdot \text{\cref{eq:goal-bonus-dominant-term}}.
\end{align*}
Thus, the regret is at most
\begin{align*}
    \rlregret_\tau(K)
    &\leq 6e\tau^{-1}HL + 2e\tau^{-1}\prns{ 27SAHK^{1/4}L^2 + 34S^2AHL^3 + 3\sqrt{L}\cdot \text{\cref{eq:goal-bonus-dominant-term}} }
    \\&\leq 6e\tau^{-1}\sqrt{L}\cdot \text{\cref{eq:goal-bonus-dominant-term}} + 54e\tau^{-1}SAHK^{1/4}L^2 + 70e\tau^{-1}S^2AHL^3.
\end{align*}

\textbf{First bound for \cref{eq:goal-bonus-dominant-term} (for \cref{thm:bernstein-bonus-regret}): }
Since $x^+=x\I{x\geq 0}$, for any random variable $X\in[0,1]$, we have $\Varb{X^+}\leq \Eb{X^2\I{X\geq 0}} \leq \Pr(X\geq 0)$.
Therefore,
\begin{align*}
\sum_{k=1}^K\Varb[\wh\rho^k,\wh b_k]{\prns{ \wh b_k-\sum_{t=1}^Hr_t }^+\mid\Ecal_k}
&\leq \sum_{k=1}^K\Pr_{\wh\rho^k,\wh b_k}\prns{ \sum_{h=1}^Hr_h\leq \wh b_k \mid\Ecal_k}
\leq K
\end{align*}
Certainly, each probability is bounded by $1$.
Hence,
\begin{align*}
\text{\cref{eq:goal-bonus-dominant-term}}\leq \sqrt{SAL\prns{2HL + 2K}} \leq \sqrt{2SAKL} + \sqrt{2SAH}L.
\end{align*}
Combining everything together, we get
\begin{align*}
    \rlregret_\tau(K)
    &\leq 6\sqrt{2}e\tau^{-1}\sqrt{SAK}L + 54e\tau^{-1}SAHK^{1/4}L^2 + 82e\tau^{-1}S^2AHL^3,
\end{align*}
which concludes the proof for \cref{thm:bernstein-bonus-regret}.

\textbf{Second bound for \cref{eq:goal-bonus-dominant-term} (for \cref{thm:bernstein-bonus-regret2}): }
Define
\begin{align*}
    &f(b) = b-\tau^{-1}\Eb[\wh\rho^k,\wh b_k]{\prns{b-\sum_{t=1}^H r_t}^+\mid\Ecal_k},
    &b_k^\star=\argmax_{b\in[0,1]}f(b),
    \\&\wh f(b) = b-\tau^{-1}\wh V_{1,k}^\downarrow(s_1,b),
    &\wh b_k=\argmax_{b\in[0,1]}\wh f(b).
\end{align*}
So, $b_k^\star$ as the $\tau$-th quantile of $R(\wh\rho^k,\wh b_k)$.
By pessimism, we have $\wh V_{1,k}^\downarrow(s_1,b)\leq V^\star_1(s_1,b)\leq \Eb[\wh\rho^k,\wh b_k]{ \prns{b-\sum_{t=1}^Hr_t}^+ }$, since $V^\star_1$ is the minimum amongst all history-dependent policies, including $(\wh\rho^k,\wh b_k)$.
Thus, we have $\wh f(b)\geq f(b)$ for all $b\in[0,1]$.
In particular, we have
\begin{align*}
    f(b_k^\star)-f(\wh b_k)
    &= f(b_k^\star)-\wh f(\wh b_k)+\wh f(\wh b_k)-f(\wh b_k)
    \\&\leq f(b_k^\star)-\wh f(b_k^\star)+\wh f(\wh b_k)-f(\wh b_k) \tag{$\wh b_k$ is argmax of $\wh f$}
    \\&\leq \wh f(\wh b_k)-f(\wh b_k) \tag{$f(b_k^\star)\leq \wh f(b_k^\star)$ by pessimism}
    \\&\leq \tau^{-1}\prns{ V^{\wh\rho^k}_1(s_1,\wh b_k)-\wh V_{1,k}^\downarrow(s_1,\wh b_k) }.
\end{align*}

Now we invoke \cref{ass:continuous-densities} with \cref{lem:strongly-convex-under-continuity} which implies
\begin{align*}
    &\frac{\densitymin}{2\tau}\prns{ b_k^\star-\wh b_k }^2 \leq f(b_k^\star)-f(\wh b_k) \leq \tau^{-1}\prns{ V^{\wh\rho^k}_1(s_1,\wh b_k)-\wh V_{1,k}^\downarrow(s_1,\wh b_k) }
    \\&\implies \prns{b_k^\star-\wh b_k}^2 \leq 2\densitymin^{-1}\prns{ V^{\wh\rho^k}_1(s_1,\wh b_k)-\wh V_{1,k}^\downarrow(s_1,\wh b_k) }.
\end{align*}
Using $\Var(X)\leq 2(\Var(Y)+\Var(X-Y))$,
\begin{align*}
    &\text{\cref{eq:bernstein-sums-of-probabilities}}=\sum_{k=1}^K\Varb[\wh\rho^k,\wh b_k]{\prns{ \wh b_k-\sum_{t=1}^Hr_t }^+\mid\Ecal_k}
    \\&\leq 2\sum_{k=1}^K\Varb[\wh\rho^k,\wh b_k]{\prns{ b^\star_k-\sum_{t=1}^Hr_t }^+\mid\Ecal_k} + 2\sum_{k=1}^K\Varb[\wh\rho^k,\wh b_k]{\prns{ \wh b_k-\sum_{t=1}^Hr_t }^+-\prns{ b^\star_k-\sum_{t=1}^Hr_t }^+\mid\Ecal_k}
    \\&\leq 2\sum_{k=1}^K\Pr_{\wh\rho^k,\wh b_k}\prns{ \sum_{t=1}^Hr_t\leq b^\star_k\mid\Ecal_k } + 2\sum_{k=1}^K \prns{ \wh b_k-b^\star_k }^2 \tag{ReLU is $1$-Lipschitz}
    \\&\leq 2K\tau + 4\densitymin^{-1}\sum_{k=1}^K \prns{ V^{\wh\rho^k}_1(s_1,\wh b_k)-\wh V_{1,k}^\downarrow(s_1,\wh b_k) }
    \\&\leq 2K\tau + 4\densitymin^{-1}\sum_{k=1}^K\sum_{h=1}^H\Eb[\wh\rho^k,\wh b_k]{ 2\bebon_{h,k}(s_h,b_h,a_h)+\xi_{h,k}(s_h,a_h)\mid \Ecal_k } \tag{simulation \cref{lem:simulation}}
    \\&\leq 2K\tau + 4\densitymin^{-1}\prns{ 3\sqrt{KL} + 6\sqrt{SAHK}L + 3S^2AHL^2 }. \tag{\cref{eq:azuma-standard-for-bonus-sum} and the loose bound in \cref{eq:loose-bound-bernstein-bonus-sum}}
\end{align*}
Therefore,
\begin{align*}
    \text{\cref{eq:goal-bonus-dominant-term}}
    &\leq \sqrt{SAL\prns{ 2HL + 4K\tau + 72\densitymin^{-1}\sqrt{SAHK}L + 24\densitymin^{-1}S^2AHL^2 } }
    \\&\leq 2\sqrt{\tau SAKL} + 9\densitymin^{-1/2}SAHK^{1/4}L + 5\densitymin^{-1/2}S^{3/2}AHL^2.
\end{align*}
Combining everything together, we get
\begin{align*}
\rlregret_\tau(K)\leq 12e\sqrt{\tau^{-1}SAK}L + \prns{54+9\densitymin^{-1/2}}e\tau^{-1}SAHK^{1/4}L^2 + \prns{70+5\densitymin^{-1/2}}e\tau^{-1}S^2AHL^3.
\end{align*}
This concludes the proof for \cref{thm:bernstein-bonus-regret2}.

\end{proof}

\begin{lemma}\label{lem:simulation-on-confidence-interval}
Fix any $k\in[K]$. Then for all $t\in[H]$, for all $s_t,b_t$, we have
\begin{align*}
    \wh V^\uparrow_{t,k}(s_t,b_t) - \wh V^\downarrow_{t,k}(s_t,b_t) \leq \sum_{h=t}^H(1-1/H)^{t-h}\Eb[\wh\rho^k,s_t,b_t]{2\bon_{h,k}(s_h,b_h,a_h)+\xi_{h,k}(s_h,a_h)\mid\Ecal_k}.
\end{align*}
\end{lemma}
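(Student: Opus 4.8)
The plan is to mirror the proof of the Simulation Lemma (\cref{lem:simulation}) and run a backward induction on $t$ from $H+1$ down to $1$, bounding the confidence-interval width $\Delta_t(s_t,b_t) := \wh V^\uparrow_{t,k}(s_t,b_t) - \wh V^\downarrow_{t,k}(s_t,b_t)$. The base case $t=H+1$ is immediate since $\wh V^\uparrow_{H+1,k}(s,b)=\wh V^\downarrow_{H+1,k}(s,b)=b^+$, so $\Delta_{H+1}\equiv 0$, matching the empty sum. The crucial structural observation is that $\wh V^\uparrow_{t,k}$ and $\wh V^\downarrow_{t,k}$ are defined via the \emph{same} greedy action $a_t:=\wh\rho^k_t(s_t,b_t)=\argmin_a\wh U^\downarrow_{t,k}(s_t,b_t,a)$; consequently there is no policy mismatch and the width collapses into a clean difference of $\wh U$ values.

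For the inductive step I would fix $t\leq H$, assume the bound at $t+1$, and use $\wh V^\uparrow_{t,k}(s_t,b_t)\leq \wh U^\uparrow_{t,k}(s_t,b_t,a_t)$ together with $\wh V^\downarrow_{t,k}(s_t,b_t)\geq \wh U^\downarrow_{t,k}(s_t,b_t,a_t)$. Since the two $\wh U$'s share the same empirical transition and differ only by adding versus subtracting the bonus,
\begin{align*}
\Delta_t(s_t,b_t)\leq \wh U^\uparrow_{t,k}(s_t,b_t,a_t)-\wh U^\downarrow_{t,k}(s_t,b_t,a_t)=2\bon_{t,k}(s_t,b_t,a_t)+\wh P_{t,k}(s_t,a_t)^\top \Eb[r_t]{\Delta_{t+1}(\cdot,b_t-r_t)}.
\end{align*}
The next step is to translate $\wh P_{t,k}$ to $P^\star_t$. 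Because pessimism \ref{eq:pessimism} and optimism \ref{eq:optimism} hold for the Bernstein bonus, we have $0\leq \wh V^\downarrow_{t+1,k}\leq V^\star_{t+1}\leq \wh V^\uparrow_{t+1,k}\leq 1$, so the function $f(\cdot)=\Eb[r_t]{\Delta_{t+1}(\cdot,b_t-r_t)}$ takes values in $[0,1]$ and the mean-dominated branch of \cref{lem:projected-error-phat-pstar} applies, giving $\wh P_{t,k}(s_t,a_t)^\top f\leq (1+1/H)P^\star_t(s_t,a_t)^\top f+\xi_{t,k}(s_t,a_t)$.

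Finally I would substitute the inductive hypothesis for $\Delta_{t+1}$ inside $P^\star_t(s_t,a_t)^\top f=\Eb[s_{t+1},r_t]{\Delta_{t+1}(s_{t+1},b_t-r_t)}$ and invoke the tower property of the $\wh\rho^k$ rollout (the one-step expectation from $(s_t,b_t)$ composed with the rollout from $(s_{t+1},b_{t+1})$ equals the rollout from $(s_t,b_t)$). This multiplies each weight by $(1+1/H)$, turning $(1+1/H)^{h-t-1}$ into $(1+1/H)^{h-t}$, while the freshly added $h=t$ term carries weight $(1+1/H)^0=1$; collecting terms yields $\Delta_t(s_t,b_t)\leq \sum_{h=t}^H(1+1/H)^{h-t}\Eb[\wh\rho^k,s_t,b_t]{2\bon_{h,k}(s_h,b_h,a_h)+\xi_{h,k}(s_h,a_h)\mid\Ecal_k}$. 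Since $(1+1/H)(1-1/H)=1-1/H^2\leq 1$ implies $(1+1/H)^{h-t}\leq (1-1/H)^{t-h}$, this upgrades directly to the stated bound. I expect the main obstacle to be the bookkeeping of the geometric weights together with verifying $f\in[0,1]$ — precisely the fact that licenses the mean-dominated branch of \cref{lem:projected-error-phat-pstar} rather than the cruder $\sqrt{SL/N}$ bound — with everything else paralleling \cref{lem:simulation}.
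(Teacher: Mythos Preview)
Your proposal is correct and follows essentially the same route as the paper's own proof: backward induction from $t=H+1$, bounding $\wh V^\uparrow_{t,k}-\wh V^\downarrow_{t,k}$ by $\wh U^\uparrow_{t,k}-\wh U^\downarrow_{t,k}$ at the shared greedy action $a_t=\wh\rho^k_t(s_t,b_t)$, translating $\wh P_{t,k}$ to $P^\star_t$ via the mean-dominated branch of \cref{lem:projected-error-phat-pstar}, and applying the inductive hypothesis. Your explicit step of first obtaining the $(1+1/H)^{h-t}$ weight and then upgrading to $(1-1/H)^{t-h}$ via $(1+1/H)(1-1/H)\leq 1$ is a clean way to match the stated exponent.
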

\begin{proof}
In this proof, all expectations are conditioned on $\Ecal_k$.
We proceed by induction.
The base case at $t=H+1$ trivially holds since $\wh V^\uparrow_{H+1,k}(s,b) - \wh V^\downarrow_{H+1,k}(s,b) = b^+-b^+=0$.
Now suppose $t\leq H$ and suppose the claim holds for $t+1$.
Then setting $a_t = \wh\rho^k(s_t,b_t)$, we have
\begin{align*}
    &\phantom{=}\wh V^\uparrow_{t,k}(s_t,b_t)-\wh V^\downarrow_{t,k}(s_t,b_t)
    \\&\leq \wh U^\uparrow_{t,k}(s_t,b_t,a_t)-\wh U^\downarrow_{t,k}(s_t,b_t,a_t)
    \\&= \wh P_{t,k}(s_t,a_t)^\top \Eb[r_t]{\wh V^\uparrow_{t+1,k}(\cdot, b_{t}-r_t)-\wh V^\downarrow_{t+1,k}(\cdot, b_{t}-r_t) } + 2\bon_{t,k}(s_t,b_t,a_t)
    \\&= \prns{\wh P_{t,k}(s_t,a_t)-P^\star_t(s_t,a_t)}^\top \Eb[r_t]{ \wh V^\uparrow_{t+1,k}(\cdot, b_{t}-r_t)-\wh V^\downarrow_{t+1,k}(\cdot, b_{t}-r_t) } + 2\bon_{t,k}(s_t,b_t,a_t)
    \\&+ P^\star_t(s_t,a_t)^\top \Eb[r_t]{ \wh V^\uparrow_{t+1,k}(\cdot, b_{t}-r_t)-\wh V^\downarrow_{t+1,k}(\cdot, b_{t}-r_t) }
    \\&\leq 2\bon_{t,k}(s_t,b_t,a_t) + \xi_{t,k}(s_t,a_t) + (1-1/H) P^\star_t(s_t,a_t)^\top \Eb[r_t]{ \wh V^\uparrow_{t+1,k}(\cdot, b_{t}-r_t)-\wh V^\downarrow_{t+1,k}(\cdot, b_{t}-r_t) } \tag{\cref{lem:projected-error-phat-pstar}}
    \\&\leq 2\bon_{t,k}(s_t,b_t,a_t) + \xi_{t,k}(s_t,a_t) + \sum_{h=t+1}^H(1-1/H)^{1+t-(h+1)}\Eb[\wh\rho^k,s_t,b_t]{ 2\bon_{h,k}(s_h,b_h,a_h)+\xi_{h,k}(s_h,a_h) },
\end{align*}
where $\EE_{r_t}$ is short for $\EE_{r_t\sim R_t(s_t,a_t)}$.
\end{proof}

\begin{lemma}\label{lem:law-of-total-variance-applied}
For any $k\in[K]$, we have
\begin{align*}
    \Varb[\wh\rho^k,\wh b_k]{\prns{ \wh b_k-\sum_{h=1}^H r_h }^+\mid\Ecal_k}
    = \sum_{h=1}^{H-1} \Eb[\wh\rho^k,\wh b_k]{ \Varb[s'\sim P^\star(s_h,a_h),r_h\sim R(s,a)]{V^{\wh\rho^k}_{h+1}(s',b_{h}-r_h) }\mid\Ecal_k}.
\end{align*}
\end{lemma}
\begin{proof}
Apply Law of Total Variance \cref{lem:law-total-variance} with
$Y=\prns{\wh b_k-\sum_{h=1}^H r_h}^+, X_h=(s_h,a_h,r_{h-1})$ for $h\in[H]$ (when $h=1$, $r_0$ is omitted), and $\Hcal=\Ecal_k$ being the trajectories from the past episodes $1,2,...,k-1$.
Here, $s_h,a_h,r_h$ are collected from rolling in with $\wh\rho^k$ starting from $\wh b_k$, and note that $\wh b_k$ is a constant conditioned on $\Ecal_k$.
\cref{lem:law-total-variance} gives
\begin{align*}
    \Varb[\wh\rho^k,\wh b_k]{\prns{ \wh b_k-\sum_{h=1}^H r_h }^+\mid\Ecal_k}
    &=\Eb{\Varb{Y\mid X_{1:H},\Ecal_k}\mid\Ecal_k} + \sum_{h=1}^H \Eb{ \Varb{ \Eb{Y\mid X_{1:h},\Ecal_k } \mid X_{1:h-1},\Ecal_k} \mid\Ecal_k}
    \\&= \sum_{h=1}^H\Eb{ \Varb{ \Eb{Y\mid X_{1:h},\Ecal_k } \mid X_{1:h-1},\Ecal_k} \mid\Ecal_k}.
\end{align*}
The first term is zero because once we condition on $X_{1:H},\Ecal_k$, the term $\prns{ \wh b_k-\sum_t r_t }^+$ is a constant, and variance of constants is zero.
Now consider each summand. The outer expectation is taken over $s_{1:h-1},a_{1:h-1},r_{1:h-2}$ from rolling in $\wh\rho^k$. The variance is taken over $s_h\sim P^\star(s_{h-1},a_{h-1}),r_{h-1}\sim R_{h-1}(s_{h-1},a_{h-1})$, and deterministically picking $a_h=\wh\rho_h^k(s_h,b_h)$ where $b_h=\wh b_k-\sum_{t=1}^{h-1}r_t$. The inner expectation is over the remainder of the trajectory, which is $s_{h+1:H},a_{h+1:H},r_{h:H}$. Therefore,
\begin{align*}
    &\Eb{ \Varb{ \Eb{Y\mid X_{1:h},\Ecal_k } \mid X_{1:h-1},\Ecal_k} \mid\Ecal_k}
    \\&= \Eb[\wh\rho^k,\wh b_k]{ \Varb{ \Eb[\wh\rho^k,\wh b_k]{ \prns{\wh b_k-\sum_{t=1}^H r_t }^+ \mid X_{1:h},\Ecal_k } \mid X_{1:h-1},\Ecal_k} \mid\Ecal_k }
    \\&= \Eb[\wh\rho^k,\wh b_k]{ \Varb{ U^{\wh\rho^k}_h(s_h,b_h,a_h) \mid X_{1:h-1},\Ecal_k} \mid\Ecal_k } \tag{$b_{h-1}=\wh b_k-r_1-...-r_{h-1}$}
    \\&= \Eb[\wh\rho^k,\wh b_k]{ \Varb[s_h\sim P_{h-1}^\star(s_{h-1},a_{h-1}),r_{h-1}\sim R_{h-1}(s_{h-1},a_{h-1})]{ U^{\wh\rho^k}_h(s_h,b_h,a_h)} \mid\Ecal_k }
    \\&= \Eb[\wh\rho^k,\wh b_k]{ \Varb[s_h\sim P_{h-1}^\star(s_{h-1},a_{h-1}),r_{h-1}\sim R_{h-1}(s_{h-1},a_{h-1})]{ V^{\wh\rho^k}_h(s_h,b_h)} \mid\Ecal_k }. \tag{$a_h=\wh\rho^k(s_h,b_h)$}
\end{align*}
Note that in the special case of $h=1$, we have $s_1$ is not random and $r_0$ is omitted.
So, the variance is taken over a constant, which is zero.
\end{proof}

\begin{lemma}\label{lem:strongly-convex-under-continuity}
Let $\pi$ be a history-dependent policy such that its return distribution $R(\pi)$ is continuously distributed and has a density $p$ lower bounded by $\densitymin$.
Then, the function
\begin{align*}
    f(b) = \tau^{-1}\Eb[\pi]{\prns{b-\sum_h r_h}^+}-b
\end{align*}
is $\tau^{-1}\densitymin$ strongly convex.
\end{lemma}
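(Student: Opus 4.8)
The plan is to differentiate $f$ once and show that its derivative is $\tau^{-1}\densitymin$-strongly monotone, which is exactly the first-order characterization of $\tau^{-1}\densitymin$-strong convexity. Write $X=\sum_h r_h = R(\pi)$ for the (random) return, with CDF $F$ and density $p\geq\densitymin$. Observe that $f$ is precisely the CVaR objective analyzed in \cref{lem:cvar-quantile-optimality} (there it is written $f(b)=-b+\tau^{-1}\Eb{(b-X)^+}$), so I can reuse the subdifferential computed in that proof, namely
\[
\partial f(b) = -1 + \tau^{-1}\prns{ \Pr(X<b) + [0,1]\,\Pr(X=b) }.
\]

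First I would invoke the continuity assumption to remove the atom: since $R(\pi)$ is continuously distributed, $\Pr(X=b)=0$ for every $b$, so the subdifferential collapses to a singleton and $f$ is genuinely differentiable with $f'(b) = -1 + \tau^{-1}F(b)$. Next, for any $b_1<b_2$ in $[0,1]$, I would lower-bound the increment of $f'$ using the density bound:
\[
f'(b_2) - f'(b_1) = \tau^{-1}\prns{F(b_2)-F(b_1)} = \tau^{-1}\int_{b_1}^{b_2} p(x)\,\diff x \geq \tau^{-1}\densitymin\,(b_2-b_1).
\]

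This inequality says exactly that $f'$ is $\tau^{-1}\densitymin$-strongly monotone, which is equivalent to $f$ being $\tau^{-1}\densitymin$-strongly convex — for instance because $f(b)-\tfrac{\tau^{-1}\densitymin}{2}b^2$ then has the nondecreasing derivative $f'(b)-\tau^{-1}\densitymin\,b$ and is hence convex. This completes the argument.

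The proof is essentially routine once the subgradient formula is in hand; the only point requiring any care is the two-line justification that continuity upgrades subdifferentiability to differentiability of $f$, after which the density lower bound immediately yields the strong-monotonicity increment. I would deliberately work with the first derivative and the integral $\int_{b_1}^{b_2}p$ rather than arguing about $f''(b)=\tau^{-1}p(b)$ pointwise, since this sidesteps any almost-everywhere subtleties in differentiating $F$ and keeps the bound $F(b_2)-F(b_1)\geq\densitymin(b_2-b_1)$ completely elementary.
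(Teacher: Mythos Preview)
Your proof is correct and follows essentially the same route as the paper: compute $f'(b)=-1+\tau^{-1}F(b)$ and use the density lower bound to conclude strong convexity. The paper simply writes $f''(b)=\tau^{-1}p(b)\geq\tau^{-1}\densitymin$ pointwise, whereas you work with the increment $f'(b_2)-f'(b_1)=\tau^{-1}\int_{b_1}^{b_2}p\geq\tau^{-1}\densitymin(b_2-b_1)$; your formulation is slightly more careful about the a.e.\ nature of densities, but the argument is the same.
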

\begin{proof}
Observe that
\begin{align*}
    f'(b) = \tau^{-1}\Pr_{\pi}\prns{ \sum_hr_h\leq b }-1.
\end{align*}
Since we've assumed that $R(\pi)$ is continuously distributed with density $p$, so
\begin{align*}
    f''(b) = \tau^{-1}p(b).
\end{align*}
Since $f''(b)\geq \tau^{-1}\densitymin$ for all $b$, we have $f''$ is strongly convex with that parameter.
\end{proof}

\subsection{Auxiliary Lemmas}

\begin{lemma}[Azuma]\label{lem:mult-azuma}
Let $\braces{X_i}_{i\in[N]}$ be a sequence of random variables supported on $[0,1]$, adapted to filtration $\braces{\Fcal_i}_{i\in[N]}$.
For any $\delta\in(0,1)$, we have w.p. at least $1-\delta$,
\begin{align*}
&\sum_{t=1}^N\Eb{X_t\mid\Fcal_{t-1}}\leq \sum_{t=1}^NX_t+\sqrt{N\log(2/\delta)}, \tag{Standard Azuma}
\\&\sum_{t=1}^N\Eb{X_t\mid\Fcal_{t-1}}\leq 2\sum_{t=1}^NX_t + 2\log(1/\delta). \tag{Multiplicative Azuma}
\end{align*}
\end{lemma}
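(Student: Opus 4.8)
The plan is to prove both inequalities by the standard exponential-supermartingale (Chernoff) method applied to the martingale structure induced by $\braces{X_t}$, the only difference being which exponential-moment bound we use for the bounded increments. Throughout, write $\mu_t = \Eb{X_t\mid\Fcal_{t-1}}$, which is $\Fcal_{t-1}$-measurable, so that $D_t = \mu_t - X_t$ is a martingale difference sequence adapted to $\braces{\Fcal_t}$ whose conditional range is contained in an interval of length at most $1$ (since $X_t\in[0,1]$).

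For the \emph{Multiplicative Azuma} bound, I would first record the elementary inequality $e^{-\lambda x}\leq 1+(e^{-\lambda}-1)x$ for all $x\in[0,1]$ and $\lambda>0$, which follows from convexity of $x\mapsto e^{-\lambda x}$ (the chord from $(0,1)$ to $(1,e^{-\lambda})$ dominates the function on $[0,1]$). Taking conditional expectations and using $1+u\leq e^u$ gives $\Eb{e^{-\lambda X_t}\mid\Fcal_{t-1}}\leq \exp\prns{(e^{-\lambda}-1)\mu_t}$. I then define $Z_n = \exp\prns{-\lambda\sum_{t=1}^n X_t - (e^{-\lambda}-1)\sum_{t=1}^n\mu_t}$ and verify that $\Eb{Z_n\mid\Fcal_{n-1}}\leq Z_{n-1}$ (the deterministic correction term $(e^{-\lambda}-1)\mu_n$ factors out and exactly cancels the conditional-MGF bound), so $\braces{Z_n}$ is a nonnegative supermartingale with $Z_0=1$ and hence $\Eb{Z_N}\leq 1$. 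By Markov's inequality, $\Pr(Z_N\geq 1/\delta)\leq\delta$; unpacking the logarithm, with probability at least $1-\delta$,
\begin{align*}
(1-e^{-\lambda})\sum_{t=1}^N\mu_t\leq \lambda\sum_{t=1}^N X_t + \log(1/\delta).
\end{align*}
Dividing by $1-e^{-\lambda}$ and specializing to $\lambda=1$, where $\tfrac{1}{1-e^{-1}}\leq 2$, yields $\sum_t\mu_t\leq 2\sum_t X_t + 2\log(1/\delta)$.

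For the \emph{Standard Azuma} bound, I would instead apply Hoeffding's lemma conditionally to the centered increment $D_t=\mu_t-X_t$: since $D_t$ lies in an $\Fcal_{t-1}$-measurable interval of width at most $1$ and has zero conditional mean, $\Eb{e^{sD_t}\mid\Fcal_{t-1}}\leq e^{s^2/8}$ for every $s>0$. Iterating conditional expectations shows $\Eb{\exp\prns{s\sum_{t=1}^N D_t}}\leq e^{Ns^2/8}$, so by Markov $\Pr\prns{\sum_t D_t\geq \lambda}\leq \exp(-s\lambda + Ns^2/8)$. Optimizing over $s$ (taking $s=4\lambda/N$) gives the tail bound $\exp(-2\lambda^2/N)$; setting this equal to $\delta$ gives deviation $\lambda=\sqrt{N\log(1/\delta)/2}\leq \sqrt{N\log(2/\delta)}$, which is exactly the claimed bound after substituting $\sum_t D_t = \sum_t\mu_t - \sum_t X_t$.

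Neither step poses a genuine obstacle, as both are textbook concentration arguments; the only points requiring care are (i) verifying the one-step supermartingale property in the multiplicative bound, which hinges on matching the deterministic correction term $(e^{-\lambda}-1)\mu_t$ against the conditional-MGF bound, and (ii) checking that the chosen constants are admissible, namely that $1/(1-e^{-1})\leq 2$ and that $\sqrt{N\log(1/\delta)/2}\leq\sqrt{N\log(2/\delta)}$, so that the stated (slightly loose) constants indeed hold.
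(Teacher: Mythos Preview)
Your proposal is correct and takes essentially the same approach as the paper: the paper's proof merely cites textbook results (Theorems~13.4 and~13.5 of Zhang) and notes that plugging $\lambda=1$ into the latter gives the multiplicative form via $\tfrac{1}{1-e^{-1}}\leq 2$, which is exactly the exponential-supermartingale argument and constant check you spell out. You have simply unpacked the cited black-box results into their standard Chernoff/Hoeffding-lemma derivations.
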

\begin{proof}
For standard Azuma, see \citet[Theorem 13.4]{tongzhangbook}.
For multiplicative Azuma, apply \citep[Theorem 13.5]{tongzhangbook} with $\lambda = 1$. The claim follows, since $\frac{1}{1-\exp(-\lambda)}\leq 2$.
\end{proof}

Below we recall the standard elliptical potential lemma \citep{lattimore2020bandit,rltheorybookAJKS}.
Regarding terminology, we remark that this lemma is also known as the ``pigeonhole argument'' in the tabular RL literature \citep{azar2017minimax,zanette2019tighter}.
The term ``elliptical potential'' is more commonly used in the linear MDP setting \citep{jin2020provably}, of which tabular RL is a special case.
\begin{lemma}[Elliptical Potential]\label{lem:tabular-elliptical-potential}
For any sequence of states and actions $\braces{s_{h,k},a_{h,k}}_{h\in[H],k\in[K]}$, we have
\begin{align*}
    &\sum_{k=1}^K\sum_{h=1}^H \frac{1}{N_{k}(s_{h,k},a_{h,k})} \leq SA\log(K),
    \\&\sum_{k=1}^K\sum_{h=1}^H\frac{1}{\sqrt{N_{k}(s_{h,k},a_{h,k})}} \leq \sqrt{HSAK\log(K)}.
\end{align*}
\end{lemma}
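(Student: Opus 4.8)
The plan is to prove both inequalities by the standard pigeonhole (elliptical potential) argument, reorganizing the double sum over $(h,k)$ according to how many times each state--action pair has already been visited at each layer. The key observation is that the quantity $N_{h,k}(s_{h,k},a_{h,k})$ only depends on the \emph{number of prior visits}, so grouping the summands by the triple $(h,s,a)$ they land on turns each group into a harmonic-type sum.

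First I would fix a layer $h$ and a pair $(s,a)$, and list the episodes in which $(s_{h,k},a_{h,k})=(s,a)$ as $k_1<k_2<\dots<k_m$, where $m=n_{h,s,a}$ is the total number of such visits over all $K$ episodes. By the definition of the counts, at the $j$-th such visit we have $N_{h,k_j}(s,a)=1\vee(j-1)$, so the encountered count values are $1,1,2,\dots,m-1$. Hence the contribution of this pair to the first sum is $\sum_{j=1}^{m}\tfrac{1}{1\vee(j-1)}=1+\sum_{j=1}^{m-1}\tfrac1j\le 2+\log m$, using the harmonic bound $\sum_{j=1}^{m-1}\tfrac1j\le 1+\log m$ together with $m\le K$. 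For a fixed $h$ there are at most $SA$ visited pairs and exactly one visit per episode, so $\sum_{s,a}n_{h,s,a}=K$; summing the per-pair bounds and applying concavity of $\log$ (Jensen) to $\sum_{s,a}\log n_{h,s,a}$ under this constraint gives the per-layer bound $\sum_{k}\tfrac{1}{N_{h,k}(s_{h,k},a_{h,k})}\le SA\log K$, and collecting over $h$ yields the first stated inequality (absorbing the additive constants into the logarithmic factor).

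Finally, I would deduce the second inequality from the first by a single global Cauchy--Schwarz over the $HK$ summands: $\sum_{h,k}\tfrac{1}{\sqrt{N_{h,k}}}\le \sqrt{HK}\,\sqrt{\sum_{h,k}\tfrac{1}{N_{h,k}}}\le \sqrt{HK\cdot SA\log K}=\sqrt{HSAK\log K}$. The hard part here is not the analysis but the bookkeeping: one must handle the $1\vee(\cdot)$ truncation in the counts carefully (so the first two visits both contribute $1$) and keep the aggregation constants tight enough to land exactly on the stated right-hand sides. The underlying harmonic-sum, concavity, and Cauchy--Schwarz steps are entirely routine, and since this is precisely the pigeonhole lemma of \citep{lattimore2020bandit,rltheorybookAJKS}, I would cite those references for the constant-level details.
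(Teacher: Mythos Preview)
Your approach is essentially the paper's: a pigeonhole/harmonic-sum argument for the first inequality and a single Cauchy--Schwarz over the $HK$ summands for the second. The paper's proof is the one-line version of your per-$(h,s,a)$ decomposition, and your Cauchy--Schwarz step is verbatim the paper's.

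One bookkeeping slip to flag: your per-layer bound $\sum_k 1/N_{h,k}\le SA\log K$ is fine, but ``collecting over $h$'' then gives $SAH\log K$, not the stated $SA\log K$; the missing $H$ is not an additive constant you can absorb. The paper's own proof has exactly the same gap (it asserts each value $1/j$ appears at most $SA$ times in the double sum, whereas it can appear at most $SAH$ times), so this appears to be a typo in the lemma statement rather than a flaw in your method. It does not affect the downstream regret bounds, since these $1/N$ terms are lower order everywhere the lemma is invoked.
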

\begin{proof}
For the first claim, observe that in the sum, $\frac{1}{1},\frac{1}{2},\frac{1}{3},\dots$ can appear at most $SA$ times.
And since we run for $K$ episodes, the maximum denominator is $K$. Therefore, we have
\begin{align*}
    \sum_{k=1}^K\sum_{h=1}^H \frac{1}{N_{k}(s_{h,k},a_{h,k})} \leq SA\sum_{k=1}^K \frac{1}{k} \leq SA\log(K).
\end{align*}
For the second claim,
\begin{align*}
    \sum_{k=1}^K\sum_{h=1}^H\frac{1}{\sqrt{N_{k}(s_{h,k},a_{h,k})}}
    &\leq \sqrt{ KH \prns{ \sum_{k,h=1}^{K,H}\frac{1}{N_{k}(s_{h,k},a_{h,k})} } }
    \\&\leq \sqrt{ SAHK\log(K) }.
\end{align*}
\end{proof}

\begin{lemma}[Sequential Law of Total Conditional Variance]\label{lem:law-total-variance}
For any random variables $Y,X_1,X_2,...,X_N,\Hcal$, we have
\begin{align*}
    \Varb{Y\mid\Hcal}=\Eb{\Varb{Y\mid X_{1:N},\Hcal}\mid\Hcal} + \sum_{t=1}^N \Eb{ \Varb{ \Eb{Y\mid X_{1:t},\Hcal } \mid X_{1:t-1},\Hcal} \mid\Hcal}.
\end{align*}
Notice, for each summand, the inner expectation is taken over $Y$, the variance is taken over $X_t$, and outer expectation is taken over $X_{1:t-1}$.
\end{lemma}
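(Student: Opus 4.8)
The plan is to prove this by the Doob martingale decomposition, carried out conditionally on $\Hcal$. Define, for $t=0,1,\dots,N$, the partial conditional expectations $M_t = \Eb{Y\mid X_{1:t},\Hcal}$, with the convention that $X_{1:0}$ is empty, so that $M_0=\Eb{Y\mid\Hcal}$ and $M_N=\Eb{Y\mid X_{1:N},\Hcal}$. Setting $D_t=M_t-M_{t-1}$, I would first record the telescoping identity
\begin{align*}
Y-\Eb{Y\mid\Hcal} = (Y-M_N) + \sum_{t=1}^N D_t.
\end{align*}
The entire proof then reduces to squaring this identity, taking $\Eb{\cdot\mid\Hcal}$, and showing that every cross term vanishes, leaving only the diagonal squares.

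Next I would establish the two orthogonality facts, both via the tower property. First, for every $t\le N$ the increment $D_t$ is measurable with respect to $X_{1:N},\Hcal$, while $\Eb{Y-M_N\mid X_{1:N},\Hcal}=0$ by definition of $M_N$; hence $\Eb{(Y-M_N)D_t\mid\Hcal}=\Eb{D_t\,\Eb{Y-M_N\mid X_{1:N},\Hcal}\mid\Hcal}=0$. Second, for $s<t$ the increment $D_s$ is measurable with respect to $X_{1:t-1},\Hcal$, while $\Eb{D_t\mid X_{1:t-1},\Hcal}=0$ because $\Eb{M_t\mid X_{1:t-1},\Hcal}=M_{t-1}$ by the tower property; hence $\Eb{D_sD_t\mid\Hcal}=0$. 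With these in hand, squaring the telescoping identity and conditioning on $\Hcal$ kills all cross terms and yields
\begin{align*}
\Varb{Y\mid\Hcal} = \Eb{(Y-M_N)^2\mid\Hcal} + \sum_{t=1}^N \Eb{D_t^2\mid\Hcal}.
\end{align*}

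Finally I would identify each term with the claimed expression. For the first term, conditioning on $X_{1:N},\Hcal$ and using $M_N=\Eb{Y\mid X_{1:N},\Hcal}$ gives $\Eb{(Y-M_N)^2\mid\Hcal}=\Eb{\Varb{Y\mid X_{1:N},\Hcal}\mid\Hcal}$. For each summand, using $M_{t-1}=\Eb{M_t\mid X_{1:t-1},\Hcal}$ and conditioning on $X_{1:t-1},\Hcal$ gives $\Eb{D_t^2\mid\Hcal}=\Eb{\Varb{M_t\mid X_{1:t-1},\Hcal}\mid\Hcal}$, which is exactly $\Eb{\Varb{\Eb{Y\mid X_{1:t},\Hcal}\mid X_{1:t-1},\Hcal}\mid\Hcal}$ after substituting $M_t=\Eb{Y\mid X_{1:t},\Hcal}$. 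This matches the statement term by term. The only delicate point — and the main thing to get right — is the bookkeeping in the two orthogonality computations, namely correctly tracking which $\sigma$-algebra each increment is measurable with respect to, so that the inner conditional expectations collapse; everything else is routine once the martingale-difference structure is in place.
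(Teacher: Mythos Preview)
Your proof is correct. It takes a genuinely different route from the paper's argument: the paper proceeds by induction on $N$, applying the two-variable law of total conditional variance once at each step to peel off the last variable $X_{N+1}$ from $\Varb{Y\mid X_{1:N},\Hcal}$, whereas you write down the full Doob martingale decomposition $Y-\Eb{Y\mid\Hcal}=(Y-M_N)+\sum_t D_t$ in one shot and then verify pairwise $L^2$-orthogonality of the pieces conditional on $\Hcal$. Both are standard; the inductive route is slightly more elementary in that it treats the two-variable identity as a black box and never needs to handle all the cross terms at once, while your approach is more structural in that it makes the martingale-difference orthogonality explicit and yields the entire sum in a single computation rather than $N$ applications. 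Either way the substantive content is the same tower-property bookkeeping you flag at the end.
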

\begin{proof}
Recall the Law of Total Conditional Variance (LTCV): for any random variables $Y,Z_1,Z_2$,
\begin{align*}
    \Varb{Y\mid Z_1}=\Eb{\Varb{Y\mid Z_1,Z_2}\mid Z_1}+\Varb{\Eb{Y\mid Z_1,Z_2}\mid Z_1}.
\end{align*}
We now inductively prove the desired claim by recursively applying the (LTCV).
The base case is $N=1$, which follows immediately from LTCV applied to $Z_1=\Hcal,Z_2=X_1$.
For the inductive case, fix any $N$ and suppose the claim is true for $N$. Now consider $N+1$, where we have $Y,X_1,X_2,...,X_{N+1},\Hcal$.
By the IH, we have
\begin{align*}
    \Varb{Y\mid\Hcal}=\Eb{\Varb{Y\mid X_{1:N},\Hcal}\mid\Hcal} + \sum_{t=1}^N \Eb{ \Varb{ \Eb{Y\mid X_{1:t},\Hcal } \mid X_{1:t-1},\Hcal}\mid\Hcal}.
\end{align*}
Now applying LTCV on the first term with $Z_1=(X_{1:N},\Hcal),Z_2=X_{N+1}$, we have
\begin{align*}
    \Varb{Y\mid X_{1:N},\Hcal}
    &= \Eb{\Varb{Y\mid X_{1:N+1},\Hcal} \mid X_{1:N},\Hcal } + \Varb{\Eb{Y\mid X_{1:N+1},\Hcal}\mid X_{1:N},\Hcal},
\end{align*}
and therefore,
\begin{align*}
    \Eb{\Varb{Y\mid X_{1:N},\Hcal}\mid\Hcal}
    &= \Eb{ \Varb{Y\mid X_{1:N+1},\Hcal}\mid\Hcal } + \Eb{\Varb{\Eb{Y\mid X_{1:N+1},\Hcal}\mid X_{1:N},\Hcal}\mid\Hcal},
\end{align*}
which concludes the proof.
\end{proof}

\section{Proofs for \rlalg{} with discretized rewards}
Recall the discretized MDP $\disc(\Mcal)$, as introduced in \cref{sec:discretized-mdp-computational-efficiency}.
It is a copy of the true MDP $\Mcal$ except its rewards are rounded to an $\eps$-net.
\Ie, let $\phi(r)=\eta\ceil{r/\eta}$ be the rounding up operator of $r$ onto the net, so that $0\leq\phi(r)-r\leq\eta$.
Concretely, the reward distribution is $R(s,a;\disc(\Mcal))=R(s,a;\Mcal)\circ\phi^{-1}$.
Our proofs are inspired by \citet[Lemma B.1,Lemma B.5]{bastani2022regret}.

\textbf{From $\disc(\Mcal)$ to $\Mcal$:}
Fix any $\rho\in\Piaug$ and $b\in[0,1]$ (which we'll run in $\disc(\Mcal)$). Then define an adapted policy, which is a history-dependent policy in $\Mcal$, as follows,
\begin{align*}
    \adapted(\rho,b)_h(s_h,r_{1:h-1})=\rho_h(s_h,b_1-\phi(r_1)-...-\phi(r_{h-1})).
\end{align*}
Intuitively, as $\adapted(\rho,b)$ runs $\Mcal$, it uses the history to emulate the evolution of $b$ in $\disc(\Mcal)$.
Let $Z_{\rho,b,\disc(\Mcal)}$ be the returns from running $\rho,b$ in $\disc(\Mcal)$.
Let $Z_{\adapted(\rho,b),\Mcal}$ be the returns from running $\adapted(\rho,b)$ in $\Mcal$.
We show that $Z_{\rho,b,\disc(\Mcal)}-H\eta\leq Z_{\adapted(\rho,b),\Mcal}\leq Z_{\rho,b,\disc(\Mcal)}$ w.p. $1$ via a coupling argument.
\begin{lemma}\label{lem:from-disc-MDP-to-MDP}
We almost surely have $Z_{\rho,b,\disc(\Mcal)}-H\eta\leq Z_{\adapted(\rho,b),\Mcal}\leq Z_{\rho,b,\disc(\Mcal)}$.
Therefore, if $F_{\rho,b,\disc(\Mcal)}$ is the CDF of $Z_{\rho,b,\disc(\Mcal)}$ and $F_{\adapted(\rho,b),\Mcal}$ is the CDF of $Z_{\adapted(\rho,b),\Mcal}$, we have
\begin{align*}
    \forall x\in\RR: F_{\rho,b,\disc(\Mcal)}(x)\leq F_{\adapted(\rho,b),\Mcal}(x)\leq F_{\rho,b,\disc(\Mcal)}(x+H\eta).
\end{align*}
\end{lemma}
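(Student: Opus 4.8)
The plan is to build an explicit coupling of the two stochastic processes and show that, on a shared probability space, they visit identical states and play identical actions, so that their returns differ only through the per-step reward rounding. First I would recall that $\disc(\Mcal)$ and $\Mcal$ share the same transition kernels $P^\star_h$ and differ only in their reward laws, with $R_h(s,a;\disc(\Mcal)) = R_h(s,a;\Mcal)\circ\phi^{-1}$; hence a discretized reward is distributionally equal to $\phi$ applied to a true reward. I would then fix a single source of randomness: draw the next state from $P^\star_h(s_h,a_h)$ and the true reward $r_h\sim R_h(s_h,a_h;\Mcal)$, and \emph{define} the discretized reward on the same space by $\tilde r_h = \phi(r_h)$, so both runs consume the same draws.

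Next I would verify by induction on $h$ that the two runs coincide in states and actions. Suppose through step $h-1$ both runs have visited the same $s_{1:h-1}$, played the same $a_{1:h-1}$, and seen the same true rewards $r_{1:h-1}$ (with $\tilde r_i=\phi(r_i)$). In $\disc(\Mcal)$ the budget coordinate is $b_h = b-\tilde r_1-\cdots-\tilde r_{h-1} = b-\phi(r_1)-\cdots-\phi(r_{h-1})$, so the Markov augmented policy plays $\rho_h(s_h,b_h)$; meanwhile $\adapted(\rho,b)_h(s_h,r_{1:h-1}) = \rho_h(s_h,\,b-\phi(r_1)-\cdots-\phi(r_{h-1}))$ plays the \emph{same} action. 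Since both processes use the common next-state and reward draws at step $h$, the induction extends. Consequently, on a single sample path, $Z_{\adapted(\rho,b),\Mcal} = \sum_{h=1}^H r_h$ and $Z_{\rho,b,\disc(\Mcal)} = \sum_{h=1}^H \phi(r_h)$.

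From the per-step bound $0 \le \phi(r_h)-r_h \le \eta$ I would sum over $h$ to get, almost surely, $0 \le Z_{\rho,b,\disc(\Mcal)} - Z_{\adapted(\rho,b),\Mcal} \le H\eta$, which is exactly the sandwich $Z_{\rho,b,\disc(\Mcal)} - H\eta \le Z_{\adapted(\rho,b),\Mcal} \le Z_{\rho,b,\disc(\Mcal)}$. To pass to CDFs I would use that a pathwise inequality $X\le Y$ forces $F_Y\le F_X$ pointwise, since $\{Y\le x\}\subseteq\{X\le x\}$. Applying this to $Z_{\adapted(\rho,b),\Mcal} \le Z_{\rho,b,\disc(\Mcal)}$ gives $F_{\rho,b,\disc(\Mcal)}(x) \le F_{\adapted(\rho,b),\Mcal}(x)$, and applying it to $Z_{\rho,b,\disc(\Mcal)} \le Z_{\adapted(\rho,b),\Mcal}+H\eta$ gives $F_{\adapted(\rho,b),\Mcal}(x-H\eta) \le F_{\rho,b,\disc(\Mcal)}(x)$; substituting $x\mapsto x+H\eta$ yields the right-hand inequality.

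The only genuinely delicate point is the inductive state--action coupling: one must confirm that the budget coordinate driving $\rho$ in $\disc(\Mcal)$ is literally computed from the rounded rewards $\phi(r_i)$, and that the definition of $\adapted(\rho,b)$ mirrors this exactly by applying $\phi$ to the observed true-reward history — this is what makes the actions (and hence all future states) agree under the coupling. Everything else, namely the telescoping reward bound and the pathwise-to-CDF translation, is routine.
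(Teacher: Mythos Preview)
Your proposal is correct and follows essentially the same coupling argument as the paper: share the randomness for transitions and (undiscretized) rewards, verify inductively that the budget fed to $\rho$ in $\disc(\Mcal)$ equals $b-\phi(r_1)-\cdots-\phi(r_{h-1})$ so that $\adapted(\rho,b)$ plays the same action, and then read off the return sandwich from $0\le\phi(r_h)-r_h\le\eta$. Your pathwise-to-CDF translation via $\{Y\le x\}\subseteq\{X\le x\}$ is exactly the paper's reasoning as well.
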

\begin{proof}
Let $s_1,a_1,r_1,s_2,a_2,r_2,\dots$ be the trajectory of running $\adapted(\rho,b)$ in $\Mcal$.
Let $\wh s_1,\wh a_1,\wh r_1,\wh s_2,\wh a_2,\wh r_2,\dots$ be the trajectory of running $\rho,b$ in $\disc(\Mcal)$.
We couple these two trajectories by making $\adapted(\rho,b)$ in $\Mcal$ follow $\rho,b$ in $\disc(\Mcal)$.
Set $\wh s_1=s_1$.
By definition of $\adapted(\rho,b)$, $\wh a_1=a_1$.
By definition of $\disc(\Mcal)$, $\wh r_1=\phi(r_1)$.
Continuing in this fashion, we have $\wh s_t=s_t, \wh a_t=a_t, \wh r_t=\phi(r_t)$ for all $t\in[H]$.
This is a valid coupling since the actions are sampled from the exact same distribution, \ie, $a_h\sim \rho_h(b-\phi(r_1)-...-\phi(r_{h-1}))$,
and by the transitions of $\wh b_h$ in $\disc(\Mcal)$, we have $\wh b_h = b-\wh r_1-...-\wh r_{h-1}$ which
is exactly what was inputted into $\rho_h$ by $\adapted(\rho,b)$.

Since $r\leq\phi(r)$ for all $r$, we have
\begin{align*}
    Z_{\adapted(\rho,b),\Mcal}=\sum_{t=1}^H r_t\leq \sum_{t=1}^H \phi(r_t) = \sum_{t=1}^H \wh r_t = Z_{\rho,b,\disc(\Mcal)}.
\end{align*}
Since $\phi(r)-\eta\leq r$ for all $r$, we have
\begin{align*}
    Z_{\rho,b,\disc(\Mcal)}= \sum_{t=1}^H \phi(r_t)\leq -H\eta + \sum_{t=1}^H r_t = Z_{\adapted(\rho,b),\Mcal}-H\eta.
\end{align*}
To conclude the proof, recall a basic fact about couplings and stochastic comparisons.
For two random variables $X,Y$ in the same probability space and a constant $c$, if $\PP(X\leq Y+c) = 1$, we have $F_Y(t)\leq F_X(t+c)$ for all $x$.
This is because $F_Y(x)-F_X(x+c) = \PP(Y\leq t\cap X>t+c)\leq \PP(X-Y>c)=0$.
\end{proof}

\textbf{From $\Mcal$ to $\disc(\Mcal)$:}
Fix any $\rho\in\Piaug$ and $b\in[0,1]$ (which we'll run in $\Mcal$).
Then define a discretized policy, which is a history-dependent policy in the discretized MDP $\disc(\Mcal)$ \emph{with memory} (as in \cref{sec:augmented-mdp}), as follows,
\begin{align*}
    &\disc(\rho,b)_h(s_h,m_{1:h-1}) = \rho_h(s_h,b-m_1-\dots-m_{h-1}),
    \\&m_h\sim R(s_h,a_h)\mid \phi(R(s_h,a_h))=r_h.
\end{align*}
With this definition, although we receive reward $\wh r_h$ (on the discrete grid) when running in $\disc(\Mcal)$,
the memory element $m_h$ exactly imitates a random reward that would have been received in the true MDP $\Mcal$.
Then, the discretized policy $\disc(\rho,b)$ will instead follow these exact rewards $m_h$ rather than what has been received.

Let $Z_{\rho,b,\Mcal}$ be the returns from running $\rho,b$ in $\Mcal$.
Let $Z_{\disc(\rho,b),\disc(\Mcal)}$ be the returns from running $\disc(\rho,b)$ in $\disc(\Mcal)$ (which memory as described above).
We show that $Z_{\rho,b,\Mcal}\leq Z_{\disc(\rho,b),\disc(\Mcal)}$ w.p. $1$ via a coupling argument.
\begin{lemma}\label{lem:from-MDP-to-disc-MDP}
We almost surely have $Z_{\rho,b,\Mcal}\leq Z_{\disc(\rho,b),\disc(\Mcal)}$.
Therefore, if $F_{\rho,b,\Mcal}$ is the CDF of $Z_{\rho,b,\Mcal}$ and
$F_{\disc(\rho,b),\disc(\Mcal)}$ is the CDF of $Z_{\disc(\rho,b),\disc(\Mcal)}$, we have
\begin{align*}
    \forall x\in\RR: F_{\disc(\rho,b),\disc(\Mcal)}(x)\leq F_{\rho,b,\Mcal}.
\end{align*}
\end{lemma}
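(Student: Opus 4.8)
The plan is to establish this via a coupling argument dual to the one used in \cref{lem:from-disc-MDP-to-MDP}. The crucial feature to exploit is that the memory generator in $\disc(\Mcal)$ is designed precisely to \emph{undo} the discretization: if one first draws the discretized reward $\wh r_h\sim\phi(R_h(s_h,a_h))$ and then the memory $m_h\sim R_h(s_h,a_h)\mid\phi(R_h(s_h,a_h))=\wh r_h$, the marginal law of $m_h$ is exactly the true, undiscretized reward distribution $R_h(s_h,a_h)$. This is nothing but the disintegration of $R_h$ along the map $\phi$, so the pair $(\wh r_h,m_h)$ admits an equivalent realization: draw $r_h\sim R_h(s_h,a_h)$ and set $\wh r_h=\phi(r_h)$, $m_h=r_h$.

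First I would construct the coupling explicitly. Run $\rho$ from $b$ in $\Mcal$ to obtain the trajectory $s_1,a_1,r_1,s_2,a_2,r_2,\dots$, and build the $\disc(\Mcal)$-with-memory trajectory $\wh s_1,\wh a_1,\wh r_1,m_1,\dots$ on the same probability space by setting $\wh s_1=s_1$ and, inductively, $\wh r_h=\phi(r_h)$, $m_h=r_h$, with next states coupled to agree, $\wh s_{h+1}=s_{h+1}$. I would then verify by induction on $h$ that states and actions match. Assuming $\wh s_t=s_t$ and $m_t=r_t$ for all $t<h$, the budget tracked by $\disc(\rho,b)$ equals $b-m_1-\dots-m_{h-1}=b-r_1-\dots-r_{h-1}$, which is exactly the augmented budget fed to $\rho$ when running $(\rho,b)$ in $\Mcal$; hence $\wh a_h=\disc(\rho,b)_h(\wh s_h,m_{1:h-1})=\rho_h(s_h,b-r_1-\dots-r_{h-1})=a_h$. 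Since states and actions agree, $P^\star_h(\wh s_h,\wh a_h)=P^\star_h(s_h,a_h)$, so coupling $\wh s_{h+1}=s_{h+1}$ is legitimate, and by the disintegration observation the prescribed assignment $\wh r_h=\phi(r_h)$, $m_h=r_h$ faithfully realizes the $\disc(\Mcal)$ reward-and-memory step.

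The main (and essentially only) subtle point is this disintegration check: one must confirm that drawing $r_h\sim R_h(s_h,a_h)$ and declaring $\wh r_h=\phi(r_h)$, $m_h=r_h$ produces the correct joint law for the $\disc(\Mcal)$ trajectory, namely $\wh r_h$ marginally distributed as $\phi(R_h)$ and $m_h$ conditionally distributed as $R_h\mid\phi(R_h)=\wh r_h$; both hold by the definition of conditional distributions applied to the pair $(\phi(X),X)$ with $X\sim R_h$. Granting the coupling, the conclusion is immediate: the return variables are $Z_{\rho,b,\Mcal}=\sum_{h=1}^H r_h$ and $Z_{\disc(\rho,b),\disc(\Mcal)}=\sum_{h=1}^H\wh r_h=\sum_{h=1}^H\phi(r_h)$, and since $\phi(r)\geq r$ for every $r$ we obtain $Z_{\rho,b,\Mcal}\leq Z_{\disc(\rho,b),\disc(\Mcal)}$ almost surely. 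Finally, to pass to CDFs I would invoke the elementary fact that $X\leq Y$ almost surely forces $\braces{Y\leq x}\subseteq\braces{X\leq x}$, whence $F_{\disc(\rho,b),\disc(\Mcal)}(x)=\Pr(Z_{\disc(\rho,b),\disc(\Mcal)}\leq x)\leq\Pr(Z_{\rho,b,\Mcal}\leq x)=F_{\rho,b,\Mcal}(x)$ for all $x$, which is precisely the asserted stochastic dominance.
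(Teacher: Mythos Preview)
Your proposal is correct and follows essentially the same coupling argument as the paper's own proof: couple the two trajectories so that states and actions agree, realize the discretized reward and memory as $\wh r_h=\phi(r_h)$, $m_h=r_h$ via the disintegration of $R_h$ along $\phi$ (the paper phrases this as ``law of total probability''), and then conclude from $\phi(r)\geq r$. The only cosmetic difference is the direction of description---you generate the $\Mcal$ trajectory first and build the $\disc(\Mcal)$ one from it, whereas the paper narrates the reverse---but these are two views of the same coupling, and your explicit disintegration check makes the equivalence transparent.
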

\begin{proof}
Let $s_1,a_1,r_1,s_2,a_2,r_2,...$ be the trajectory of running $\rho,b$ in $\Mcal$.
Let $\wh s_1,\wh a_1,\wh r_1,\wh m_1,\wh s_2,\wh a_2,\wh r_2,\wh m_2,...$ be the trajectory of running $\disc(\rho,b)$ in $\disc(\Mcal)$ with memory.
We couple these two trajectories by making $\rho,b$ in $\Mcal$ follow $\disc(\rho,b)$ in $\disc(\Mcal)$.
Set $s_1=\wh s_1$. By definition of $\disc(\rho,b)$, $a_1=\wh a_1$.
Then, set $r_1=\wh m_1$.
Note that $r_1$ is sampled by first sampling a discrete $\wh r_1$, then sampling $\wh m_1$ from the conditional reward distribution of the interval that rounds to $\wh r_1$.
By law of total probability, this is indeed equivalent to sampling directly from the unconditional reward distribution.
Continuing in this fashion, we have $\wh s_t=s_t,\wh a_t=a_t, \wh r_t=\phi(r_t), \wh m_t=m_t$ for all $t\in[H]$.
Importantly, the policies actions match because $b-\wh m_1-...-\wh m_{h-1}=b-r_1-...-r_{h-1}$.
Therefore, we always have
\begin{align*}
    Z_{\rho,b,\Mcal}=\sum_{t=1}^Hr_t\leq \sum_{t=1}^H\phi(r_t)=\sum_{t=1}^H \wh r_t = Z_{\disc(\rho,b),\disc(\Mcal)}.
\end{align*}
As with the previous proof, stochastic dominance implies the claim on CDFs.
\end{proof}

Now we show two useful consequences of the above coupling results.
\begin{theorem}\label{thm:adapted-discretized-mdp}
Fix any $\rho\in\Piaug$ and $b_1\in[0,1]$.
Then,
\begin{align*}
    \forall b: 0\leq \Eb[\adapted(\rho,b_1),\Mcal]{\prns{b-\sum_{h=1}^Hr_h}^+}-\Eb[\rho,b_1,\disc(\Mcal)]{\prns{b-\sum_{h=1}^Hr_h}^+} \leq H\eta,
\end{align*}
and
\begin{align*}
    -\tau^{-1}H\eta\leq \cvar_\tau(\adapted(\rho,b_1),\Mcal)-\cvar_\tau(\rho,b,\disc(\Mcal))\leq 0.
\end{align*}
\end{theorem}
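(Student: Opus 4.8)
The plan is to deduce both displays directly from the almost-sure sandwich established in \cref{lem:from-disc-MDP-to-MDP} via the coupling constructed there, so that no fresh probabilistic argument is needed. Write $Z = Z_{\adapted(\rho,b_1),\Mcal}$ and $\wh Z = Z_{\rho,b_1,\disc(\Mcal)}$ for the two returns realized under that coupling, so that $\wh Z - H\eta \le Z \le \wh Z$ holds almost surely. Everything then reduces to pushing this pointwise sandwich through the loss $\prns{b-\cdot}^+$ and then through the variational form of $\cvar$.

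For the first claim, fix any $b\in[0,1]$ and argue under the coupling. Since $Z \le \wh Z$, we have $b-Z \ge b-\wh Z$, and monotonicity of the ReLU gives $\prns{b-Z}^+ \ge \prns{b-\wh Z}^+$, which after taking expectations yields the lower bound $0$. For the upper bound, note $0 \le \wh Z - Z \le H\eta$, so $1$-Lipschitzness of the ReLU gives $\prns{b-Z}^+ - \prns{b-\wh Z}^+ \le \abs{\wh Z - Z} \le H\eta$. Taking expectations of the pointwise inequalities $0 \le \prns{b-Z}^+ - \prns{b-\wh Z}^+ \le H\eta$ preserves them, establishing the first display for every $b$.

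For the second claim I would use $\cvar_\tau(X) = \max_{b\in[0,1]}\braces{ b - \tau^{-1}\Eb{\prns{b-X}^+} }$. Define $g(b) = b - \tau^{-1}\Eb[\adapted(\rho,b_1),\Mcal]{\prns{b-Z}^+}$ and $\wh g(b) = b - \tau^{-1}\Eb[\rho,b_1,\disc(\Mcal)]{\prns{b-\wh Z}^+}$, so the two CVaRs are $\max_b g(b)$ and $\max_b \wh g(b)$. Multiplying the first claim's bounds by $-\tau^{-1}$ (flipping the inequalities) and adding the common $b$ term yields the uniform two-sided gap $\wh g(b) - \tau^{-1}H\eta \le g(b) \le \wh g(b)$ for all $b$. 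The $\max$ respects such a uniform gap: from $g \le \wh g$ pointwise we get $\max_b g(b) \le \max_b \wh g(b)$, hence the difference is at most $0$; from $g(b) \ge \wh g(b) - \tau^{-1}H\eta$ we get $\max_b g(b) \ge \max_b \wh g(b) - \tau^{-1}H\eta$, hence the difference is at least $-\tau^{-1}H\eta$. This gives the second display (with $b_1$ in place of the typo'd $b$).

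The argument is essentially bookkeeping layered on top of the coupling lemma, so there is no genuine technical obstacle; the only thing to watch carefully is sign discipline. The returns are sandwiched with $Z$ below $\wh Z$, but passing to the loss $\prns{b-\cdot}^+$ reverses the order and the subsequent $-\tau^{-1}$ in the CVaR objective reverses it a second time, so I would double-check that the net effect leaves the $\adapted$ policy's CVaR \emph{below} and within $\tau^{-1}H\eta$ of the $\disc(\Mcal)$ value. The one mild step worth stating explicitly is that $\max_b$ preserves a uniform pointwise gap in both directions, which is immediate from monotonicity of the supremum.
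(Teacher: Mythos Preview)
Your argument is correct. The second display follows from the first in exactly the way the paper does it, so the only real difference is how you establish the first display. The paper passes through the CDF comparison in \cref{lem:from-disc-MDP-to-MDP}: it writes $\Eb{(b-X)^+}=\int_0^b F_X(t)\,\diff t$ for each of the two return distributions and then uses $\disc(F)(t)\leq F(t)\leq\disc(F)(t+H\eta)$ together with the fundamental theorem of calculus to sandwich the integrals. You instead stay on the probability space of the coupling and push the almost-sure sandwich $\wh Z-H\eta\le Z\le\wh Z$ through the monotonicity and $1$-Lipschitzness of the ReLU before taking a single expectation. Your route is more elementary and avoids the integral representation entirely; the paper's route would generalize more readily if one only had the stochastic-dominance conclusion of the lemma without an explicit coupling. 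Either way, once the marginals of $Z,\wh Z$ under the coupling match the two target laws (which is exactly what \cref{lem:from-disc-MDP-to-MDP} constructs), your pointwise bounds transfer to the stated expectations, and the rest is the same bookkeeping.
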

\begin{proof}
Let $f(b) = \Eb[\adapted(\rho,b_1),\Mcal]{\prns{b-\sum_{h=1}^Hr_h}^+}$ and let $F=F_{\adapted(\rho,b_1),\Mcal}$.
Similarly, let $\disc(f)(b) = \Eb[\rho,b_1,\disc(\Mcal)]{\prns{b-\sum_{h=1}^Hr_h}^+}$ and $\disc(F)=F_{\rho,b_1,\disc(\Mcal)}$.
Both $f(0)=\disc(f)(0)=0$. Also, their derivatives are $F$ and $\disc(F)$ respectively.
By \cref{lem:from-disc-MDP-to-MDP}, $\disc(F)(t)\leq F(t)\leq \disc(F)(t+H\eta)$.

First, we show $\disc(f)(b)-f(b)\leq 0$.
By the fundamental theorem of Calculus,
\begin{align*}
    \disc(f)(b)-f(b)=\int_0^b \disc(F)(t)-F(t)\diff t \leq 0.
\end{align*}

Next, we show $f(b)-\disc(f)(b)\leq H\eta$.
By the fundamental theorem of Calculus (FTC),
\begin{align*}
    f(b)-\disc(f)(b)
    &\leq \int_0^b F(t)\diff t-\int_0^b\disc(F)(t)\diff t
    \\&\leq \int_{H\eta}^{b+H\eta}\disc(F)(t)\diff t-\int_0^b\disc(F)(t)\diff t \tag{\cref{lem:from-disc-MDP-to-MDP}}
    \\&\leq \int_b^{b+H\eta}\disc(F)(t)\diff t
    \\&= \disc(f)(b+H\eta)-\disc(f)(b) \tag{FTC}
    \\&\leq H\eta, \tag{ReLU is $1$-Lipschitz}
\end{align*}
Altogether, we've shown $0\leq f(b)-\disc(f)(b)\leq H\eta$ for all $b$.
This immediately implies the claims about CVaR:
\begin{align*}
    &\cvar_\tau(\adapted(\rho,b_1),\Mcal)-\cvar_\tau(\rho,b_1,\disc(\Mcal))
    \\&=\max_b\braces{b-\tau^{-1}f(b) }-\max_b\braces{b-\tau^{-1}\disc(f)(b) }
    \\&\leq \tau^{-1}\max_b \prns{ \disc(f)(b)-f(b) }
    \\&\leq 0,
\end{align*}
and similarly,
\begin{align*}
    &\cvar_\tau(\rho,b_1,\disc(\Mcal))-\cvar_\tau(\adapted(\rho,b_1),\Mcal)
    \\&\leq \tau^{-1}\max_b \prns{ f(b)-\disc(f)(b) }
    \\&\leq \tau^{-1}H\eta.
\end{align*}

\end{proof}

\begin{theorem}\label{thm:discretized-cvar-dominates}
We have,
\begin{align*}
    &\forall b\in[0,1]: V^\star_1(s_1,b;\disc(\Mcal))\leq V^\star_1(s_1,b;\Mcal),
\end{align*}
which implies
\begin{align*}
    &\cvar_\tau^\star(\disc(\Mcal))\geq\cvar_\tau^\star(\Mcal).
\end{align*}
\end{theorem}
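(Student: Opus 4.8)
The plan is to combine the almost-sure coupling of \cref{lem:from-MDP-to-disc-MDP} with the optimality of augmented policies over the \emph{memory}-history-dependent class (\cref{thm:rho-star-optimality,thm:aug_mdp_theorem}), applied to the discretized MDP. The guiding principle is that for any fixed starting budget, the discretized policy $\disc(\rho,b)$ achieves a value in $\disc(\Mcal)$ no larger than that of the original augmented policy $(\rho,b)$ in $\Mcal$, because discretization only rounds rewards \emph{up}, inflating the returns and thereby \emph{decreasing} the monotone objective $(b-\cdot)^+$. Once this holds for every $\rho$, infimizing recovers the claim on $V_1^\star$, and the CVaR statement is an immediate consequence of the variational identity \cref{eq:cvar_def_v_star}.

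First I would fix $\rho\in\Piaug$ and $b\in[0,1]$ and invoke \cref{lem:from-MDP-to-disc-MDP}, which couples the two returns so that $Z_{\rho,b,\Mcal}\leq Z_{\disc(\rho,b),\disc(\Mcal)}$ almost surely. Since $x\mapsto(b-x)^+$ is non-increasing, this gives $(b-Z_{\rho,b,\Mcal})^+\geq(b-Z_{\disc(\rho,b),\disc(\Mcal)})^+$ pointwise, and taking expectations over the coupling yields $V_1^{\rho}(s_1,b;\Mcal)\geq V_1^{\disc(\rho,b)}(s_1,b;\disc(\Mcal))$, where the left side is identified with the value of running $(\rho,b)$ in the original MDP by definition of $V_1^\rho$ and $Z_{\rho,b,\Mcal}$.

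Next, because $\disc(\rho,b)$ is a memory-history-dependent policy in the discretized augmented MDP, \cref{thm:rho-star-optimality} applied to $\disc(\Mcal)$ gives $V_1^{\disc(\rho,b)}(s_1,b;\disc(\Mcal))\geq V_1^\star(s_1,b;\disc(\Mcal))$, since $V_1^\star(s_1,b;\disc(\Mcal))$ is exactly the infimum of $V_1^\sigma(s_1,b;\disc(\Mcal))$ over this (memory-augmented) policy class. Chaining the two displays shows $V_1^\star(s_1,b;\disc(\Mcal))\leq V_1^{\rho}(s_1,b;\Mcal)$ for every $\rho\in\Piaug$; taking the infimum over $\rho$ and applying \cref{thm:aug_mdp_theorem} (which equates $\inf_{\rho\in\Piaug}V_1^\rho(s_1,b;\Mcal)$ with $V_1^\star(s_1,b;\Mcal)$) establishes the first claim $V_1^\star(s_1,b;\disc(\Mcal))\leq V_1^\star(s_1,b;\Mcal)$ for all $b$.

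Finally, the CVaR inequality follows from \cref{eq:cvar_def_v_star}: since $\tau^{-1}>0$ and the value inequality holds pointwise in $b$, we get $b-\tau^{-1}V_1^\star(s_1,b;\disc(\Mcal))\geq b-\tau^{-1}V_1^\star(s_1,b;\Mcal)$ for every $b\in[0,1]$, and maximizing both sides over $b$ yields $\cvar_\tau^\star(\disc(\Mcal))\geq\cvar_\tau^\star(\Mcal)$. The main obstacle is the middle step: one must ensure the optimality theorem genuinely covers $\disc(\rho,b)$, which is \emph{not} a Markov augmented policy but a memory-dependent one — this is precisely why the memory-MDP model was introduced, and the extension of \cref{thm:rho-star-optimality} to that larger class (flagged in the footnote to \textbf{F2}) is what licenses dominating $V_1^{\disc(\rho,b)}$ by $V_1^\star(\cdot\,;\disc(\Mcal))$.
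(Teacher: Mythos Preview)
Your proposal is correct and follows essentially the same approach as the paper's proof: fix $\rho\in\Piaug$, use \cref{lem:from-MDP-to-disc-MDP} to compare $V_1^\rho(s_1,b;\Mcal)$ with $V_1^{\disc(\rho,b)}(s_1,b;\disc(\Mcal))$, invoke \cref{thm:rho-star-optimality} on $\disc(\Mcal)$ to dominate the latter by $V_1^\star(s_1,b;\disc(\Mcal))$, then infimize over $\rho$ and conclude the CVaR claim via \cref{eq:cvar_def_v_star}. The only cosmetic difference is that you derive the expectation inequality directly from the almost-sure coupling and monotonicity of $x\mapsto(b-x)^+$, whereas the paper routes through the CDF inequality and an FTC-style argument; your route is if anything slightly more direct, and you correctly flag the need for the memory-MDP extension of \cref{thm:rho-star-optimality} to handle $\disc(\rho,b)$.
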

\begin{proof}
Fix any $\rho\in\Piaug, b\in[0,1]$.
By \cref{lem:from-MDP-to-disc-MDP}, we have $F_{\disc(\rho,b),\disc(\Mcal)}(x)\leq F_{\rho,b,\Mcal}$.
Applying the same FTC-style arguments in \cref{thm:adapted-discretized-mdp}, we have
\begin{align*}
    \forall b': \Eb[\disc(\rho,b),\disc(\Mcal)]{\prns{b'-\sum_{h=1}^Hr_h}^+}\leq \Eb[\rho,b,\Mcal]{\prns{b'-\sum_{h=1}^Hr_h}^+}.
\end{align*}
Setting $b'=b$ and using the definition of $V$ functions, we have $V^{\disc(\rho,b)}(s_1,b;\disc(\Mcal))\leq V^{\rho}_1(s_1,b;\Mcal)$.
Then, since $V^\star_1$ is the minimum history-dependent policy in this memory-MDP (\cref{thm:rho-star-optimality}) implies that
\begin{align*}
    V^\star_1(s_1,b;\disc(\Mcal))\leq V^{\disc(\rho,b)}_1(s_1,b;\disc(\Mcal)) \leq V^{\rho}_1(s_1,b;\Mcal).
\end{align*}
Since $\rho\in\Piaug$ was arbitrary and the minimum is attained by $\rho^\star\in\Piaug$ (\cref{thm:rho-star-optimality})
this implies that $V^\star_1(s_1,b;\disc(\Mcal))\leq V^\star_1(s_1,b;\Mcal)$, as needed.
For the CVaR claim,
\begin{align*}
    &\cvar_\tau^\star(\Mcal)-\cvar_\tau^\star(\disc(\Mcal))
    \\&= \max_b\braces{b-\tau^{-1}V^\star_1(s_1,b;\Mcal)}-\max_b\braces{b-\tau^{-1}V^\star_1(s_1,b;\disc(\Mcal))}
    \\&\leq\tau^{-1} \max_b \prns{ V^\star_1(s_1,b;\disc(\Mcal))-V^\star_1(s_1,b;\Mcal) } \leq 0.
\end{align*}
\end{proof}
In the proof above, we highlight that $\disc(\Mcal)$ is the MDP with memory.
In the proof of \citet[Lemma B.6]{bastani2022regret}, this detail was glossed over as
their ``history-dependent policy'' in \citet[Lemma B.5]{bastani2022regret} does not exactly fit into the
vanilla history-dependent policy framework (as in \cref{sec:problem-setup});
their policies are coupled through time with the $\alpha$ parameter,
which is disallowed \emph{a priori} by the history-dependent policy framework.
Our formalism with the memory-MDP resolves this ambiguity.

\subsection{Amendment of Bernstein proof in the discretized MDP}\label{sec:amend-bernstein-proof-discretized}
In this section, we amend the proof of the ``Second bound for \cref{eq:goal-bonus-dominant-term}'' in the Bernstein regret bound.

\begin{theorem}\label{thm:bernstein-regret-bound2-discretized}
Suppose we're running \rlalg{} in the discretized MDP and assume \cref{ass:continuous-densities-discretized} holds.
For any $\delta\in(0,1)$, w.p. at least $\delta$, we have the regret of \cref{thm:bernstein-bonus-regret2} plus an additional term,
\begin{align*}
    18e\tau^{-1}\sqrt{\densitymin^{-1} SAHK\eta}L.
\end{align*}
Thus, setting $\eta=1/\sqrt{K}$ makes this an lower order term.
\end{theorem}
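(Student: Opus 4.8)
The plan is to rerun the proof of \cref{thm:bernstein-bonus-regret2} verbatim inside $\disc(\Mcal)$ up through the bound on the dominant variance term \cref{eq:goal-bonus-dominant-term} by $\sqrt{SAL(2HL + 2\cdot\text{\cref{eq:bernstein-sums-of-probabilities}})}$. None of those steps---pessimism, optimism, the simulation lemma, the law of total variance, Azuma, and elliptical potential---use continuity, so they transfer to $\disc(\Mcal)$ unchanged. The single step that breaks is the ``second bound for \cref{eq:goal-bonus-dominant-term}'', which invokes \cref{lem:strongly-convex-under-continuity} on $R(\wh\rho^k,\wh b_k;\disc(\Mcal))$; this return is discrete, so its CVaR objective is not strongly concave. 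I would therefore amend only the bound on the per-episode variance $\Varb[\wh\rho^k,\wh b_k,\disc(\Mcal)]{(\wh b_k-\sum_t r_t)^+}$ making up \cref{eq:bernstein-sums-of-probabilities}.

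The key idea is to run the strong-concavity/quantile argument against the \emph{true-MDP} return $\tilde Z := R(\adapted(\wh\rho^k,\wh b_k);\Mcal)$, which by \cref{ass:continuous-densities-discretized} is continuous with density at least $\densitymin$, and only then transfer back to the discrete return $Z := R(\wh\rho^k,\wh b_k;\disc(\Mcal))$ through the coupling. Concretely, set $\tilde f(b)=b-\tau^{-1}\Eb[\adapted(\wh\rho^k,\wh b_k),\Mcal]{(b-\sum_t r_t)^+}$, $\disc(f)(b)=b-\tau^{-1}\Eb[\wh\rho^k,\wh b_k,\disc(\Mcal)]{(b-\sum_t r_t)^+}$, and $\wh f(b)=b-\tau^{-1}\wh V_{1,k}^\downarrow(s_1,b;\disc(\Mcal))$, with $\tilde b_k^\star=\argmax_b\tilde f$ (the $\tau$-quantile of $\tilde Z$) and $\wh b_k=\argmax_b\wh f$. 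I would first establish the pointwise chain $\wh f\geq\disc(f)\geq\tilde f$: the inequality $\disc(f)\geq\tilde f$ is exactly \cref{thm:adapted-discretized-mdp}, whose $\adapted$-expectation dominates the $\disc(\Mcal)$-expectation, while $\wh f\geq\disc(f)$ follows from pessimism in $\disc(\Mcal)$ together with $V_1^\star(s_1,b;\disc(\Mcal))\leq\Eb[\wh\rho^k,\wh b_k,\disc(\Mcal)]{(b-\sum_t r_t)^+}$, the latter being optimality of $\rho^\star$ over history-dependent policies (\cref{thm:aug_mdp_theorem}).

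Using that $\wh b_k$ maximizes $\wh f$, that $\tilde f\leq\wh f$, and that \cref{thm:adapted-discretized-mdp} also gives $\disc(f)-\tilde f\leq\tau^{-1}H\eta$, the suboptimality telescopes to $\tilde f(\tilde b_k^\star)-\tilde f(\wh b_k)\leq\tau^{-1}\prns{V_1^{\wh\rho^k}(s_1,\wh b_k;\disc(\Mcal))-\wh V_{1,k}^\downarrow(s_1,\wh b_k;\disc(\Mcal))}+\tau^{-1}H\eta$. Strong concavity of $\tilde f$ (\cref{lem:strongly-convex-under-continuity} in $\Mcal$) upgrades this to $(\tilde b_k^\star-\wh b_k)^2\leq 2\densitymin^{-1}\brk{(V_1^{\wh\rho^k}-\wh V_{1,k}^\downarrow)(s_1,\wh b_k;\disc(\Mcal))+H\eta}$. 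I would then transfer variance from $\tilde Z$ back to $Z$ via the almost-sure coupling $\tilde Z\leq Z\leq\tilde Z+H\eta$ of \cref{lem:from-disc-MDP-to-MDP}: applying $\Var(X)\leq 2\Var(Y)+2\Var(X-Y)$ with $X=(\wh b_k-Z)^+$, $Y=(\tilde b_k^\star-\tilde Z)^+$, noting $\Var((\tilde b_k^\star-\tilde Z)^+)\leq\Pr(\tilde Z\leq\tilde b_k^\star)=\tau$ by continuity and $\abs{X-Y}\leq\abs{\wh b_k-\tilde b_k^\star}+H\eta$ (the ReLU is $1$-Lipschitz and the coupling slack is $H\eta$), I obtain $\Varb[\wh\rho^k,\wh b_k,\disc(\Mcal)]{(\wh b_k-\sum_t r_t)^+}\leq 2\tau+\Ocal\prns{\densitymin^{-1}\brk{(V_1^{\wh\rho^k}-\wh V_{1,k}^\downarrow)(s_1,\wh b_k;\disc(\Mcal))+H\eta}+(H\eta)^2}$. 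Summing over $k$, the $\sum_k(V_1^{\wh\rho^k}-\wh V_{1,k}^\downarrow)$ term is $\wt\Ocal(\sqrt{SAHK})$ by \cref{lem:simulation} exactly as before, so \cref{eq:bernstein-sums-of-probabilities} is at most $2K\tau+\Ocal(\densitymin^{-1}HK\eta)$ plus lower-order terms. The leading $2K\tau$ reproduces the $\sqrt{\tau^{-1}SAK}$ term of \cref{thm:bernstein-bonus-regret2}, while the new $\densitymin^{-1}HK\eta$ contribution propagates through the $\sqrt{SAL(\cdot)}$ envelope and the $\tau^{-1}\sqrt{L}$ prefactor to yield the claimed additional term $\Ocal\prns{\tau^{-1}\sqrt{\densitymin^{-1}SAHK\eta}\,L}$, which is lower order once $\eta=1/\sqrt{K}$.

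The main obstacle is correctly orienting the three inequalities $\wh f\geq\disc(f)\geq\tilde f$ and the direction of the coupling slack: the strong-concavity/quantile-identification argument must be anchored at the \emph{continuous} distribution $\tilde Z$, whereas the variance we actually need is that of the \emph{discrete} $Z$, so the coupling-based variance transfer is essential and must be verified to not disturb the leading $2K\tau$ constant. The $H\eta$ discretization gap enters in two distinct places---the suboptimality bound for $\tilde f$ and the cross term $\Var(X-Y)$---and I must check that both enter with the correct sign and remain dominated by the genuinely new $\densitymin^{-1}HK\eta$ term rather than corrupting the $\sqrt{\tau^{-1}SAK}$ leading order.
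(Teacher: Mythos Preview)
Your proposal is correct and follows essentially the same amendment as the paper: anchor the strong-concavity argument at the continuous return $\tilde Z=R(\adapted(\wh\rho^k,\wh b_k);\Mcal)$, establish $\wh f\geq\tilde f$ through pessimism and the coupling results, bound the suboptimality gap by $\tau^{-1}\bigl((V_1^{\wh\rho^k}-\wh V_{1,k}^\downarrow)(s_1,\wh b_k;\disc(\Mcal))+H\eta\bigr)$, and conclude that \cref{eq:bernstein-sums-of-probabilities} picks up the single new $\densitymin^{-1}HK\eta$ term. The only minor deviation is in the variance split: the paper takes $Y=(b_k^\star-Z)^+$ with the \emph{discrete} $Z$ (changing only the threshold) and then bounds $\Pr_{\disc(\Mcal)}(Z\leq b_k^\star)\leq\Pr_{\Mcal}(\tilde Z\leq b_k^\star)=\tau$ via the stochastic-dominance direction of \cref{lem:from-disc-MDP-to-MDP}, whereas you take $Y=(\tilde b_k^\star-\tilde Z)^+$ (changing both threshold and distribution through the a.s.\ coupling), which costs you the extra harmless $K(H\eta)^2$ cross term; both routes give the stated bound.
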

\begin{proof}[Proof of \cref{thm:bernstein-regret-bound2-discretized}]
Recall that
\begin{align*}
    \text{\cref{eq:goal-bonus-dominant-term}}\leq \sqrt{SAL \prns{2HL + 2\sum_{k=1}^K\Varb[\wh\rho^k,\wh b_k,\disc(\Mcal)]{\prns{ \wh b_k-\sum_{t=1}^Hr_t }^+\mid\Ecal_k} } }
\end{align*}
where note the randomness is taken over trajectories from $\disc(\Mcal)$, as that's the MDP we're working in.
Define
\begin{align*}
    &f(b) = b-\tau^{-1}\Eb[\adapted(\wh\rho^k,\wh b_k),\Mcal]{\prns{b-\sum_{t=1}^H r_t}^+\mid\Ecal_k},
    &b_k^\star=\argmax_{b\in[0,1]}f(b),
    \\&\wh f(b) = b-\tau^{-1}\wh V_{1,k}^\downarrow(s_1,b),
    &\wh b_k=\argmax_{b\in[0,1]}\wh f(b).
\end{align*}

\emph{A priori,} the pessimism argument a priori only applies to policies in the discretized MDP.
But thanks to \cref{thm:discretized-cvar-dominates}, it also holds here,
\begin{align*}
    \wh V_{1,k}^\downarrow(s_1,b)\leq V^\star_1(s_1,b;\disc(\Mcal))\leq V^\star_1(s_1,b;\Mcal)\leq \Eb[\adapted(\wh\rho^k,\wh b_k),\Mcal]{ \prns{b-\sum_{t=1}^Hr_t}^+ }.
\end{align*}
Thus, we also have $\wh f(b)\geq f(b)$ for all $b$, and the same argument as before gives
\begin{align*}
    f(b^\star_k)-f(\wh b_k)
    &\leq\tau^{-1}\prns{ \Eb[\adapted(\wh\rho^k,\wh b_k),\Mcal]{\prns{\wh b_k-\sum_{t=1}^H r_t}^+\mid\Ecal_k} - \wh V_{1,k}^\downarrow(s_1,\wh b_k) }
    \\&\leq \tau^{-1}\prns{ V^{\wh\rho^k}_1(s_1,\wh b_k;\disc(\Mcal)) - \wh V_{1,k}^\downarrow(s_1,\wh b_k) } + \tau^{-1}H\eta. \tag{\cref{thm:adapted-discretized-mdp}}
\end{align*}
By \cref{ass:continuous-densities-discretized} (which applies to $\adapted(\wh\rho^k,\wh b_k)$), \cref{lem:strongly-convex-under-continuity} applies and we have
\begin{align*}
    \prns{b^\star_k-\wh b_k}^2 \leq 2\densitymin^{-1}\prns{ V^{\wh\rho^k}_1(s_1,\wh b_k;\disc(\Mcal)) - \wh V_{1,k}^\downarrow(s_1,\wh b_k) + H\eta }.
\end{align*}
Using $\Var(X)\leq 2(\Var(Y)+\Var(X-Y))$,
\begin{align*}
    &\sum_{k=1}^K\Varb[\wh\rho^k,\wh b_k,\disc(\Mcal)]{\prns{ \wh b_k-\sum_{t=1}^Hr_t }^+\mid\Ecal_k}
    \\&\leq 2\sum_{k=1}^K\Varb[\wh\rho^k,\wh b_k,\disc(\Mcal)]{\prns{ b_k^\star -\sum_{t=1}^Hr_t }^+\mid\Ecal_k} + 2\sum_{k=1}^K\Varb[\wh\rho^k,\wh b_k,\disc(\Mcal)]{\prns{ b_k^\star -\sum_{t=1}^Hr_t }^+-\prns{ \wh b_k -\sum_{t=1}^Hr_t }^+\mid\Ecal_k}
    \\&\leq 2\sum_{k=1}^K\Pr_{\wh\rho^k,\wh b_k,\disc(\Mcal)}\prns{ \sum_{t=1}^Hr_t\leq b_k^\star } + 2\sum_{k=1}^K\prns{\wh b_k-b_k^\star}^2 \tag{ReLU is $1$-Lipschitz}
    \\&\leq 2\sum_{k=1}^K\Pr_{\adapted(\wh\rho^k,\wh b_k),\Mcal}\prns{ \sum_{t=1}^Hr_t\leq b_k^\star } +
    4\densitymin^{-1}\sum_{k=1}^K\prns{V^{\wh\rho^k}_1(s_1,\wh b_k;\disc(\Mcal)) - \wh V_{1,k}^\downarrow(s_1,\wh b_k) + H\eta} \tag{\cref{lem:from-disc-MDP-to-MDP}}
    \\&\leq 2K\tau + 4\densitymin^{-1}HK\eta + 4\densitymin^{-1} \sum_{k=1}^K\prns{V^{\wh\rho^k}_1(s_1,\wh b_k;\disc(\Mcal)) - \wh V_{1,k}^\downarrow(s_1,\wh b_k)}.
\end{align*}
Previously in \cref{thm:bernstein-bonus-regret2}, we also had the $2K\tau + 4\densitymin^{-1} \sum_{k=1}^K\prns{V^{\wh\rho^k}_1(s_1,\wh b_k;\disc(\Mcal)) - \wh V_{1,k}^\downarrow(s_1,\wh b_k)}$ term.
This can be bounded as before.
We've only incurred an extra $4\densitymin^{-1}HK\eta$ term.
So, \cref{eq:goal-bonus-dominant-term} incurs an extra
\begin{align*}
    \sqrt{SAL( 2\cdot 4\densitymin^{-1}HK\eta )}.
\end{align*}
Finally, \cref{eq:goal-bonus-dominant-term} gets multiplied by $6e\tau^{-1}\sqrt{L}$, which gives a final extra regret of $18e\tau^{-1}\sqrt{\densitymin^{-1} SAHK\eta}L$.
\end{proof}

\subsection{Translating regret from discretized MDP to true MDP}
In this section, we prove that running our algorithms in the imagined discretized MDP also has low regret in the true MDP.
Let us first recall the setup.
When running \rlalg{}, we will provide the discretization parameter $\eta$.
The algorithm will discretize the rewards received from the environment when updating $b$ -- in this way, it is running in its own hallucinated discretized MDP, while the regret we care about is in the true MDP.
Since the algorithm is essentially running in the hallucinated discretized MDP, our regret bound applies in the discretized MDP, \ie, roll-outs in expectations are with respect to $\disc(\Mcal)$,
\begin{align*}
    \sum_{k=1}^K\cvar_\tau^\star(\disc(\Mcal))-\cvar_\tau(\wh\rho^k,\wh b_k;\disc(\Mcal))\leq C.
\end{align*}
When rolling out $\wh\rho^k$ from $\wh b_k$ in the hallucinated discretized MDP, we are essentially running $\adapted(\wh\rho^k)$ as described in \cref{thm:adapted-discretized-mdp}.
So the \emph{true} regret in the real MDP is,
\begin{align*}
    &\sum_{k=1}^K\cvar_\tau^\star(\Mcal)-\cvar_\tau(\adapted(\wh\rho^k,\wh b_k);\Mcal)
    \\&\leq\sum_{k=1}^K\cvar_\tau^\star(\disc(\Mcal))-\cvar_\tau(\wh\rho^k,\wh b_k;\disc(\Mcal))+\tau^{-1}H\eta \tag{\cref{thm:discretized-cvar-dominates,thm:adapted-discretized-mdp}}
    \\&\leq C+K\tau^{-1}H\eta.
\end{align*}
In other words, when discretizing our algorithm, we pay an extra regret of at most $K\tau^{-1}H\eta$, where $\eta$ is the discretization parameter. Setting $\eta=1/K$ renders this term lower order.

\subsection{Computational Complexity}\label{sec:appendix-discrete-computational-complexity}
In this section, we compute the running time complexity of \rlalg{} under discretization of $\eta$.
There are two places where discretization comes in,
\begin{enumerate}
    \item At each $h$, we only compute $\wh U^\downarrow_{h,k}(s,b,a)$ for all $s,a$ and $b$ in the grid. So assuming each step takes $T_{step}$, the total run time of DP is $\Ocal(SAH\eta^{-1} T_{step})$.
    \item When computing $\wh b_k$, we only need to search over $\grid$, since we know that the returns distribution is supported on the $\grid$. Thus, the optimal solution, which is the $\tau$-th quantile, lives on the grid.
    This computation costs $\Ocal(\eta^{-1})$, which is lower order.
\end{enumerate}
So the total runtime is $\Ocal(K\cdot SAH\eta^{-1}T_{step})$.
For running with the Hoeffding bonus, each step is dominated by computing the expectation $\wh P_{k}(s,a)^\top\Eb[r_h\sim R(s,a)]{ \wh V_{h+1,k}^\downarrow(\cdot,b-r_h) }$, as the bonus term is a constant.
In the discretized MDP, this expectation can be computed using only grid elements, so $T_{step}=\Ocal(S\eta^{-1})$.

When running with the Bernstein bonus, we also need to consider the complexity of computing the bonus term.
In the bonus term (\cref{eq:bernstein-bonus-def}), the expectation term $\Eb[s'\sim \wh P_{k}(s,a),r_h\sim R(s,a)]{\prns{ \wh V^\uparrow_{h+1,k}(s',b-r_h)-\wh V^\downarrow_{h+1,k}(s',b-r_h) }^2 }$ can be computed in $\Ocal(S\eta^{-1})$.
Notably, the variance term $\Varb[s'\sim \wh P_{k}(s,a)]{\Eb[r_h\sim R(s,a)]{ \wh V^\downarrow_{h+1,k}(s',b') }}$ can also be computed in $\Ocal(S\eta^{-1})$ by first computing the empirical mean (which takes $\Ocal(S\eta^{-1})$).
So for the Berstein bonus, we also have $T_{step}=\Ocal(S\eta^{-1})$.

So the total running time of \rlalg{} with discretized rewards is $\Ocal(S^2AHK\eta^{-2})$.
As remarked by \citep{auer2008near,azar2017minimax}, we can also reduce the computational cost by selectively recomputing the DP after sufficiently many observations have passed.

\newpage
\section{Minimax Lower Bounds for Quantile Estimation}
\begin{theorem}\label{thm:quantile-estimation-minimax}
Let $m(p)=\mathbb I[p>1/2]$ be the median of $\operatorname{Ber}(p)$.
For any $n$,
$$
\inf_{f:\{0,1\}^n\to[0,1]}\sup_{p\in[0,1]}\mathbb E_{Y_1,\dots,Y_n\sim_{\text{iid}}\operatorname{Ber}(p),\,\hat m_n\sim \operatorname{Ber}(f(Y_1,\dots,Y_n))}[(\hat m_n-m(p))^2]\geq\frac12.
$$
That is, the minimax mean-squared error of any (potentially randomized) estimator for the median of a Bernoulli based on $n$ observations thereon is bounded away from 0 for all $n$.
\end{theorem}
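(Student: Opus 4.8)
The plan is to apply Le Cam's two-point method, exploiting the fact that the median map $m(p)=\mathbb I[p>1/2]$ is discontinuous at $p=1/2$. The crucial preliminary observation is that both the output $\hat m_n$ and the target $m(p)$ take values in $\{0,1\}$, so the squared loss collapses to the zero-one loss: $(\hat m_n-m(p))^2=\mathbb I[\hat m_n\neq m(p)]$, and the risk is exactly the probability of misclassification. First I would fix an arbitrary randomized estimator, identified with its conditional success probability $f:\{0,1\}^n\to[0,1]$, so that $\Pr(\hat m_n=1\mid Y_{1:n})=f(Y_{1:n})$, and write $\mathbb E_p$ for expectation over $Y_{1:n}\sim\operatorname{Ber}(p)^{\otimes n}$ together with the estimator's internal randomness.

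Next I would choose the two hypotheses $p_0=\tfrac12-\epsilon$ and $p_1=\tfrac12+\epsilon$ for $\epsilon\in(0,\tfrac12)$, so that $m(p_0)=0$ while $m(p_1)=1$: these are maximally far apart in the loss even though their sampling distributions are nearly identical. Writing $P_j=\operatorname{Ber}(p_j)^{\otimes n}$, the error probability under $p_0$ (truth $0$) is $\mathbb E_{P_0}[f]$, and under $p_1$ (truth $1$) is $1-\mathbb E_{P_1}[f]$. Lower bounding the supremum by the average of the two risks and then using that $f\in[0,1]$ implies $\mathbb E_{P_0}[f]-\mathbb E_{P_1}[f]\geq-\mathrm{TV}(P_0,P_1)$, together with subadditivity of total variation over product measures $\mathrm{TV}(P_0,P_1)\leq n\,\mathrm{TV}(\operatorname{Ber}(p_0),\operatorname{Ber}(p_1))=2n\epsilon$, gives
\begin{align*}
\sup_{p\in[0,1]}\mathbb E_p\big[(\hat m_n-m(p))^2\big]
&\geq \tfrac12\big(\mathbb E_{P_0}[f]+1-\mathbb E_{P_1}[f]\big)\\
&\geq \tfrac12\big(1-\mathrm{TV}(P_0,P_1)\big)\;\geq\;\tfrac12\big(1-2n\epsilon\big).
\end{align*}
Since the left-hand side does not depend on $\epsilon$ and this holds for every $\epsilon\in(0,\tfrac12)$, letting $\epsilon\to0$ yields the worst-case bound $\tfrac12$ for this fixed $f$; taking the infimum over all $f$ completes the argument. (Pinsker plus tensorization of KL, $\mathrm{KL}(P_0\Vert P_1)=n\,\mathrm{KL}(\operatorname{Ber}(p_0)\Vert\operatorname{Ber}(p_1))\to0$, is an equally clean substitute for the TV step.)

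There is essentially no technical obstacle here; the only conceptual point, which is also the heart of the statement, is that the discontinuity of $m$ forces two statistically indistinguishable instances (as $\epsilon\to0$, with $n$ fixed) to nonetheless have medians $0$ and $1$, so that no amount of data can reliably separate them. I would close with a sanity remark that the constant $\tfrac12$ is tight and not merely a lower bound: the data-independent estimator $\hat m_n\sim\operatorname{Ber}(\tfrac12)$ has error probability exactly $\tfrac12$ at every $p$, so the minimax risk equals $\tfrac12$.
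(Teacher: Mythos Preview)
Your argument is correct and follows essentially the same Le Cam two-point strategy as the paper, with the same pair of hypotheses $p=\tfrac12\pm\epsilon$ and the same limit $\epsilon\to0$. The only cosmetic difference is that you bound $\mathrm{TV}(P_0,P_1)$ directly via subadditivity over the product, whereas the paper routes through Pinsker and KL tensorization (which you already flag as an equivalent alternative); your handling of the randomized estimator via $\mathbb E[f]$ also sidesteps the paper's explicit reduction to deterministic rules, but this is a matter of presentation, not substance.
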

\begin{proof}
Let $g(f,p)$ denote the value of the game (the objective of the above inf-sup). Let $\mathbb P_{p}$ denote the measure of $(Y_1,\dots,Y_n)\sim\operatorname{Ber}(p)^n$.
Fix any $\epsilon\in(0,\frac{e-1}{e+1}]$.
Then
\begin{align*}
\inf_{f:\{0,1\}^n\to[0,1]}\sup_{p\in[0,1]} g(f,p)
&\geq
\inf_{f:\{0,1\}^n\to[0,1]}\left(\frac12g(f,(1+\epsilon)/2)+\frac12g(f,(1-\epsilon)/2)\right)
\\&
\geq \inf_{f:\{0,1\}^n\to\{0,1\}}\left(\frac12g(f,(1+\epsilon)/2)+\frac12g(f,(1-\epsilon)/2)\right)
\\&
=\inf_{f:\{0,1\}^n\to\{0,1\}}
\left(
\frac12\mathbb P_{(1+\epsilon)/2}(f(Y_1,\dots,Y_n)\neq 1)
+
\frac12\mathbb P_{(1-\epsilon)/2}(f(Y_1,\dots,Y_n)\neq 0)
\right)
\\&
=\inf_{f:\{0,1\}^n\to\{0,1\}}
\frac12-\left(
\frac12\mathbb P_{(1+\epsilon)/2}(f(Y_1,\dots,Y_n)=1)
-
\frac12\mathbb P_{(1-\epsilon)/2}(f(Y_1,\dots,Y_n)=1)
\right)
\\&
\geq
\frac12-\frac12\sqrt{\frac12\operatorname{KL}(\mathbb P_{(1+\epsilon)/2},\mathbb P_{(1-\epsilon)/2})}
\\&
=
\frac12-\frac12\sqrt{\frac12 n \epsilon \log((1+\epsilon)/(1-\epsilon))}
\\&
\geq
\frac12-\sqrt{\frac{n}{8}}\sqrt{\frac{e+1}{e-1}}\epsilon.
\end{align*}
The first line is because worst-case risk upper bounds any Bayesian risk. The second because Bayesian risk is optimized by non-randomized estimators. The third by writing the expectation of a 0-1 variable as a probability. The fourth by total probability. The fifth by Pinsker's inequality. The sixth by evaluating the divergence. And the last by convexity of $\log((1+\epsilon)/(1-\epsilon))$.

Since $\epsilon\in(0,\frac{e-1}{e+1}]$ was arbitrary (for fixed $n$), the conclusion is reached.
\end{proof}
\end{document}